\newtheorem{theorem}{Theorem}
\newtheorem{lemma}[theorem]{Lemma}
\theoremstyle{definition}
\newcommand{\R}{\mathbb{R}}
\newcommand{\E}{\mathbb{E}}
\newcommand{\mc}{\mathcal}
\DeclareMathOperator{\lip}{Lip}
\DeclareMathOperator{\diam}{diam}
\DeclareMathOperator{\meas}{Meas}
\DeclareMathOperator{\ce}{CE}
\DeclareMathOperator{\be}{BE}
\title{Zero-Shot Context Generalization in Reinforcement Learning from Few Training Contexts}
\author{%
  \footnotemark[1]\hspace{0.5em}James Chapman \\
  Department of Mathematics\\
  University of California, Los Angeles\\
  Los Angeles, CA 90095 \\
  \texttt{chapman20j@math.ucla.edu} \\
  \And
  \footnotemark[1]\hspace{0.5em}Kedar Karhadkar\\ 
  Department of Mathematics\\
  University of California, Los Angeles\\
  Los Angeles, CA 90095 \\
  \texttt{kedar@math.ucla.edu}\\
  \And
  Guido Mont\'ufar\\
  Departments of Mathematics and of Statistics \& Data Science\\
  University of California, Los Angeles\\
  Los Angeles, CA 90095 \\
  \texttt{montufar@math.ucla.edu}\\
}
\begin{document}
\def\thefootnote{*}\footnotetext{Equal Contribution.}
\def\thefootnote{\arabic{footnote}}

\maketitle

%%% ABSTRACT %%%
\begin{abstract}
    Deep reinforcement learning (DRL) has achieved remarkable success across multiple domains, including competitive games, natural language processing, and robotics. Despite these advancements, policies trained via DRL often struggle to generalize to evaluation environments with different parameters. This challenge is typically addressed by training with multiple contexts and/or by leveraging additional structure in the problem. However, obtaining sufficient training data across diverse contexts can be impractical in real-world applications. In this work, we consider contextual Markov decision processes (CMDPs) with transition and reward functions that exhibit regularity in context parameters. We introduce the context-enhanced Bellman equation (CEBE) to improve generalization when training on a single context. We prove both analytically and empirically that the CEBE yields a first-order approximation to the Q-function trained across multiple contexts. We then derive context sample enhancement (CSE) as an efficient data augmentation method for approximating the CEBE in deterministic control environments. We numerically validate the performance of CSE in simulation environments, showcasing its potential to improve generalization in DRL.\footnote{Code: \url{https://github.com/chapman20j/ZeroShotGeneralization-CMDPs}.} 
\end{abstract}

%%% INTRODUCTION %%%

\section{Introduction}
Deep reinforcement learning has been deployed successfully in many problems requiring decision making and multi-step optimization. 
Despite these successes, reinforcement learning still struggles to perform well at test time in real-world scenarios. 
This is a well documented phenomenon with multiple causes (e.g., sim-to-real distribution shift, limited contextual data) \citep{zhao2020sim, ghosh2021generalization, kirk2023survey}. 
This places out-of-distribution generalization as a fundamental challenge of reinforcement learning.

One approach to address generalization in reinforcement learning is to use continual learning in which some training occurs at deployment in the testing environment \citep{khetarpal2022towards, Wang2023}. 
However, this is not always feasible or desirable, as, for instance, safety considerations may prevent early agent deployment, and online adaptation may be prohibitively expensive \citep{10675394, groshev2023edge}. 
Instead, we need to ensure that the agent has good \textit{zero-shot} generalization from only the training data. 
Another approach is to create many training environments to prevent over-optimization on specific context parameters (e.g., friction coefficients) as in Meta-RL and domain randomization \citep{beck2023survey, chen2022understanding}. 
This is commonly used in robotics where sim-to-real pipelines are necessary, and it has been found to produce policies that are more robust to parameter uncertainty (e.g., less sensitive to distribution shift) \citep{tang2024deep, zhao2020sim}. 
However, the environment construction may be prohibitively expensive.
For example, designing and building complex robots requires substantial engineering labor and expensive hardware.

Without sufficient structure and prior information, zero-shot generalization from only a few contexts is an impossible problem. 
However, in many domains we have knowledge of the underlying dynamics of the environment. 
For example, when solving control problems we may have equations for the dynamics and only uncertainty in the parameters of the equations \citep{kay2013fundamentals}. 
In such cases it may be possible to include knowledge about the general form of the dynamics into the architecture and/or training algorithm without ever sampling from these unseen contexts. 
Some works build architectures which try to capture such biases in the underlying dynamics \citep{wang2018nervenet,kurin2021my,huang2020one}. 
Although this approach tends to improve training performance, the generalization benefits have remained limited. 
The authors of these works note that the variance on the evaluation set is still relatively large. 
Attaining consistent improvements in out-of-distribution contexts typically requires developing complex architectures which are specifically tailored to the problem at hand \citep{hong2021structure, chen2023subequivariant, xiong2023universal, li2024mat}. 
This is a valuable approach, but is challenging to scale to larger sets of environments as it requires domain specific information.

\subsection{Our Contributions}
In this paper we develop an approach to incorporate structure across contexts into the training algorithm in order to improve generalization performance on previously unseen contexts. 
We introduce the \textit{context-enhanced Bellman equation} (CEBE), which is the Bellman equation of a contextual Markov decision process (CMDP) with transition and reward functions linearized about a particular training context. 
This allows us to estimate data from the CMDP in a neighborhood of the training context. 
We also derive \textit{context sample enhancement} (CSE) as an efficient method to generate data from nearby contexts in the linearized CMDP that we can then use to optimize the CEBE in deterministic control environments. 
This allows us to enhance the data sampled from a single training context to effectively train on nearby contexts. 
Our contributions are as follows: 
\begin{enumerate}
    \item We provide a theoretical analysis of context perturbations in contextual Markov decision processes (CMDPs), demonstrating the viability of our proposed CEBE method for improving out-of-distribution generalization.
    \item We derive CSE as an efficient data augmentation method for deep reinforcement learning (DRL), enabling more robust policy learning when learning from samples generated in the training context of the CMDP. 
    \item We perform experiments with CEBE and CSE on a variety of RL environments that exhibit regularity in the transition and reward functions with respect to the context parameters, which demonstrate the utility of our methods to improve out-of-distribution generalization in context-limited training scenarios. 
\end{enumerate}

This paper is organized as follows. Section~\ref{sec:background} provides an overview of relevant background. 
Section~\ref{sec:theory} introduces CEBE and our theoretical analysis of CEBE. 
Section~\ref{sec:cse} introduces CSE and provides an algorithm for data augmentation with CSE. 
Section~\ref{sec:experiments} shows our experimental evaluation of CEBE and CSE. 
Section~\ref{sec:discussion} offers a discussion and concluding remarks. 

\subsection{Related Works}
Here we briefly discuss some of the most closely related works and defer a more extensive overview of related works to Appendix~\ref{apndx:related_works}. 
A related strategy for introducing knowledge of the dynamics into RL training has been considered in previous work focusing on the action space. 
\cite{qiao2021efficient} considered gradients with respect to actions and introduced \textit{sample enhancement} and \textit{policy enhancement} to augment samples from the replay buffer. 
We instead focus on the context variable towards addressing the problem of context generalization, for which we develop theory and experiments. 
In the stochastic setting, our proposed CEBE can be estimated using importance sampling. 
This would then be related to  \cite{tirinzoni2018importance}, who considered importance sampling for improving out-of-distribution zero-shot generalization in RL. 
That work considers exact transitions and rewards and must therefore consider information from nearby contexts.

\cite{modi2017markov} assume continuity in the context space and propose a covering algorithm based on zeroth order approximation of the policy in context space to derive PAC bounds. 
This requires access to many contexts, but we note that our first-order approximation provides better coverage of the context space. 
\cite{malik2021generalizable} consider CMDPs which have transitions and rewards that are uniformly close in a neighborhood to provide lower bounds on the number of context queries required for generalization. 
Our setting differs because we only sample one context, we consider first-order information, and we obtain approximation results under more relaxed assumptions on the state and action spaces. 
We also provide a practical algorithm for achieving generalization comparable to domain randomization.
\cite{levy2022learning} derive polynomial sample complexity bounds when one has access to an ERM oracle and is able to sample from nearby contexts. 
This differs from our work as we seek to approximate the dynamics and reward functions of nearby contexts without sampling.

\section{Contextual Markov Decision Processes} 
\label{sec:background}
Sequential decision making is often modeled as a Markov decision process (MDP), and deep reinforcement learning is employed to solve complex MDPs. 
However, the MDP framework does not directly account for changes in the underlying context, and policies trained via DRL tend to overfit to context parameters. 
Contextual MDPs (CMDPs) extend MDPs by including context parameters explicitly in the MDP dynamics and rewards, which allows one to more directly account for their impact. 
The context parameters can include a variety of aspects, particularly parameters that influence the transition dynamics and parameters that specify a task by modifying the reward function.

We recall the relevant definitions and notation, consistent with \cite{beukman2023dynamics}, that we use to formulate our results below. 
A CMDP is defined as
\begin{equation}
    \label{eqn:context_mdp}
    \mathcal{M} = (\mathcal{C}, \mathcal{S}, \mathcal{A}, \mathcal{M}', \gamma),
\end{equation}
where $\mathcal{C}$ is the context space, $\mathcal{S}$ is the state space, $\mathcal{A}$ is the action space, and $\gamma\in [0, 1]$ is the discount factor. 
Here $\mathcal{M}'$ is a function that takes any context $c\in \mathcal{C}$ to a corresponding MDP 
\begin{equation}
    \label{eqn:specific_mdp}
    \mathcal{M}'(c) = (\mathcal{S}, \mathcal{A}, \mathcal{T}^c, R^c, \gamma),
\end{equation}
where $\mathcal{T}^c$ and $R^c$ are the transition and reward functions of the context-$c$ MDP. 
For a specific $c\in\mc{C}$, $\mc{T}^c$ is a map $\mc{S} \times \mc{A} \to \Delta(\mc{S})$, where $\Delta(\mc{S})$ is the space of probability distributions on $\mc{S}$. 
In general the state and action spaces could depend on $c$, but we restrict to a common state and action space in this paper. 
We consider policies of the form $\pi: \mathcal{C}\times \mathcal{S}\to\Delta(\mathcal{A})$. 
Given an initial state distribution, we define the expected discounted return in context $c$ when using policy $\pi(\bullet | c,s)$ as 
\begin{equation}
    \label{eqn:policy_return}
    J(\pi, c) = \mathbb{E}_{s_0, \pi, \mathcal{T}^c} \sum_{t\geq 0} \gamma^t R^c_t .
\end{equation}
For a set of contexts $U\subseteq \mathcal{C}$, we say that a policy $\pi$ is $(U, \epsilon)$\textit{-optimal} if 
\begin{equation}
    \label{eqn:u_eps_optimality}
    J(\pi, c) \geq J(\rho, c) - \epsilon,
\end{equation}
for all $c \in U$ and all policies $\rho \colon \mathcal{S}\to\Delta(\mathcal{A})$. 
Given a test context distribution $\mc{D}^{\text{test}}$ with support $S\subseteq \mc{C}$, the objective is to find a $(S, \epsilon)$-optimal policy. 
In this paper, we consider the out-of-distribution setting where the train and test context distributions, $\mathcal{D}^{\text{train}}$ and $\mathcal{D}^{\text{test}}$, have different supports.

\paragraph{Structural Assumptions}
We will consider the setting where the agent must generalize from a single context and in particular the policy must generalize to contexts that are unseen during training. 
For this to be possible we must add structure to the CMDP. 
We place the following assumptions: 
\begin{enumerate}
    \item The state space $\mc{S}$ and action space $\mc{A}$ are metric spaces. The context space $\mc{C}$ is an open convex subset of a Banach space.
    \item The reward function $R^c: \mc{S}\times \mc{A}\times \mc{S} \to \R$ is deterministic.
    \item The partial derivatives $\partial_c R^c$ and $\partial_c \mathcal{T}^c$ exist and are Lipschitz.
    \item The context $c\in \mathcal{C}$ is fully observable and fixed over an episode. 
    \item When making a transition $(s, a) \mapsto s'$, the agent observes $\partial_c R^c(s, a, s')$ and $\partial_c \mathcal{T}^c(s'|s, a)$.
\end{enumerate}

Settings of this form commonly appear in physical environments where the underlying dynamics are determined by a system of differential equations with physical parameters (e.g., friction coefficient, elastic modulus, mass) that may be modeled in terms of a context $c$.

\section{Perturbative Theory of CMDPs}
\label{sec:theory}
\subsection{Context-Enhanced Bellman Equation (CEBE)}
\begin{equation}
    Q_{\be}(s, a, c) = \mathbb{E}_{s'\sim {\mathcal{T}}^{c}(s, a)}\left(R^{c} + \gamma \mathbb{E}_{a'\sim \pi(s', c)}Q_{\be}(s', a', c)\right)
\end{equation}
is a fundamental equation in RL which is used to train the $Q$-function. However, the transitions and rewards may not be known in all contexts. 
We fix a base training context $c_0$ and suppose that we have access to approximate transition and reward functions $\mathcal{T}_{\ce}$ and $R_{\ce}$ satisfying:
\begin{equation*}
    \mathcal{T}_{\ce}^{c_0} = \mc{T}^{c_0},\quad R_{\ce}^{c_0} = R^{c_0},\quad \mathcal{T}_{\ce}^{c} \approx \mc{T}^{c},\quad R_{\ce}^{c} \approx R^{c}
\end{equation*} 
for all $c$ in a neighborhood of $c_0$. 
We refer to such approximations as the \textit{context-enhanced transitions} and \textit{context-enhanced rewards}, respectively. 
We define the context-enhanced Bellman equation (CEBE) as the Bellman equation of the approximate CMDP: 
\begin{equation}
    \label{eqn:ce_bellman}
    Q_{\ce}(s, a, c) = \mathbb{E}_{s'\sim \mathcal{T}_{\ce}^{c}(s, a)}\left(R_{\ce}^{c} + \gamma \mathbb{E}_{a'\sim \pi(s', c)}Q_{\ce}(s', a', c)\right). 
\end{equation}
Note that at $c = c_0$ this coincides with the original Bellman equation. 
While our theoretical results further below apply in more general settings, in this paper we will let $R_{\ce}$ be the Taylor approximation of $R$ about $c_0$ and primarily consider the following two cases with corresponding definitions of $\mathcal{T}_{\ce}$. 

\paragraph{Deterministic Transitions} 
For the setting of deterministic transitions, there exists a function $f^c(s, a)$ such that $ \mathcal{T}^c(s, a) = \delta_{f^c(s, a)}$ is a Dirac delta distribution for each $s, a, c$. 
In this case, we define the context-enhanced transition and reward functions as 
\begin{equation}
    \label{eqn:cse_t_deterministic}
    \mathcal{T}_{\ce}^{c}(s, a) = \delta_{f^{c_0}(s, a) + \partial_c f^{c_0}(s, a)(c - c_0)} 
\end{equation}
\begin{equation}
    \label{eqn:cse_r_deterministic}
    R_{\ce}^{c} = R^{c_0} + \partial_c R^{c_0} \cdot (c-c_0) + \partial_{s'}R^{c_0} \partial_c \mc{T}^{c_0} \cdot (c - c_0) . 
\end{equation}
This is particularly useful in the online setting as we can simply estimate the CEBE from samples. 
Here we additionally assume that the reward function is differentiable in the next state to compute its linear approximation (i.e., $\partial_{s'} R^c$ exists and is Lipschitz).

\paragraph{Transitions with Differentiable Measure} 
If $c\mapsto \mc{T}^c(s, a)$ is a differentiable map, then we can define the context-enhanced transition and reward functions by 
\begin{equation}
    \label{eqn:cse_t_tabular}
    \mathcal{T}^{c}_{\ce}(s, a) = P(\mc{T}^{c_0}(s, a) + \partial_c \mc{T}^{c_0}(s, a) \cdot (c - c_0))  
\end{equation}
\begin{equation}
    \label{eqn:cse_r_tabular}
    R^{c}_{\ce} = R^{c_0} + \partial_c R^{c_0} \cdot (c-c_0) 
\end{equation}
where $P(\mu) = \frac{\mu^+}{\|\mu^+\|}$ denotes projection onto the probability simplex and $\mu^+$ denotes the positive part of the signed measure $\mu$. 
This projection is required since the linear approximation may produce a measure that is negative in places. 
If $\Delta c=c-c_0$ is sufficiently small, the projection is well-defined and sends the linear approximation to a probability measure.

\subsection{CEBE Approximation of the Bellman Equation} 
\label{sec:CEBE}

In this subsection, we prove that the context-enhanced transitions and rewards we introduced above lead to a $Q$-function $Q_{\ce}$ which is close to $Q_{\be}$. 
For our approximation results, we introduce the following notation on spaces of functions and measures. 
If $X$ and $Y$ are metric spaces and $f: X \to Y$ is Lipschitz, we denote its Lipschitz constant by $L_f$. 
If $X$ is a metric space, we let $\lip(X)$ denote the Banach space of Lipschitz functions $X \to \R$, equipped with the metric
\[\|f\|_{\lip(X) } = \max\left(\sup_{x \in X}|f(x)|, \sup_{\substack{x, y \in X \\x \neq y} }\frac{|f(x) - f(y)| }{d(x, y)}\right). \]
For a metric space $X$, let $\text{Meas}(X)$ denote the space of finite signed Borel measures on $X$ equipped with the total variation norm. 
If $X$ is a metric space and $p \in [1, \infty]$, let $\mc{W}_p(X)$ denote the space of probability measures on $X$, equipped with the Wasserstein distance
\[W_p(\mu, \nu) = \inf_{\gamma \in \Pi(\mu, \nu) } \left(\int d(x, y)^p d\gamma(x, y)\right)^{1/p}. \]
Here $\Pi(\mu, \nu)$ denotes the space of probability distributions on $X \times X$ which have marginal distributions $\mu$ and $\nu$. 
For $p = \infty$, we define $W_{\infty}(\mu, \nu) = \lim_{p \to \infty}W_p(\mu, \nu)$. 

Now we present our main result, which establishes that $Q$-functions are stable under small perturbations to the transition dynamics and rewards. 
\begin{theorem}[$(\mc{T}, R)$-stability of the $Q$-function]\label{theorem:main}
    Let $R^{(1)}, R^{(2)} \in \lip(\mc{S} \times \mc{A})$ be reward functions with $\|R^{(1)} - R^{(2)} \|_{\infty } \leq \delta_R$. Let $\mc{T}^{(1)}, \mc{T}^{(2)}: \mc{S} \times \mc{A} \to \mc{W}_{p}(\mc{S}) $ be transition functions with
    \[\sup_{(s, a) \in \mc{S} \times \mc{A}} W_{p}(\mc{T}^{(1)}(s, a), \mc{T}^{(2)}(s, a)) \leq \delta_T,\]
    and let $\pi: \mc{C} \times \mc{S} \to \mc{W}_p(\mc{A})$ be a Lipschitz policy.
    Let $\gamma \in (0, 1)$ be a discount factor with $\gamma < \frac{1}{\max(L_{\mc{T}^{(1)} }, L_{\mc{T}^{(2)} }) (1 + L_{\pi}) }$. 
    Let $Q^{(1)}$ and $Q^{(2)}$ denote solutions of the Bellman equation for $(\mc{T}^{(1)}, R^{(1)}, \gamma)$ and $(\mc{T}^{(2)}, R^{(2)}, \gamma)$ respectively. Then
    \[\|Q^{(1)} - Q^{(2)}\|_{\infty} \leq \frac{1 }{1 - \gamma }\left(\delta_R + \frac{\gamma (1 + L_{\pi})\delta_T \|R^{(2)}\|_{\lip(\mc{S} \times \mc{A}) } }{1 - \gamma \max(1, L_{\mc{T}^{(2)}}(1 + L_{\pi})) } \right). \]
\end{theorem}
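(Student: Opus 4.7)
The plan is to use the standard contraction-perturbation argument for Bellman operators, augmented with a separate $\lip$-norm bound on the reference $Q$-function. Define Bellman operators
\[\mathcal{B}^{(i)}Q(s,a,c) = R^{(i)}(s,a) + \gamma\,\mathbb{E}_{s' \sim \mc{T}^{(i)}(s,a)}\mathbb{E}_{a' \sim \pi(s',c)} Q(s',a',c), \quad i \in \{1,2\}.\]
Each $\mathcal{B}^{(i)}$ is a $\gamma$-contraction on $L^{\infty}$ with unique fixed point $Q^{(i)}$. The triangle inequality yields
\[\|Q^{(1)} - Q^{(2)}\|_\infty \leq \gamma\|Q^{(1)} - Q^{(2)}\|_\infty + \|\mathcal{B}^{(1)}Q^{(2)} - \mathcal{B}^{(2)}Q^{(2)}\|_\infty,\]
so the task reduces to bounding $\|\mathcal{B}^{(1)}Q^{(2)} - \mathcal{B}^{(2)}Q^{(2)}\|_\infty$ and dividing by $1-\gamma$.

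Pointwise at $(s,a,c)$, this difference splits into a reward piece bounded by $\delta_R$ and a transition piece $\gamma\,|\mathbb{E}_{\mc{T}^{(1)}(s,a)} g_c - \mathbb{E}_{\mc{T}^{(2)}(s,a)} g_c|$, where $g_c(s') = \mathbb{E}_{a' \sim \pi(s',c)} Q^{(2)}(s',a',c)$. Kantorovich--Rubinstein duality combined with $W_1 \leq W_p$ bounds the transition piece by $\gamma L_{g_c}\delta_T$. A short triangle argument in the definition of $g_c$---splitting variation in $s'$ into a part acting through $Q^{(2)}$ directly and a part acting through the policy, and using that $\pi(\cdot,c)$ is $L_\pi$-Lipschitz into $\mc{W}_p(\mc{A})$---then gives $L_{g_c} \leq \|Q^{(2)}\|_{\lip(\mc{S}\times\mc{A})}(1 + L_\pi)$.

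The core step is controlling $\|Q^{(2)}\|_{\lip(\mc{S}\times\mc{A})}$. I would show that $\mathcal{B}^{(2)}$ preserves the closed $\|\cdot\|_{\lip}$-ball of radius
\[M = \frac{\|R^{(2)}\|_{\lip}}{1 - \gamma\max(1, L_{\mc{T}^{(2)}}(1+L_\pi))}.\]
If $Q$ has sup norm and Lipschitz constant each at most $M$ on the product metric, then $\|\mathcal{B}^{(2)}Q\|_\infty \leq \|R^{(2)}\|_\infty + \gamma M$, while the composition $(s,a) \mapsto \mathbb{E}_{\mc{T}^{(2)}(s,a)} g_c$ has Lipschitz constant at most $L_{\mc{T}^{(2)}}(1+L_\pi)M$, giving $L_{\mathcal{B}^{(2)} Q} \leq L_{R^{(2)}} + \gamma L_{\mc{T}^{(2)}}(1+L_\pi)M$. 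Both estimates reduce to $M$ by the choice of $M$, and uniqueness of the fixed point then forces $\|Q^{(2)}\|_{\lip} \leq M$. Inserting this into the bound on $L_{g_c}$ and combining with the prefactor $1/(1-\gamma)$ produces the stated inequality.

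The main obstacle is establishing the joint $\lip$-ball invariance under $\mathcal{B}^{(2)}$: one has to verify that the sup-norm and Lipschitz-constant parts contract \emph{simultaneously}, and the awkward denominator $1 - \gamma\max(1, L_{\mc{T}^{(2)}}(1+L_\pi))$ in the theorem is exactly what falls out of taking the worse of the two contractions. This is where the hypothesis $\gamma < (\max_i L_{\mc{T}^{(i)}}(1+L_\pi))^{-1}$ is needed. A minor technical point is the passage from the $W_p$ hypothesis to the $W_1$-based Kantorovich--Rubinstein duality, which is handled by the Jensen inequality $W_1 \leq W_p$.
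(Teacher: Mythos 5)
Your proposal is correct and follows essentially the same route as the paper's proof: the same three-term decomposition (reward difference, contraction term, operator-perturbation term applied to $Q^{(2)}$), the same Wasserstein/Lipschitz stability lemma for integrals against the policy and transition kernels, and the same bound $\|Q^{(2)}\|_{\lip} \leq \|R^{(2)}\|_{\lip}/(1-\gamma\max(1,L_{\mc{T}^{(2)}}(1+L_\pi)))$ on the regularity of the reference $Q$-function. The only cosmetic differences are that you obtain the integral stability bound via Kantorovich--Rubinstein duality plus $W_1\le W_p$ where the paper argues directly with couplings and H\"older, and you derive the $\lip$-norm bound on $Q^{(2)}$ by an invariant-ball fixed-point argument where the paper uses the Neumann series $(I-\gamma A_2)^{-1}=\sum_n \gamma^n A_2^n$; both yield identical constants.
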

\begin{proof}[Proof sketch]
Using that $Q^{(1)}$ and $Q^{(2)}$ are solutions to their respective Bellman equations, we can decompose
\[Q^{(1)} - Q^{(2)} = (R^{(1)} - R^{(2)}) + (\gamma A^{(1)}Q^{(1)} - \gamma A^{(1)}Q^{(2)} ) + (\gamma A^{(1) } Q^{(2)} - \gamma A^{(2)} Q^{(2)} ), \]
where $A^{(1)}$ and $A^{(2)}$ are transition operators for the dynamics in the two MDPs under policy $\pi$.
The first term is small if $R^{(1)}$ and $R^{(2)}$ are sufficiently close. 
The second term can be bounded in terms of the norms of $A^{(1)}$ and $Q^{(1)} - Q^{(2)}$. 
The third term can be bounded by showing that $A^{(1)}$ and $A^{(2)}$ are close, and showing that the learned function $Q^{(2)}$ is sufficiently smooth. 
Combining these bounds and using a triangle inequality, we obtain the desired bound on $\|Q^{(1)} - Q^{(2)}\|_{\infty}$. 
\end{proof}
The full proof for Theorem~\ref{theorem:main} is contained in Appendix~\ref{apndx:proof-theorem-main}. 
Next we apply the theorem in the setting where $Q^{(1)}$ and $Q^{(2)}$ represent different contexts of a CMDP. 
We first consider the case where the transition dynamics are deterministic and we use context enhancement as in \eqref{eqn:cse_t_deterministic} and \eqref{eqn:cse_r_deterministic}. 
For this case, we assume that $\mc{S}$ and $\mc{A}$ are Banach spaces, so we can differentiate with respect to states and actions.

\begin{theorem}[Deterministic CEBE is first-order accurate]
    \label{thm:Q_estimate_deterministic}
    Consider a Lipschitz policy $\pi: \mc{S}\times \mc{C} \to \mc{W}_{p}(\mc{A})$ and deterministic $\mc{T}$. Suppose that the discount factor $\gamma$ satisfies
    \[\gamma < \frac{1}{(\|D \mc{T}\|_{\infty} + \|D^2 \mc{T}\|_{\infty}\|c - c_0\|)(1 + L_{\pi})}.\]
    Let the context-enhanced transitions and rewards be defined by \eqref{eqn:cse_t_deterministic} and \eqref{eqn:cse_r_deterministic}, respectively. Then
    \begin{align*}
        \|Q_{\ce}^{c} - Q_{\be}^{c} \|_{\infty} &\leq \frac{\|c - c_0\|^2}{1 -\gamma}\left(\|D^2 R\|_{\infty} + \frac{\gamma(1 + L_{\pi})\|D^2 \mc{T}\|_{\infty}(\|R\|_{\infty } + \|D R\|_{\infty} )  }{1 -\gamma \max(1, \|D T\|_{\infty} (1 + L_{\pi}))  } \right).
    \end{align*}
\end{theorem}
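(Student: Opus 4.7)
The strategy is to deduce the bound from Theorem~\ref{theorem:main} applied to the pair $(\mc{T}_{\ce}^c, R_{\ce}^c)$ versus $(\mc{T}^c, R^c)$, and to show that the corresponding perturbation parameters $\delta_R$ and $\delta_T$ are both quadratic in $\|c - c_0\|$.

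First I would reduce the $(s,a,s')$-dependent reward to a reward on $\mc{S}\times\mc{A}$, as required by the hypothesis of Theorem~\ref{theorem:main}. In the deterministic setting, integrating out the next state under each transition yields the effective rewards $\bar R^c(s,a) = R^c(s, a, f^c(s,a))$ and $\bar R_{\ce}^c(s,a)$ analogously evaluated at the linearized next state $f^{c_0}(s,a) + \partial_c f^{c_0}(s,a)(c - c_0)$. By the chain rule, definition \eqref{eqn:cse_r_deterministic} is exactly the first-order Taylor polynomial in $c$ of $c \mapsto R^c(s, a, f^c(s,a))$ at $c_0$. Since $\mc{C}$ is convex, Taylor's theorem along the segment from $c_0$ to $c$ yields $\delta_R \leq \|D^2 R\|_\infty \|c - c_0\|^2$, once $\|D^2 R\|_\infty$ is interpreted as a uniform bound on the relevant second-order partial derivatives assembled from the chain rule.

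Next I would bound $\delta_T$. Because both transitions are Dirac measures, $W_p(\mc{T}^c(s,a), \mc{T}_{\ce}^c(s,a)) = d(f^c(s,a),\, f^{c_0}(s,a) + \partial_c f^{c_0}(s,a)(c-c_0))$, which is the first-order Taylor remainder of $c \mapsto f^c(s,a)$ at $c_0$. Since $\partial_c f$ is Lipschitz with constant controlled by $\|D^2 \mc{T}\|_\infty$ (using assumption that $\partial_c \mc{T}^c$ is Lipschitz), this gives $\delta_T \leq \|D^2 \mc{T}\|_\infty \|c - c_0\|^2$. Then I would verify the discount factor condition of Theorem~\ref{theorem:main}: both $L_{\mc{T}^c}$ and $L_{\mc{T}_{\ce}^c}$ are bounded by $\|D\mc{T}\|_\infty + \|D^2\mc{T}\|_\infty\|c - c_0\|$, which is exactly what the hypothesized bound on $\gamma$ controls. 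Combined with the chain-rule bound $\|\bar R^c\|_{\lip(\mc{S}\times\mc{A})} \leq \|R\|_\infty + \|DR\|_\infty$, substituting everything into the conclusion of Theorem~\ref{theorem:main} yields the claimed estimate.

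The main obstacle is notational rather than analytical: one must verify that the composite Taylor remainders, which naturally involve mixed derivatives such as $\partial_c^2 R$, $\partial_c\partial_{s'}R$, $\partial_{s'}^2 R$, and $\partial_c^2 f$, can be cleanly absorbed into the single symbols $\|D^2 R\|_\infty$ and $\|D^2 \mc{T}\|_\infty$ as they appear in the theorem's conclusion, and that the various first-order Lipschitz constants reduce to the stated $\|D\mc{T}\|_\infty$ after the lower-order correction is absorbed into the hypothesis on $\gamma$.
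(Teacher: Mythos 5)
Your proposal is correct and follows essentially the same route as the paper's proof in Appendix~\ref{apndx:deterministic-cebe}: Taylor remainders give $\delta_R, \delta_T = O(\|c-c_0\|^2)$, Lemma~\ref{lemma:convert-to-wasserstein-distance} converts the distance between Dirac measures to a Wasserstein bound, the Lipschitz constants of $\mc{T}^c$ and $\mc{T}^c_{\ce}$ are checked against the hypothesis on $\gamma$, and everything is fed into Theorem~\ref{theorem:main}. Your explicit reduction of the $(s,a,s')$-dependent reward to the effective reward $R^c(s,a,f^c(s,a))$ via the chain rule is in fact slightly more careful than the paper, which assumes $R:\mc{S}\times\mc{A}\times\mc{C}\to\R$ in its appendix preliminaries and Taylor-expands directly in $c$.
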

This establishes that for a fixed policy $\pi$, the $Q$-function of the CEBE approximates the $Q$-function of the original CMDP with $O(\|c - c_0 \|^2)$ error, provided that the transition and reward functions are sufficiently smooth.
We prove Theorem~\ref{thm:Q_estimate_deterministic} in Appendix~\ref{apndx:deterministic-cebe}. 
Next, we establish an analogue of Theorem \ref{thm:Q_estimate_deterministic} for the case where the transition map is stochastic.

\begin{theorem}[Stochastic CEBE is first-order accurate]
    \label{thm:Q_estimate_stochastic}
     Consider a Lipschitz policy $\pi: \mc{S}\times \mc{C} \to \mc{W}_{p}(\mc{A})$ and stochastic $\mc{T}$. Let $c_0, c \in \mc{C}$ with $\|c - c_0\| < \|\partial^2_c \mc{T}\|_{\infty}^{-1/2}$ and \[\gamma < \frac{1}{4\diam(\mc{S})(L_{\mc{T} } + \|c - c_0\|L_{\partial_c \mc{T} } )(1 + L_{\pi}) }.\] Let the context-enhanced transitions and rewards be defined by \eqref{eqn:cse_t_tabular} and \eqref{eqn:cse_r_tabular}, respectively. Then
     \begin{align*}
        \|Q_{\ce}^{c} - Q_{\be}^{c}\|_{\infty} &\leq \frac{\|c - c_0\|^2}{1 - \gamma}\left(\|\partial^2_c R\|_{\infty} + \frac{3\gamma \diam(\mc{S})(1 + L_{\pi}) \|R\|_{\lip(\mc{S} \times \mc{A} \times \mc{C} ) } \|\partial^2_c \mc{T}\|_{\infty} }{1 - \gamma \max(1, \diam(\mc{S})L_{\mc{T}}(1 + L_{\pi}) ) } \right).
    \end{align*}
    Here $L_{\mc{T}}$ denotes the Lipschitz constant of $\mc{T}$ as a map $\mc{S} \times \mc{A} \times \mc{C} \to \meas(\mc{S})$.
\end{theorem}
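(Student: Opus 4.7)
The plan is to apply Theorem~\ref{theorem:main} with $(\mc{T}^{(1)}, R^{(1)}) = (\mc{T}_{\ce}^{c}, R_{\ce}^{c})$ and $(\mc{T}^{(2)}, R^{(2)}) = (\mc{T}^{c}, R^{c})$, and to supply $O(\|c-c_0\|^2)$ estimates for both $\delta_R$ and $\delta_T$ along with Lipschitz bounds on the enhanced dynamics. For the reward, $R_{\ce}^{c}$ is by definition the first-order Taylor expansion of $R$ in $c$ about $c_0$, so Taylor's theorem with integral remainder immediately gives $\delta_R = \|R^{c} - R_{\ce}^{c}\|_{\infty} \leq \tfrac{1}{2}\|\partial^2_c R\|_{\infty} \|c-c_0\|^2$.

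Next I would handle the transition term in three substeps. First, let $\tilde{\mu}_{s,a} = \mc{T}^{c_0}(s, a) + \partial_c \mc{T}^{c_0}(s, a)\cdot(c-c_0)$ denote the unprojected linear expansion. A Banach-space-valued Taylor argument in the total-variation norm yields $\|\mc{T}^{c}(s, a) - \tilde{\mu}_{s,a}\|_{\text{TV}} \leq \tfrac{1}{2}\|\partial^2_c \mc{T}\|_{\infty}\|c-c_0\|^2$, and the hypothesis $\|c-c_0\| < \|\partial^2_c\mc{T}\|_{\infty}^{-1/2}$ bounds this error by $1/2$, which guarantees $\tilde{\mu}_{s,a}$ has enough positive mass for the projection $P$ to be well defined. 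Second, I would show that $P$ is non-expansive in TV when compared against a probability measure: truncating the negative part and renormalizing can only bring $\tilde{\mu}_{s,a}$ closer to the true probability measure $\mc{T}^{c}(s,a)$, so $\mc{T}_{\ce}^{c}(s,a)$ inherits the same $O(\|c-c_0\|^2)$ TV bound up to a small constant. Third, I convert from TV to $W_p$ on the bounded space $\mc{S}$ via the standard estimate $W_p(\mu,\nu) \leq \diam(\mc{S})\|\mu-\nu\|_{\text{TV}}^{1/p}$ combined with the trivial bound $W_p\leq \diam(\mc{S})$, producing $\delta_T \leq 3\diam(\mc{S})\|\partial^2_c\mc{T}\|_{\infty}\|c-c_0\|^2$ after bookkeeping (the factor $3$ collecting Taylor's $1/2$, the projection adjustment, and the norm conversion).

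To match the remaining constants in Theorem~\ref{theorem:main}, I would bound the Lipschitz constants needed in its hypothesis. For $\mc{T}_{\ce}^{c}$, viewed as a map $\mc{S}\times\mc{A}\to \meas(\mc{S})$, differentiating \eqref{eqn:cse_t_tabular} gives $L_{\mc{T}_{\ce}^{c}} \leq L_{\mc{T}} + \|c-c_0\| L_{\partial_c \mc{T}}$ (the projection is $1$-Lipschitz and only tightens this estimate), which together with the factor $\diam(\mc{S})$ from converting $\meas$-norm Lipschitz constants to $W_p$-Lipschitz constants explains why the discount hypothesis takes the stated form $\gamma < \bigl(4\diam(\mc{S})(L_{\mc{T}}+\|c-c_0\|L_{\partial_c\mc{T}})(1+L_\pi)\bigr)^{-1}$ and ensures the hypothesis of Theorem~\ref{theorem:main} is satisfied. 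Similarly, $\|R^{c}\|_{\lip(\mc{S}\times\mc{A})} \leq \|R\|_{\lip(\mc{S}\times\mc{A}\times\mc{C})}$. Plugging $\delta_R$, $\delta_T$, and these Lipschitz bounds into the conclusion of Theorem~\ref{theorem:main} and simplifying yields exactly the claimed inequality.

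\textbf{Main obstacle.} I expect the delicate step to be the second one: controlling $\mc{T}^{c} - \mc{T}_{\ce}^{c}$ after the nonlinear projection $P$, and converting an $O(\|c-c_0\|^2)$ TV bound into a Wasserstein bound with a clean constant compatible with the Lipschitz-constant framework of Theorem~\ref{theorem:main}. The projection must be analyzed carefully to confirm it is nonexpansive toward probability measures, and the TV-to-$W_p$ conversion introduces the $\diam(\mc{S})$ factors that appear throughout the final bound; keeping all constants tight enough to land at the stated factor $3\diam(\mc{S})$ requires some care.
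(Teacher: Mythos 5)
Your overall strategy is the same as the paper's: invoke Theorem~\ref{theorem:main} with the true dynamics playing the role of $(\mc{T}^{(2)}, R^{(2)})$, get $\delta_R$ from Taylor's theorem, get $\delta_T$ by combining a Taylor bound in total variation with an analysis of the projection $P$ and the conversion $W_1(\mu,\nu)\leq \diam(\mc{S})\|\mu-\nu\|_{\meas(\mc{S})}$, and then track Lipschitz constants. The target constants ($3\diam(\mc{S})$ in $\delta_T$, the factor $4$ in the $\gamma$ hypothesis) also match. One small caution on your TV-to-Wasserstein step: the estimate $W_p \leq \diam(\mc{S})\,\mathrm{TV}^{1/p}$ would only give $O(\|c-c_0\|^{2/p})$ for $p>1$; the paper works with $W_1$ here, for which the conversion is linear in TV.

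The genuine gap is in your two asserted properties of the projection $P(\mu)=\mu^+/\|\mu^+\|$, neither of which you justify and neither of which the paper uses. First, you claim $P$ is ``non-expansive toward probability measures,'' i.e.\ $\|P(\tilde\mu)-\mc{T}^c\|\leq\|\tilde\mu-\mc{T}^c\|$. This is not obvious (metric projections in $L^1$-type norms are generally not nonexpansive), and the paper does not prove it; instead it shows $\|\mu - P(\mu)\|\leq 2\|\mu^-\|$ for signed measures of total mass $1$ (Lemma~\ref{lemma:projection-distance}), which requires separately verifying that the linearization $\mc{T}^{c_0}+\partial_c\mc{T}^{c_0}\cdot(c-c_0)$ has total mass $1$ (Lemma~\ref{lemma:linearized-probability-sum}) and that its negative part is $O(\|c-c_0\|^2)$; the triangle inequality then yields exactly the constant $3$. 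Second, you claim ``the projection is $1$-Lipschitz,'' which is false in general: the paper's Lemma~\ref{lemma:projection-almost-lipschitz} only gives $\|P(\mu)-P(\nu)\|\leq 2\|\mu-\nu\|/\|\mu^+\|$, and since $\|\tilde\mu^+\|\geq 1-\|\partial^2_c\mc{T}\|_\infty\|c-c_0\|^2\geq 1/2$ under the smallness hypothesis, the Lipschitz constant of $\mc{T}^c_{\ce}$ picks up a factor of $4$ -- this, not your bookkeeping, is where the $4$ in the discount-factor hypothesis comes from. Your proof still closes under the stated hypotheses (the $\gamma$ condition with the $4$ is generous enough to cover the correct Lipschitz bound), but as written your verification of the hypotheses of Theorem~\ref{theorem:main} rests on an unproven and likely false Lipschitz claim, so you need the quantitative projection lemmas to make the argument rigorous.
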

We prove Theorem~\ref{thm:Q_estimate_stochastic} in Appendix~\ref{apndx:stochastic-cebe-error}. 
The above two theorems show that training with CEBE does not incur too much approximation error versus training with BE in a neighborhood of the training context. 
In particular, we can use CEBE to approximately solve the Bellman equation.
The following theorem complements these results by showing that a policy optimized using CEBE will also be close to optimal on the original Bellman equation. 
\begin{theorem}\label{thm:policies}
    Let $\pi_{\ce}$ be an $L$-Lipschitz policy which is $(\mc{C}, \epsilon)$-optimal with respect to the CEBE, and suppose that there exists an $L$-Lipschitz policy $\pi_{\be}$ which is $(\mc{C}, \epsilon)$-optimal with respect to the BE. Suppose that
    \[\|Q^c_{\ce}(\bullet, \bullet; \pi) - Q^c_{\be}(\bullet, \bullet; \pi)\|_{\infty} < \delta \]
    for all $c \in \mc{C}$ and all $L$-Lipschitz policies $\pi$, and suppose that the reward function $R$ is bounded. Then $\pi_{\ce}$ is $(\mc{C}, 2\delta + 2\epsilon)$-optimal with respect to the Bellman equation.
\end{theorem}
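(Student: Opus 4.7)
The plan is to reduce the statement on expected returns $J$ to the given uniform bound on $Q$-functions, and then chain through three elementary inequalities via an interpolation through the CEBE world. First, I observe that for any policy $\pi$ the return decomposes as $J(\pi, c) = \mathbb{E}_{s_0,\, a_0 \sim \pi(\bullet \mid s_0, c)}[Q^c(s_0, a_0; \pi)]$ in both the BE world and the CEBE world, because the $Q$-function satisfies the respective Bellman equation under policy $\pi$ and the initial-state distribution is shared. Consequently, the hypothesis $\|Q^c_{\ce}(\bullet, \bullet; \pi) - Q^c_{\be}(\bullet, \bullet; \pi)\|_{\infty} < \delta$ for every $L$-Lipschitz $\pi$ transfers, by taking expectation, to $|J_{\ce}(\pi, c) - J_{\be}(\pi, c)| < \delta$ for every such $\pi$ and every $c \in \mc{C}$. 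The boundedness of $R$ is used to guarantee that $J$ and $Q$ are finite and that the expectations are well-defined.

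Since both $\pi_{\be}$ and $\pi_{\ce}$ are $L$-Lipschitz by hypothesis, I can apply this $Q$-to-$J$ transfer to each of them. Inserting the CEBE-optimal policy as an intermediate gives, for any fixed $c \in \mc{C}$,
\begin{align*}
J_{\be}(\pi_{\be}, c)
&\leq J_{\ce}(\pi_{\be}, c) + \delta \\
&\leq J_{\ce}(\pi_{\ce}, c) + \delta + \epsilon \\
&\leq J_{\be}(\pi_{\ce}, c) + 2\delta + \epsilon,
\end{align*}
where the middle inequality uses that $\pi_{\ce}$ is $(\mc{C}, \epsilon)$-optimal for the CEBE, applied with comparator $\pi_{\be}$.

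Finally, to upgrade this to $(\mc{C}, 2\delta + 2\epsilon)$-optimality against an arbitrary comparator $\rho \colon \mc{S} \to \Delta(\mc{A})$, I invoke the BE-optimality of $\pi_{\be}$: $J_{\be}(\rho, c) \leq J_{\be}(\pi_{\be}, c) + \epsilon$. Combining with the chain above yields $J_{\be}(\rho, c) \leq J_{\be}(\pi_{\ce}, c) + 2\delta + 2\epsilon$ for every $c \in \mc{C}$ and every $\rho$, which is the desired conclusion. The only subtle point, and the step one must be most careful about, is that the hypothesized $Q$-bound only applies to $L$-Lipschitz policies, so one must rely on the Lipschitz constraints imposed on $\pi_{\ce}$ and $\pi_{\be}$; the arbitrary comparator $\rho$ never needs to be Lipschitz because it enters only through the already-established BE-optimality of $\pi_{\be}$. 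Beyond this bookkeeping there is no substantive obstacle.
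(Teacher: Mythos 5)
Your proposal is correct and follows essentially the same route as the paper's proof: both pass from the uniform $Q$-bound to a $\delta$-closeness of $J_{\ce}$ and $J_{\be}$ on Lipschitz policies via the identity $J(\pi,c)=\E_{s_0,\,a\sim\pi}[Q^c(s_0,a;\pi)]$, and then chain through $\pi_{\be}$ and the CEBE returns exactly as you do (the paper writes the chain as lower bounds rather than upper bounds, but it is the identical argument). Your closing remark that the arbitrary comparator $\rho$ need not be Lipschitz, since it only enters through the optimality of $\pi_{\be}$, is a point the paper leaves implicit.
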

The proof of the above theorem uses the observation that if two functions are uniformly close, then a good optimizer of one is also a good optimizer of the other. 
We provide a proof in Appendix~\ref{apndx:approximately-optimal}.

\section{Context Sample Enhancement}
\label{sec:cse}
In this section, we show how one can estimate the CEBE from samples in the deterministic-transitions case. 
Suppose we are given a dataset $\mathcal{D}$ consisting of samples of the form $(s, a, r, s')$. 
We introduce the following context sample enhancement CSE procedure, which takes a sample and a context perturbation $\Delta c=c-c_0$ as inputs and returns a context-enhanced sample:  
\begin{equation}
\label{eqn:sample_enhance_deterministic}
\text{CSE}((s, a, r, s'), \Delta c) = (r + \partial_c R^{c_0}(s, a, s') \Delta c + \partial_{s'}R^{c_0}(s, a, s') \partial_{c} \mc{T}^{c_0} \Delta c , s' + \partial_c \mathcal{T}^{c_0} \Delta c) . 
\end{equation}
If $(s, a, r, s)$ is sampled from the CMDP with context $c_0$, then $\text{CSE}((s, a, r, s'), c-c_0)$ is a sample $(\bar r, \bar s')$ from the approximate CMDP in context $c$. 
In particular, CSE allows us to sample from the approximate CMDP at perturbed contexts by performing data augmentation on samples from the original CMDP at a base context. 
The samples generated via this procedure might have a lower quality than those generated by exact domain randomization, due to approximation errors, but on the upside, when the derivatives are available, CSE offers a very easy to implement way to integrate structure information into the training. 
We provide a regularization perspective on CSE in Appendix~\ref{apndx:theory:regularization}. 
The full algorithm for CSE in deterministic environments is shown in Algorithm~\ref{alg:offpolicy_pipeline}.

\begin{algorithm}
\begin{algorithmic}[1]
\State Given: CMDP $\mathcal{M}$, training contexts $\mathcal{D}^{\text{train}}$, data collection iterations $N$, train iterations $M$, perturbation radius $\epsilon$, and off-policy RL algorithm ALGO.
\State Initialize policy $\pi$, value functions $Q$, and replay buffer $B$.
\State Collect some number of trajectories from a random policy in CMDP $\mc{M}'(c)$ with $c\sim \mc{D}^{\text{train}}$
\For{$N$ iterations}
    \State Sample $c \sim \mc{D}^{\text{train}}$
    \State Collect a trajectory $\{(s_t, c, a_t, r_t, s_{t+1})\}_{t\geq 0}$ from $\mc{M}'(c)$ using $\pi$ and store in buffer $B$
    \For{Some number of training iterations}
        \State Sample a batch $\{(s_t^i, c^i, a_t^i, r_t, s_{t+1}^i)\}_i$ from buffer $B$
        \State Generate perturbations $\Delta c^i\in \mathcal{B}(c^i, \epsilon)$ and compute $(\bar r, \bar s') = \text{CSE}(x^i, \Delta c^i)$
        \State Update samples $x^i \gets (s, c+\Delta c^i, a, \bar r, \bar s')$ and train with ALGO on the updated batch
    \EndFor
\EndFor
\end{algorithmic}
\caption{Off-policy RL algorithm with context sample enhancement}
\label{alg:offpolicy_pipeline}
\end{algorithm}

\section{Experiments}
\label{sec:experiments}

In this section, we perform numerical experiments to test CEBE and CSE. 
For continuous control problems, we use a simple feed-forward network and train with Soft Actor Critic (SAC) \citep{haarnoja2018soft}. 
The inputs to the neural network are states with the context appended on (i.e., $(s, c)$). 
We compare against baseline training (i.e., vanilla SAC) and local domain randomization (LDR). 
LDR is a popular method in robotics and can provide significant generalization benefits
\citep{jakobi1995noise, levy2015live, sadeghi2016cad2rl, tobin2017domain}.
When using LDR, one first samples a context that is a perturbation of the original training context and then produces a trajectory from the perturbed context in order to broaden the training context distribution. 
Since LDR has access to exact training data in a neighborhood of the original training context, we treat LDR as a idealized benchmark method for comparison.

In Section~\ref{sec:experiments_tabular}, we empirically demonstrate that the $Q$-function obtained by dynamic programming with the CEBE is first-order accurate, confirming our theory in Section~\ref{sec:theory}. 
In Section~\ref{sec:experiments_simple}, we empirically demonstrate improved performance with CSE compared to baseline training on a relatively simple continuous control environment. 
In Section~\ref{sec:experiments_classic}, we test on goal-based classic control environments. 
In Section~\ref{sec:experiments_mujoco}, we test on MuJoCo environments with context dependent tasks introduced in the work of \cite{lee2021improving}. 
In all continuous control environments, we train 10 policies and compute the average return over 64 trajectories. 
We then report the mean and 95\% confidence interval of the mean returns for each policy. 
The full list of hyperparameters and experiment configurations are included in Appendix~\ref{apndx:experiment_details}.

\subsection{Tabular CEBE}
\label{sec:experiments_tabular}
We begin by examining a tabular setting where we show that the CEBE is a first-order approximation of the Bellman equation. 
Using dynamic programming allows us to exactly solve the Bellman equation and the CEBE and thus avoid noise due to sampling and training in deep RL. 
We consider the tabular Cliffwalking from gymnasium \citep{towers2024gymnasium}, in which an agent must navigate a grid-world to a goal state without first ``falling'' off the cliff into a terminal state. 
At each step, the agent has some probability $c$ of slipping into an adjacent state. 
To introduce nonlinearity into the reward function, we consider the following two choices:
\begin{equation}
   \label{eqn:cliffwalker_rews}
    R_{\text{cliff}}^c = -\frac{100}{c},\quad R_{\text{goal}}^c = c^{-2};
\qquad
    R_{\text{cliff}}^c = \frac{-10}{1+c},\quad R_{\text{goal}}^c = (1+c)^{-1.5}.
\end{equation}
In each case, we let $R^c$ be equal to $R^c_{\text{cliff}}$ if the agent falls off the cliff, $R^c_{\text{goal}}$ if it achieves the goal state, and $0$ otherwise. 
In Figure \ref{fig:cliffwalker}, we plot the approximation error $\|Q_{\ce}^{c} - Q_{\be}^{c}\|_{\infty}$ on a log-log scale for 100 choices of $c$ and fit a line to the first 10 points to show that $Q^{c}_{\ce}$ has an approximation error of $O(\|c-c_0\|^2)$.
We see the best-fit line has a slope $\approx 2$ for both choices of the reward function. 
This numerically demonstrates Theorem~\ref{thm:Q_estimate_stochastic} and shows that $Q_{\ce}^{c}$ is a first-order approximation of $Q_{\be}^{c}$. 

\begin{figure}[h]
    \centering
    \begin{subfigure}[t]{0.32\textwidth}
    \includegraphics[width=\textwidth]{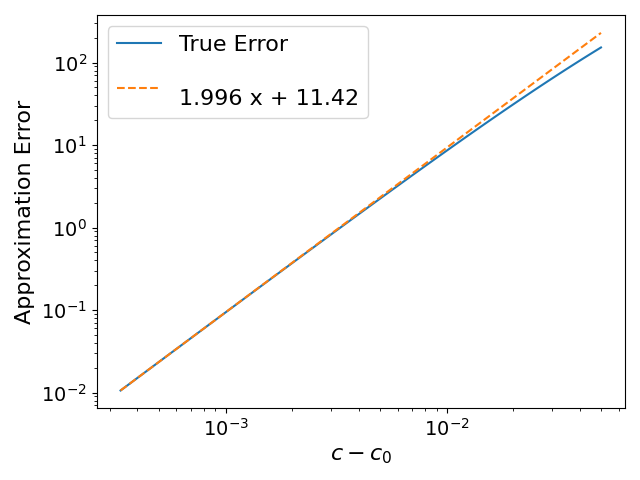}
    \caption{Reward in eq.~\eqref{eqn:cliffwalker_rews} left}
    \label{fig:cliffwalker_rew1}
    \end{subfigure}
    \begin{subfigure}[t]{0.32\textwidth}
    \includegraphics[width=\textwidth]{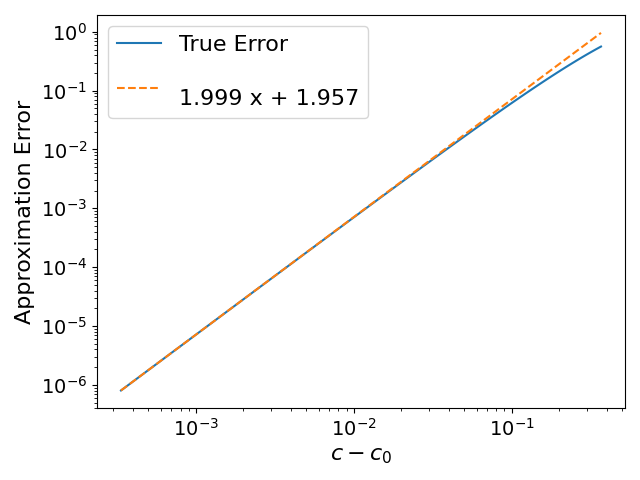}
    \caption{Reward in eq.~\eqref{eqn:cliffwalker_rews} right}
    \label{fig:cliffwalker_rew2}
    \end{subfigure}
    \caption{Approximation error of CEBE on Cliffwalker environment with different rewards. This experiment uses 5 rows, 6 columns, $c=0.1$, and $\gamma=0.9$. 
    }
    \label{fig:cliffwalker}
\end{figure}

\subsection{Simple Control Environments}
\label{sec:experiments_simple}

We begin our study of continuous control problems with environments with linear transitions and rewards. 
We let SimpleDirection denote the environment with transition and reward functions 
\begin{equation*}
    \label{eqn:simple_dynamics}
    \mc{T}^c(s, a) = s + a + c,\quad R^c(s, a, s') = s'\cdot c , 
\end{equation*}
where $\mc{S} = \R^2$, $\mc{A}, \mc{C} = [-1, 1]^2$, and $s_0 \sim \text{Uniform}\left([-1, 1]^2\right)$. 
This environment was chosen because it has a reward function similar to ones we will test later in the MuJoCo environments, but has much more simple dynamics. 
In SimpleDirection, the agent is incentivized to move in the direction $c$ by picking action $a = \text{sign}(c)$. 
Figure~\ref{fig:simpledir} shows the evaluation results for policies trained on SimpleDirection with Baseline, CSE, and LDR as we vary the context parameters. 
We observe that CSE performs similarly to LDR and much better than the baseline.
We provide additional analyses of this environment in Appendix~\ref{apndx:environments}.

\begin{figure}[h]
    \centering
    \includegraphics[width=0.42\textwidth]{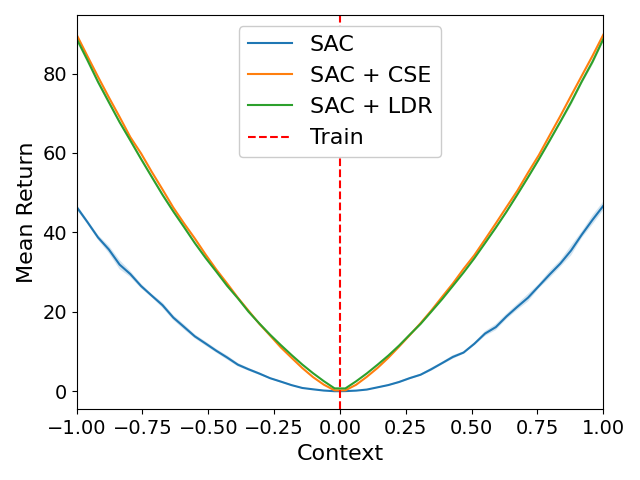}
    \caption{Comparison of training methods as we vary the first context parameter in SimpleDirection.}
    \label{fig:simpledir}
\end{figure}

\subsection{Classic Control}
\label{sec:experiments_classic}
We consider a goal-based extension of the Pendulum environment \citep{towers2024gymnasium}, which we denote PendulumGoal. 
In addition to the physical parameters, we add a goal state to the context to increase the task complexity. 
We let $c = (g, m, l, \tau)$, where $g$ is the gravitational acceleration, $m$ the mass, $l$ the pendulum length, and $\tau\in [-1, 1]$ the desired torque at the goal state, and set 
\begin{equation}
    \label{eqn:pen_goal_reward}
    R^c = \pi^2  \sin\left(\frac{\theta_{\text{goal}} - \theta}{2}\right)^ 2 + 0.1\, \dot\theta^2 + 0.001\, u^2,
\end{equation}
where $u$ is the action and $\theta_{\text{goal}} = \sin^{-1}\left(
{-2\tau}/{mgl}\right)$. 
In Figure~\ref{fig:pengoal}, we plot the evaluation results for each method as we vary $g$ and the goal torque $\tau$. 
Both CSE and LDR consistently outperform the baseline. 
In some contexts, CSE performs much better than LDR, e.g., when the goal torque is greater than $0.6$. 
We present the results of varying the other context parameters in Appendix~\ref{apndx:additional_experiments}. 
We also present additional experiments with a goal-based CartPole environment (CartGoal) in Appendix~\ref{apndx:additional_experiments}.

\begin{figure}[h!]
    \centering
    \begin{subfigure}[t]{0.42\textwidth}
    \includegraphics[width=\textwidth]{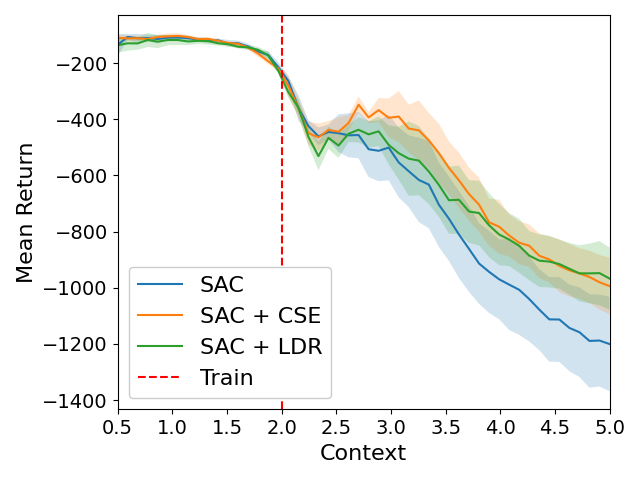}
    \caption{Gravitational Acceleration}
    \label{fig:pengoal_c0}
    \end{subfigure}
    \begin{subfigure}[t]{0.42\textwidth}
    \includegraphics[width=\textwidth]{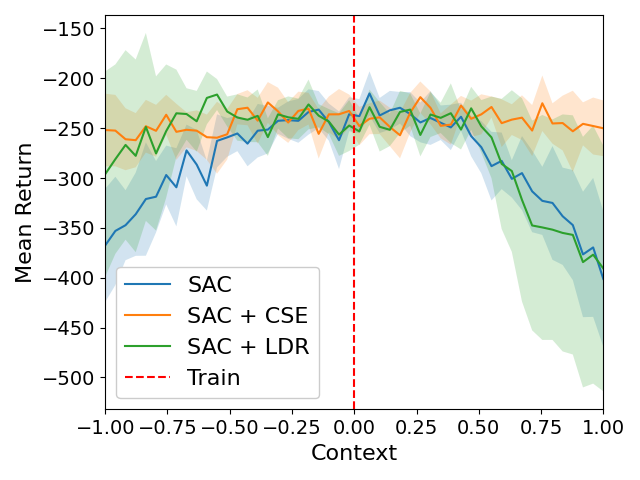}
    \caption{Goal Torque}
    \label{fig:pengoal_c3}
    \end{subfigure}
    \caption{Comparison of training methods on PendulumGoal evaluating on different
    gravitational acceleration and goal torque 
    parameters. All context parameters $(g, m, l, \tau)$ are perturbed during training.
    }
    \label{fig:pengoal}
\end{figure}

\subsection{MuJoCo Environments} \label{sec:experiments_mujoco}
For this section, we use the goal-based MuJoCo environments CheetahVelocity and AntDirection introduced in the work of \cite{lee2021improving}. 
These environments are based on the HalfCheetah and Ant environments from \cite{todorov2012mujoco}.
The CheetahVelocity environment uses a modified reward function that rewards the cheetah for running at a velocity specified by the context. 
The AntDirection environment uses a modified reward function that rewards the ant for running in a direction specified by the context.

In Figure~\ref{fig:cv_sweep}, we show the results of the CheetahVelocity experiment. 
When sweeping over the goal velocity, CSE and LDR outperform the baseline for most of the contexts. 
CSE performs similar to LDR. 
When the goal velocity is between 0 and 2, CSE and LDR perform nearly as well as on the training context, while baseline starts to degrade. 
In CheetahVelocity, although baseline performs best when the goal velocity is greater than $2.6$, CSE still outperforms LDR. 
In AntDirection, shown in Figure~\ref{fig:ad_sweep}, CSE performs similarly to LDR in most contexts, though it achieves lower returns in the region $[3.5, 5]$.

\begin{figure}
    \centering
    \begin{subfigure}[t]{0.42\textwidth}
    \includegraphics[width=\textwidth]{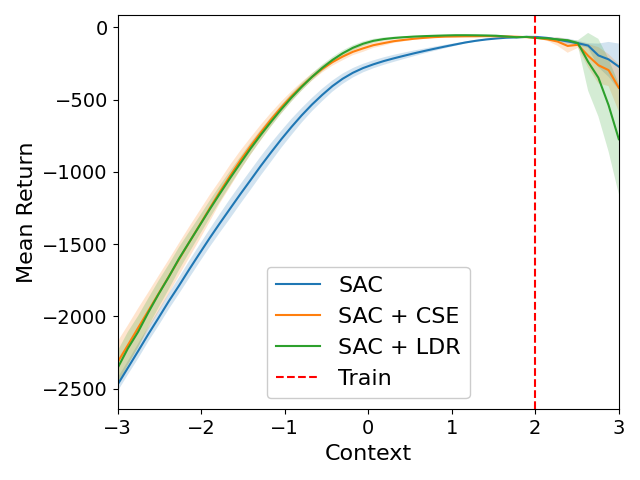}
    \caption{CheetahVelocity}
    \label{fig:cv_sweep}
    \end{subfigure}
    \begin{subfigure}[t]{0.42\textwidth}
    \includegraphics[width=\textwidth]{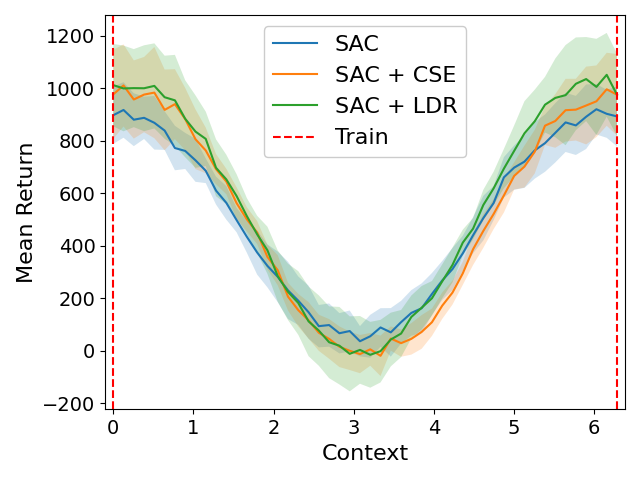}
    \caption{AntDirection}
    \label{fig:ad_sweep}
    \end{subfigure}
    \caption{Comparison of training methods on the goal-based MuJoCo environments.}
    \label{fig:mujoco_sweep}
\end{figure}

\FloatBarrier

\section{Conclusion}
\label{sec:discussion}

We proposed a perturbative framework for CMDPs as an approach to improve zero-shot generalization in reinforcement learning. 
In particular, we introduced an approximation of the Bellman equation called the context-enhanced Bellman equation CEBE and showed theoretically that it approximates the value functions of the true CMDP in a neighborhood of contexts. 
Moreover, we showed that optimizing a policy based on our proposed CEBE produces a policy that is nearly optimal in the original CMDP. 
Using this framework we then introduced a context sample enhancement CSE procedure to generate samples that provably approximate samples from unobserved contexts. 
Finally, we performed experiments in diverse simulation environments. 
The results suggest that CSE can serve as a powerful method for improving generalization in DRL in smooth CMDPs.

We highlight that CSE is easy to implement and can be easily incorporated with other generalization methods. 
While we focused on model-free algorithms, CSE is compatible with model-based methods and may aid in training better world models. 
We think that CSE could potentially also be used to effectively sample from a larger volume of the context space in high-dimensional context spaces where domain randomization may suffer from a curse of dimensionality. 
However, further work is needed to understand the sample complexity of CSE in comparison to domain randomization in this setting. 
Another potential avenue for future study is in applying CSE to offline RL where obtaining new samples is impractical. 
Aside from improving generalization, one could explore different ways to leverage gradient information to design principled variations of prioritized replay buffers and exploration strategies \citep{jiang2021replayguided, jiang2024importance}. 
The gradient information of the CMDP could also highlight states in the CMDP which are sensitive to context parameters and allow one to focus more training around these sensitive states or use adversarial context perturbations  \citep{mehta2020active}.

We conclude by pointing limitations of our work. While our theory with CEBE applies in general to smooth CMDPs with deterministic rewards, CSE focuses on deterministic transitions which do not always occur in practice. 
We primarily consider this case because it provides an efficient data augmentation method. 
Extensions of this may consider taking gradients of appropriately defined transport maps between distributions. 
Another limitation is that our analysis and experiments focus on fully-observable state and context spaces. 
This does not always hold in practice and future study should examine the sensitivity of CSE with respect to noisy gradients in context space as well as partially observable environments. 

\subsubsection*{Acknowledgments}
This project has been supported by NSF DMS-2145630 and NSF CCF-2212520. GM also acknowledges support from DARPA AIQ in project HR00112520014, DFG SPP 2298 project 464109215, and BMFTR in DAAD project 57616814.
We also thank Ruibin Lyu for his contributions during an early stage of the project, including writing test cases, verifying model components, and for helpful discussion.

\bibliographystyle{plainnat}
\bibliography{refs}

@InProceedings{huang2020one,
  title = 	 {One Policy to Control Them All: Shared Modular Policies for Agent-Agnostic Control},
  author =       {Huang, Wenlong and Mordatch, Igor and Pathak, Deepak},
  booktitle = 	 {Proceedings of the 37th International Conference on Machine Learning},
  pages = 	 {4455--4464},
  year = 	 {2020},
  noeditor = 	 {III, Hal Daumé and Singh, Aarti},
  volume = 	 {119},
  series = 	 {Proceedings of Machine Learning Research},
  nomonth = 	 {13--18 Jul},
  publisher =    {PMLR},
  pdf = 	 {http://proceedings.mlr.press/v119/huang20d/huang20d.pdf},
  url = 	 {https://proceedings.mlr.press/v119/huang20d.html},
  abstract = 	 {Reinforcement learning is typically concerned with learning control policies tailored to a particular agent. We investigate whether there exists a single global policy that can generalize to control a wide variety of agent morphologies – ones in which even dimensionality of state and action spaces changes. We propose to express this global policy as a collection of identical modular neural networks, dubbed as Shared Modular Policies (SMP), that correspond to each of the agent’s actuators. Every module is only responsible for controlling its corresponding actuator and receives information from only its local sensors. In addition, messages are passed between modules, propagating information between distant modules. We show that a single modular policy can successfully generate locomotion behaviors for several planar agents with different skeletal structures such as monopod hoppers, quadrupeds, bipeds, and generalize to variants not seen during training – a process that would normally require training and manual hyperparameter tuning for each morphology. We observe that a wide variety of drastically diverse locomotion styles across morphologies as well as centralized coordination emerges via message passing between decentralized modules purely from the reinforcement learning objective. Videos and code at https://huangwl18.github.io/modular-rl/}
}

@inproceedings{kurin2021my,
title={My Body is a Cage: the Role of Morphology in Graph-Based Incompatible Control},
author={Vitaly Kurin and Maximilian Igl and Tim Rockt{\"a}schel and Wendelin Boehmer and Shimon Whiteson},
booktitle={International Conference on Learning Representations},
year={2021},
url={https://openreview.net/forum?id=N3zUDGN5lO}
}

@inproceedings{wang2018nervenet,
title={{NerveNet}: Learning Structured Policy with Graph Neural Networks},
author={Tingwu Wang and Renjie Liao and Jimmy Ba and Sanja Fidler},
booktitle={International Conference on Learning Representations},
year={2018},
url={https://openreview.net/forum?id=S1sqHMZCb},
}

@inproceedings{todorov2012mujoco,
  title        = {MuJoCo: A physics engine for model-based control},
  author       = {Todorov, Emanuel and Erez, Tom and Tassa, Yuval},
  booktitle    = {2012 IEEE/RSJ International Conference on Intelligent Robots and Systems},
  pages        = {5026--5033},
  year         = {2012},
  organization = {IEEE},
  doi          = {10.1109/IROS.2012.6386109},
  keywords     = {code, dataset}
}

@ARTICLE{li2024mat,
  author={Li, Boyu and Li, Haoran and Zhu, Yuanheng and Zhao, Dongbin},
  journal={IEEE Transactions on Cognitive and Developmental Systems}, 
  title={{MAT}: Morphological Adaptive Transformer for Universal Morphology Policy Learning}, 
  year={2024},
  volume={16},
  number={4},
  pages={1611-1621},
  keywords={Morphology;Robots;Encoding;Transformers;Reinforcement learning;Graph neural networks;Vectors;Agent-agnostic reinforcement learning;functional position encoding (FPE);morphological attention mechanism (MAM);morphological information},
  doi={10.1109/TCDS.2024.3383158}}

@InProceedings{chen2023subequivariant,
  title = 	 {Subequivariant Graph Reinforcement Learning in 3{D} Environments},
  author =       {Chen, Runfa and Han, Jiaqi and Sun, Fuchun and Huang, Wenbing},
  booktitle = 	 {Proceedings of the 40th International Conference on Machine Learning},
  pages = 	 {4545--4565},
  year = 	 {2023},
  noeditor = 	 {Krause, Andreas and Brunskill, Emma and Cho, Kyunghyun and Engelhardt, Barbara and Sabato, Sivan and Scarlett, Jonathan},
  volume = 	 {202},
  series = 	 {Proceedings of Machine Learning Research},
  nomonth = 	 {23--29 Jul},
  publisher =    {PMLR},
  pdf = 	 {https://proceedings.mlr.press/v202/chen23i/chen23i.pdf},
  url = 	 {https://proceedings.mlr.press/v202/chen23i.html},
  abstract = 	 {Learning a shared policy that guides the locomotion of different agents is of core interest in Reinforcement Learning (RL), which leads to the study of morphology-agnostic RL. However, existing benchmarks are highly restrictive in the choice of starting point and target point, constraining the movement of the agents within 2D space. In this work, we propose a novel setup for morphology-agnostic RL, dubbed Subequivariant Graph RL in 3D environments (3D-SGRL). Specifically, we first introduce a new set of more practical yet challenging benchmarks in 3D space that allows the agent to have full Degree-of-Freedoms to explore in arbitrary directions starting from arbitrary configurations. Moreover, to optimize the policy over the enlarged state-action space, we propose to inject geometric symmetry, i.e., subequivariance, into the modeling of the policy and Q-function such that the policy can generalize to all directions, improving exploration efficiency. This goal is achieved by a novel SubEquivariant Transformer (SET) that permits expressive message exchange. Finally, we evaluate the proposed method on the proposed benchmarks, where our method consistently and significantly outperforms existing approaches on single-task, multi-task, and zero-shot generalization scenarios. Extensive ablations are also conducted to verify our design.}
}

@ARTICLE{newbury2024review,
  author={Newbury, Rhys and Collins, Jack and He, Kerry and Pan, Jiahe and Posner, Ingmar and Howard, David and Cosgun, Akansel},
  journal={IEEE Access}, 
  title={A Review of Differentiable Simulators}, 
  year={2024},
  volume={12},
  number={},
  pages={97581-97604},
  keywords={Physics;Reviews;Machine learning;Computational modeling;Optimization methods;Numerical models;Data visualization;Differential equations;System identification;Trajectory planning;Differentiable simulator;review;differentiable physics;soft body simulation;system identification;trajectory optimization;morphology optimization;policy optimization;robotics},
  doi={10.1109/ACCESS.2024.3425448}}

@article{kirk2023survey,
  title   = {A survey of zero-shot generalisation in deep reinforcement learning},
  author  = {Kirk, Robert and Zhang, Amy and Grefenstette, Edward and Rockt{\"a}schel, Tim},
  journal = {Journal of Artificial Intelligence Research},
  volume  = {76},
  pages   = {201--264},
  year    = {2023}
}

@InProceedings{higgins2017darla,
  title = 	 {{DARLA}: Improving Zero-Shot Transfer in Reinforcement Learning},
  author =       {Irina Higgins and Arka Pal and Andrei Rusu and Loic Matthey and Christopher Burgess and Alexander Pritzel and Matthew Botvinick and Charles Blundell and Alexander Lerchner},
  booktitle = 	 {Proceedings of the 34th International Conference on Machine Learning},
  pages = 	 {1480--1490},
  year = 	 {2017},
  noeditor = 	 {Precup, Doina and Teh, Yee Whye},
  volume = 	 {70},
  series = 	 {Proceedings of Machine Learning Research},
  nomonth = 	 {06--11 Aug},
  publisher =    {PMLR},
  pdf = 	 {http://proceedings.mlr.press/v70/higgins17a/higgins17a.pdf},
  url = 	 {https://proceedings.mlr.press/v70/higgins17a.html},
  abstract = 	 {Domain adaptation is an important open problem in deep reinforcement learning (RL). In many scenarios of interest data is hard to obtain, so agents may learn a source policy in a setting where data is readily available, with the hope that it generalises well to the target domain. We propose a new multi-stage RL agent, DARLA (DisentAngled Representation Learning Agent), which learns to see before learning to act. DARLA’s vision is based on learning a disentangled representation of the observed environment. Once DARLA can see, it is able to acquire source policies that are robust to many domain shifts – even with no access to the target domain. DARLA significantly outperforms conventional baselines in zero-shot domain adaptation scenarios, an effect that holds across a variety of RL environments (Jaco arm, DeepMind Lab) and base RL algorithms (DQN, A3C and EC).}
}

@inproceedings{chen2020overview,
  title        = {An overview of robust reinforcement learning},
  author       = {Chen, Shiyu and Li, Yanjie},
  booktitle    = {2020 IEEE International Conference on Networking, Sensing and Control (ICNSC)},
  pages        = {1--6},
  year         = {2020},
  organization = {IEEE}
}

@inproceedings{harrison2019adapt,
  title        = {Adapt: zero-shot adaptive policy transfer for stochastic dynamical systems},
  author       = {Harrison, James and Garg, Animesh and Ivanovic, Boris and Zhu, Yuke and Savarese, Silvio and Fei-Fei, Li and Pavone, Marco},
  booktitle    = {Robotics Research: The 18th International Symposium ISRR},
  pages        = {437--453},
  year         = {2019},
  organization = {Springer}
}

@article{vithayathil2020survey,
  title     = {A survey of multi-task deep reinforcement learning},
  author    = {Vithayathil Varghese, Nelson and Mahmoud, Qusay H},
  journal   = {Electronics},
  volume    = {9},
  number    = {9},
  pages     = {1363},
  year      = {2020},
  publisher = {MDPI}
}

@inproceedings{hong2021structure,
title={Structure-Aware Transformer Policy for Inhomogeneous Multi-Task Reinforcement Learning},
author={Sunghoon Hong and Deunsol Yoon and Kee-Eung Kim},
booktitle={International Conference on Learning Representations},
year={2022},
url={https://openreview.net/forum?id=fy_XRVHqly}
}

@InProceedings{mehta2020active,
  title = 	 {Active Domain Randomization},
  author =       {Mehta, Bhairav and Diaz, Manfred and Golemo, Florian and Pal, Christopher J. and Paull, Liam},
  booktitle = 	 {Proceedings of the Conference on Robot Learning},
  pages = 	 {1162--1176},
  year = 	 {2020},
  editor = 	 {Kaelbling, Leslie Pack and Kragic, Danica and Sugiura, Komei},
  volume = 	 {100},
  series = 	 {Proceedings of Machine Learning Research},
  nomonth = 	 {30 Oct--01 Nov},
  publisher =    {PMLR},
  pdf = 	 {http://proceedings.mlr.press/v100/mehta20a/mehta20a.pdf},
  url = 	 {https://proceedings.mlr.press/v100/mehta20a.html},
  abstract = 	 {Domain randomization is a popular technique for improving domain transfer, often used in a zero-shot setting when the target domain is unknown or cannot easily be used for training. In this work, we empirically examine the effects of domain randomization on agent generalization. Our experiments show that domain randomization may lead to suboptimal, high-variance policies, which we attribute to the uniform sampling of environment parameters. We propose Active Domain Randomization, a novel algorithm that learns a parameter sampling strategy. Our method looks for the most informative environment variations within the given randomization ranges by leveraging the discrepancies of policy rollouts in randomized and reference environment instances. We find that training more frequently on these instances leads to better overall agent generalization. Our experiments across various physics-based simulated and real-robot tasks show that this enhancement leads to more robust, consistent policies.}
}

@InProceedings{luck2020data,
  title = 	 {Data-efficient Co-Adaptation of Morphology and Behaviour with Deep Reinforcement Learning},
  author =       {Luck, Kevin Sebastian and Amor, Heni Ben and Calandra, Roberto},
  booktitle = 	 {Proceedings of the Conference on Robot Learning},
  pages = 	 {854--869},
  year = 	 {2020},
  noeditor = 	 {Kaelbling, Leslie Pack and Kragic, Danica and Sugiura, Komei},
  volume = 	 {100},
  series = 	 {Proceedings of Machine Learning Research},
  nomonth = 	 {30 Oct--01 Nov},
  publisher =    {PMLR},
  pdf = 	 {http://proceedings.mlr.press/v100/luck20a/luck20a.pdf},
  url = 	 {https://proceedings.mlr.press/v100/luck20a.html},
  abstract = 	 {Humans and animals are capable of quickly learning new behaviours to solve new tasks. Yet, we often forget that they also rely on a highly specialized morphology that co-adapted with motor control throughout thousands of years. Although compelling, the idea of co-adapting morphology and behaviours in robots is often unfeasible because of the long manufacturing times, and the need to redesign an appropriate controller for each morphology. In this paper, we propose a novel approach to automatically and efficiently co-adapt a robot morphology and its controller. Our approach is based on recent advances in deep reinforcement learning, and specifically the soft actor critic algorithm. Key to our approach is the possibility of leveraging previously tested morphologies and behaviors to estimate the performance of new candidate morphologies. As such, we can make full use of the information available for making more informed decisions, with the ultimate goal of achieving a more data-efficient co-adaptation (i.e., reducing the number of morphologies and behaviors tested). Simulated experiments show that our approach requires drastically less design prototypes to find good morphology-behaviour combinations, making this method particularly suitable for future co-adaptation of robot designs in the real world.}
}

@inproceedings{thoma2024contextual,
 author = {Thoma, Vinzenz and Pasztor, Barna and Krause, Andreas and Ramponi, Giorgia and Hu, Yifan},
 booktitle = {Advances in Neural Information Processing Systems},
 noeditor = {A. Globerson and L. Mackey and D. Belgrave and A. Fan and U. Paquet and J. Tomczak and C. Zhang},
 pages = {127369--127435},
 publisher = {Curran Associates, Inc.},
 title = {Contextual Bilevel Reinforcement Learning for Incentive Alignment},
 url = {https://proceedings.neurips.cc/paper_files/paper/2024/file/e66309ead63bc1410d2df261a28f602d-Paper-Conference.pdf},
 volume = {37},
 year = {2024}
}

@inproceedings{ghosh2021generalization,
 author = {Ghosh, Dibya and Rahme, Jad and Kumar, Aviral and Zhang, Amy and Adams, Ryan P and Levine, Sergey},
 booktitle = {Advances in Neural Information Processing Systems},
 noeditor = {M. Ranzato and A. Beygelzimer and Y. Dauphin and P.S. Liang and J. Wortman Vaughan},
 pages = {25502--25515},
 publisher = {Curran Associates, Inc.},
 title = {Why Generalization in {RL} is Difficult: Epistemic {POMDPs} and Implicit Partial Observability},
 url = {https://proceedings.neurips.cc/paper_files/paper/2021/file/d5ff135377d39f1de7372c95c74dd962-Paper.pdf},
 volume = {34},
 year = {2021}
}

@inproceedings{jiang2024importance,
 author = {Jiang, Yiding and Kolter, J. Zico and Raileanu, Roberta},
 booktitle = {Advances in Neural Information Processing Systems},
 noeditor = {A. Oh and T. Naumann and A. Globerson and K. Saenko and M. Hardt and S. Levine},
 pages = {12951--12986},
 publisher = {Curran Associates, Inc.},
 title = {On the Importance of Exploration for Generalization in Reinforcement Learning},
 url = {https://proceedings.neurips.cc/paper_files/paper/2023/file/2a4310c4fd24bd336aa2f64f93cb5d39-Paper-Conference.pdf},
 volume = {36},
 year = {2023}
}

@inproceedings{cho2024model,
title={Model-Based Transfer Learning for Contextual Reinforcement Learning},
author={Jung-Hoon Cho and Vindula Jayawardana and Sirui Li and Cathy Wu},
booktitle={The Thirty-eighth Annual Conference on Neural Information Processing Systems},
year={2024},
url={https://openreview.net/forum?id=KLv1VLuMo8}
}

@article{weltevrede2024training,
  title   = {Training on more Reachable Tasks for Generalisation in Reinforcement Learning},
  author  = {Weltevrede, Max and Horsch, Caroline and Spaan, Matthijs TJ and B{\"o}hmer, Wendelin},
  journal = {arXiv preprint arXiv:2410.03565},
  year    = {2024}
}

@inproceedings{beukman2023dynamics,
 author = {Beukman, Michael and Jarvis, Devon and Klein, Richard and James, Steven and Rosman, Benjamin},
 booktitle = {Advances in Neural Information Processing Systems},
 noeditor = {A. Oh and T. Naumann and A. Globerson and K. Saenko and M. Hardt and S. Levine},
 pages = {40167--40203},
 publisher = {Curran Associates, Inc.},
 title = {Dynamics Generalisation in Reinforcement Learning via Adaptive Context-Aware Policies},
 url = {https://proceedings.neurips.cc/paper_files/paper/2023/file/7e7b768198d24d883d69704eee57efb0-Paper-Conference.pdf},
 volume = {36},
 year = {2023}
}

@InProceedings{tirinzoni2018importance,
  title = 	 {Importance Weighted Transfer of Samples in Reinforcement Learning},
  author =       {Tirinzoni, Andrea and Sessa, Andrea and Pirotta, Matteo and Restelli, Marcello},
  booktitle = 	 {Proceedings of the 35th International Conference on Machine Learning},
  pages = 	 {4936--4945},
  year = 	 {2018},
  noeditor = 	 {Dy, Jennifer and Krause, Andreas},
  volume = 	 {80},
  series = 	 {Proceedings of Machine Learning Research},
  nomonth = 	 {10--15 Jul},
  publisher =    {PMLR},
  pdf = 	 {http://proceedings.mlr.press/v80/tirinzoni18a/tirinzoni18a.pdf},
  url = 	 {https://proceedings.mlr.press/v80/tirinzoni18a.html},
  abstract = 	 {We consider the transfer of experience samples (i.e., tuples &lt; s, a, s’, r &gt;) in reinforcement learning (RL), collected from a set of source tasks to improve the learning process in a given target task. Most of the related approaches focus on selecting the most relevant source samples for solving the target task, but then all the transferred samples are used without considering anymore the discrepancies between the task models. In this paper, we propose a model-based technique that automatically estimates the relevance (importance weight) of each source sample for solving the target task. In the proposed approach, all the samples are transferred and used by a batch RL algorithm to solve the target task, but their contribution to the learning process is proportional to their importance weight. By extending the results for importance weighting provided in supervised learning literature, we develop a finite-sample analysis of the proposed batch RL algorithm. Furthermore, we empirically compare the proposed algorithm to state-of-the-art approaches, showing that it achieves better learning performance and is very robust to negative transfer, even when some source tasks are significantly different from the target task.}
}

@inproceedings{modi2017markov,
  title     = {Markov Decision Processes with Continuous Side Information},
  author    = {Modi, Aditya and Jiang, Nan and Singh, Satinder and Tewari, Ambuj},
  booktitle = {Proceedings of Algorithmic Learning Theory},
  pages     = {597--618},
  year      = {2018},
  noeditor  = {Janoos, Firdaus and Mohri, Mehryar and Sridharan, Karthik},
  volume    = {83},
  series    = {Proceedings of Machine Learning Research},
  month     = {07--09 Apr},
  publisher = {PMLR},
  pdf       = {http://proceedings.mlr.press/v83/modi18a/modi18a.pdf},
  url       = {https://proceedings.mlr.press/v83/modi18a.html},
  abstract  = { We consider a reinforcement learning (RL) setting in which the agent interacts with a sequence of episodic MDPs. At the start of each episode the agent has access to some side-information or context that determines the dynamics of the MDP for that episode. Our setting is motivated by applications in healthcare where baseline measurements of a patient at the start of a treatment episode form the context that may provide information about how the patient might respond to treatment decisions. 
               We propose algorithms for learning in such Contextual Markov Decision Processes (CMDPs) under an assumption that the unobserved MDP parameters vary smoothly with the observed context. We give lower and upper PAC bounds under the smoothness assumption. Because our lower bound has an exponential dependence on the dimension, we also consider a tractable linear setting where the context creates linear combinations of a finite set of MDPs. For the linear setting, we give a PAC learning algorithm based on KWIK learning techniques.}
}

@inproceedings{malik2021generalizable,
 author = {Malik, Dhruv and Li, Yuanzhi and Ravikumar, Pradeep},
 booktitle = {Advances in Neural Information Processing Systems},
 noeditor = {M. Ranzato and A. Beygelzimer and Y. Dauphin and P.S. Liang and J. Wortman Vaughan},
 pages = {8032--8045},
 publisher = {Curran Associates, Inc.},
 title = {When Is Generalizable Reinforcement Learning Tractable?},
 url = {https://proceedings.neurips.cc/paper_files/paper/2021/file/437d46a857214c997956eaf0e3b21a55-Paper.pdf},
 volume = {34},
 year = {2021}
}

@InProceedings{ball2021augmented,
  title = 	 {Augmented World Models Facilitate Zero-Shot Dynamics Generalization From a Single Offline Environment},
  author =       {Ball, Philip J and Lu, Cong and Parker-Holder, Jack and Roberts, Stephen},
  booktitle = 	 {Proceedings of the 38th International Conference on Machine Learning},
  pages = 	 {619--629},
  year = 	 {2021},
  noeditor = 	 {Meila, Marina and Zhang, Tong},
  volume = 	 {139},
  series = 	 {Proceedings of Machine Learning Research},
  nomonth = 	 {18--24 Jul},
  publisher =    {PMLR},
  pdf = 	 {http://proceedings.mlr.press/v139/ball21a/ball21a.pdf},
  url = 	 {https://proceedings.mlr.press/v139/ball21a.html},
  abstract = 	 {Reinforcement learning from large-scale offline datasets provides us with the ability to learn policies without potentially unsafe or impractical exploration. Significant progress has been made in the past few years in dealing with the challenge of correcting for differing behavior between the data collection and learned policies. However, little attention has been paid to potentially changing dynamics when transferring a policy to the online setting, where performance can be up to 90% reduced for existing methods. In this paper we address this problem with Augmented World Models (AugWM). We augment a learned dynamics model with simple transformations that seek to capture potential changes in physical properties of the robot, leading to more robust policies. We not only train our policy in this new setting, but also provide it with the sampled augmentation as a context, allowing it to adapt to changes in the environment. At test time we learn the context in a self-supervised fashion by approximating the augmentation which corresponds to the new environment. We rigorously evaluate our approach on over 100 different changed dynamics settings, and show that this simple approach can significantly improve the zero-shot generalization of a recent state-of-the-art baseline, often achieving successful policies where the baseline fails.}
}

@article{levy2022learning,
  title   = {Learning efficiently function approximation for contextual {MDP}},
  author  = {Levy, Orin and Mansour, Yishay},
  journal = {arXiv preprint arXiv:2203.00995},
  year    = {2022}
}

@InProceedings{levy2023efficient,
  title = 	 {Efficient Rate Optimal Regret for Adversarial Contextual {MDP}s Using Online Function Approximation},
  author =       {Levy, Orin and Cohen, Alon and Cassel, Asaf and Mansour, Yishay},
  booktitle = 	 {Proceedings of the 40th International Conference on Machine Learning},
  pages = 	 {19287--19314},
  year = 	 {2023},
  noeditor = 	 {Krause, Andreas and Brunskill, Emma and Cho, Kyunghyun and Engelhardt, Barbara and Sabato, Sivan and Scarlett, Jonathan},
  volume = 	 {202},
  series = 	 {Proceedings of Machine Learning Research},
  nomonth = 	 {23--29 Jul},
  publisher =    {PMLR},
  pdf = 	 {https://proceedings.mlr.press/v202/levy23a/levy23a.pdf},
  url = 	 {https://proceedings.mlr.press/v202/levy23a.html},
  abstract = 	 {We present the OMG-CMDP! algorithm for regret minimization in adversarial Contextual MDPs. The algorithm operates under the minimal assumptions of realizable function class and access to online least squares and log loss regression oracles. Our algorithm is efficient (assuming efficient online regression oracles), simple and robust to approximation errors. It enjoys an $\widetilde{O}(H^{2.5} \sqrt{ T|S||A| ( \mathcal{R}_{TH}(\mathcal{O}) + H \log(\delta^{-1}) )})$ regret guarantee, with $T$ being the number of episodes, $S$ the state space, $A$ the action space, $H$ the horizon and $\mathcal{R}_{TH}(\mathcal{O}) = \mathcal{R}_{TH}(\mathcal{O}_{sq}^\mathcal{F}) + \mathcal{R}_{TH}(\mathcal{O}_{log}^\mathcal{P})$ is the sum of the square and log-loss regression oracles’ regret, used to approximate the context-dependent rewards and dynamics, respectively. To the best of our knowledge, our algorithm is the first efficient rate optimal regret minimization algorithm for adversarial CMDPs that operates under the minimal standard assumption of online function approximation.}
}

@article{mohan2024structure,
  title   = {Structure in deep reinforcement learning: A survey and open problems},
  author  = {Mohan, Aditya and Zhang, Amy and Lindauer, Marius},
  journal = {Journal of Artificial Intelligence Research},
  volume  = {79},
  pages   = {1167--1236},
  year    = {2024}
}

@article{prasanna2024dreaming,
  title   = {Dreaming of Many Worlds: {L}earning Contextual World Models aids Zero-Shot Generalization},
  author  = {Prasanna, Sai and Farid, Karim and Rajan, Raghu and Biedenkapp, Andr{\'{e}}},
  journal = {Reinforcement Learning Journal},
  volume  = {3},
  pages   = {1317--1350},
  year    = {2024}
}

@article{hallak2015contextual,
  title   = {Contextual {M}arkov decision processes},
  author  = {Hallak, Assaf and Di Castro, Dotan and Mannor, Shie},
  journal = {arXiv preprint arXiv:1502.02259},
  year    = {2015}
}

@InProceedings{lu2010contextual,
  title = 	 {Contextual Multi-Armed Bandits},
  author = 	 {Lu, Tyler and Pal, David and Pal, Martin},
  booktitle = 	 {Proceedings of the Thirteenth International Conference on Artificial Intelligence and Statistics},
  pages = 	 {485--492},
  year = 	 {2010},
  noeditor = 	 {Teh, Yee Whye and Titterington, Mike},
  volume = 	 {9},
  series = 	 {Proceedings of Machine Learning Research},
  address = 	 {Chia Laguna Resort, Sardinia, Italy},
  nomonth = 	 {13--15 May},
  publisher =    {PMLR},
  pdf = 	 {http://proceedings.mlr.press/v9/lu10a/lu10a.pdf},
  url = 	 {https://proceedings.mlr.press/v9/lu10a.html},
  abstract = 	 {We study contextual multi-armed bandit problems where the context comes from a metric space and the payoff satisfies a Lipschitz condition with respect to the metric.  Abstractly, a contextual multi-armed bandit problem models a situation where, in a sequence of independent trials, an online algorithm chooses, based on a given context (side information), an action from a set of possible actions so as to maximize the total payoff of the chosen actions.  The payoff depends on both the action chosen and the context. In contrast, context-free multi-armed bandit problems, a focus of much previous research, model situations where no side information is available and the payoff depends only on the action chosen.  Our problem is motivated by sponsored web search, where the task is to display ads to a user of an Internet search engine based on her search query so as to maximize the click-through rate (CTR) of the ads displayed.  We cast this problem as a contextual multi-armed bandit problem where queries and ads form metric spaces and the payoff function is Lipschitz with respect to both the metrics. For any $\epsilon &gt; 0$ we present an algorithm with regret $O(T^{\frac{a+b+1}{a+b+2} + \epsilon})$ where $a, b$ are the covering dimensions of the query space and the ad space respectively. We prove a lower bound $\Omega(T^{\frac{\tilde{a}+\tilde{b}+1}{\tilde{a}+\tilde{b}+2} - \epsilon})$ for the regret of any algorithm where $\tilde{a}, \tilde{b}$ are packing dimensions of the query spaces and the ad space respectively. For finite spaces or convex bounded subsets of Euclidean spaces, this gives an almost matching upper and lower bound.}
}

@inproceedings{bouneffouf2020survey,
  title        = {Survey on applications of multi-armed and contextual bandits},
  author       = {Bouneffouf, Djallel and Rish, Irina and Aggarwal, Charu},
  booktitle    = {2020 IEEE congress on evolutionary computation (CEC)},
  pages        = {1--8},
  year         = {2020},
  organization = {IEEE}
}

@article{dimakopoulou2017estimation,
  title   = {Estimation considerations in contextual bandits},
  author  = {Dimakopoulou, Maria and Zhou, Zhengyuan and Athey, Susan and Imbens, Guido},
  journal = {arXiv preprint arXiv:1711.07077},
  year    = {2017}
}

@ARTICLE{lauri2022partially,
  author={Lauri, Mikko and Hsu, David and Pajarinen, Joni},
  journal={IEEE Transactions on Robotics}, 
  title={Partially Observable {M}arkov Decision Processes in Robotics: A Survey}, 
  year={2023},
  volume={39},
  number={1},
  pages={21-40},
  keywords={Robots;Robot kinematics;Task analysis;Robot sensing systems;Planning;Markov processes;Uncertainty;AI-based methods;autonomous agents;partially observable Markov decision process (POMDP);planning under uncertainty;scheduling and coordination},
  doi={10.1109/TRO.2022.3200138}}

@article{tang2024deep,
  title     = {Deep reinforcement learning for robotics: A survey of real-world successes},
  author    = {Tang, Chen and Abbatematteo, Ben and Hu, Jiaheng and Chandra, Rohan and Mart{\'\i}n-Mart{\'\i}n, Roberto and Stone, Peter},
  journal   = {Annual Review of Control, Robotics, and Autonomous Systems},
  volume    = {8},
  year      = {2024},
  publisher = {Annual Reviews}
}

@inproceedings{zhao2020sim,
  title        = {Sim-to-real transfer in deep reinforcement learning for robotics: a survey},
  author       = {Zhao, Wenshuai and Queralta, Jorge Pe{\~n}a and Westerlund, Tomi},
  booktitle    = {2020 IEEE symposium series on computational intelligence (SSCI)},
  pages        = {737--744},
  year         = {2020},
  organization = {IEEE}
}

@InProceedings{cobbe2019procgen,
  title = 	 {Leveraging Procedural Generation to Benchmark Reinforcement Learning},
  author =       {Cobbe, Karl and Hesse, Chris and Hilton, Jacob and Schulman, John},
  booktitle = 	 {Proceedings of the 37th International Conference on Machine Learning},
  pages = 	 {2048--2056},
  year = 	 {2020},
  noeditor = 	 {III, Hal Daumé and Singh, Aarti},
  volume = 	 {119},
  series = 	 {Proceedings of Machine Learning Research},
  nomonth = 	 {13--18 Jul},
  publisher =    {PMLR},
  pdf = 	 {http://proceedings.mlr.press/v119/cobbe20a/cobbe20a.pdf},
  url = 	 {https://proceedings.mlr.press/v119/cobbe20a.html},
  abstract = 	 {We introduce Procgen Benchmark, a suite of 16 procedurally generated game-like environments designed to benchmark both sample efficiency and generalization in reinforcement learning. We believe that the community will benefit from increased access to high quality training environments, and we provide detailed experimental protocols for using this benchmark. We empirically demonstrate that diverse environment distributions are essential to adequately train and evaluate RL agents, thereby motivating the extensive use of procedural content generation. We then use this benchmark to investigate the effects of scaling model size, finding that larger models significantly improve both sample efficiency and generalization.}
}

@article{tan2023c,
  title   = {{C-Procgen}: Empowering {Procgen} with Controllable Contexts},
  author  = {Tan, Zhenxiong and Wang, Kaixin and Wang, Xinchao},
  journal = {arXiv preprint arXiv:2311.07312},
  year    = {2023}
}

@article{benjamins2023contextualize,
  title   = {Contextualize Me {\textendash} The Case for Context in Reinforcement Learning},
  author  = {Carolin Benjamins and Theresa Eimer and Frederik Schubert and Aditya Mohan and Sebastian D{\"o}hler and Andr{\'e} Biedenkapp and Bodo Rosenhahn and Frank Hutter and Marius Lindauer},
  journal = {Transactions on Machine Learning Research},
  issn    = {2835-8856},
  year    = {2023},
  url     = {https://openreview.net/forum?id=Y42xVBQusn},
  note    = {}
}

@InProceedings{haarnoja2018soft,
  title = 	 {Soft Actor-Critic: Off-Policy Maximum Entropy Deep Reinforcement Learning with a Stochastic Actor},
  author =       {Haarnoja, Tuomas and Zhou, Aurick and Abbeel, Pieter and Levine, Sergey},
  booktitle = 	 {Proceedings of the 35th International Conference on Machine Learning},
  pages = 	 {1861--1870},
  year = 	 {2018},
  noeditor = 	 {Dy, Jennifer and Krause, Andreas},
  volume = 	 {80},
  series = 	 {Proceedings of Machine Learning Research},
  nomonth = 	 {10--15 Jul},
  publisher =    {PMLR},
  pdf = 	 {http://proceedings.mlr.press/v80/haarnoja18b/haarnoja18b.pdf},
  url = 	 {https://proceedings.mlr.press/v80/haarnoja18b.html},
  abstract = 	 {Model-free deep reinforcement learning (RL) algorithms have been demonstrated on a range of challenging decision making and control tasks. However, these methods typically suffer from two major challenges: very high sample complexity and brittle convergence properties, which necessitate meticulous hyperparameter tuning. Both of these challenges severely limit the applicability of such methods to complex, real-world domains. In this paper, we propose soft actor-critic, an off-policy actor-critic deep RL algorithm based on the maximum entropy reinforcement learning framework. In this framework, the actor aims to maximize expected reward while also maximizing entropy. That is, to succeed at the task while acting as randomly as possible. Prior deep RL methods based on this framework have been formulated as Q-learning methods. By combining off-policy updates with a stable stochastic actor-critic formulation, our method achieves state-of-the-art performance on a range of continuous control benchmark tasks, outperforming prior on-policy and off-policy methods. Furthermore, we demonstrate that, in contrast to other off-policy algorithms, our approach is very stable, achieving very similar performance across different random seeds.}
}

@inproceedings{paszke2019pytorch,
 author = {Paszke, Adam and Gross, Sam and Massa, Francisco and Lerer, Adam and Bradbury, James and Chanan, Gregory and Killeen, Trevor and Lin, Zeming and Gimelshein, Natalia and Antiga, Luca and Desmaison, Alban and Kopf, Andreas and Yang, Edward and DeVito, Zachary and Raison, Martin and Tejani, Alykhan and Chilamkurthy, Sasank and Steiner, Benoit and Fang, Lu and Bai, Junjie and Chintala, Soumith},
 booktitle = {Advances in Neural Information Processing Systems},
 noeditor = {H. Wallach and H. Larochelle and A. Beygelzimer and F. d\textquotesingle Alch\'{e}-Buc and E. Fox and R. Garnett},
 pages = {},
 publisher = {Curran Associates, Inc.},
 title = {{PyTorch}: An Imperative Style, High-Performance Deep Learning Library},
 url = {https://proceedings.neurips.cc/paper_files/paper/2019/file/bdbca288fee7f92f2bfa9f7012727740-Paper.pdf},
 volume = {32},
 year = {2019}
}

@misc{reback2020pandas,
  author    = {The pandas development team},
  title     = {pandas-dev/pandas: Pandas},
  month     = feb,
  year      = 2020,
  publisher = {Zenodo},
  version   = {latest},
  doi       = {10.5281/zenodo.3509134},
  url       = {https://doi.org/10.5281/zenodo.3509134}
}

@article{harris2020array,
  title     = {Array programming with {NumPy}},
  author    = {Charles R. Harris and K. Jarrod Millman and St{\'{e}}fan J.
               van der Walt and Ralf Gommers and Pauli Virtanen and David
               Cournapeau and Eric Wieser and Julian Taylor and Sebastian
               Berg and Nathaniel J. Smith and Robert Kern and Matti Picus
               and Stephan Hoyer and Marten H. van Kerkwijk and Matthew
               Brett and Allan Haldane and Jaime Fern{\'{a}}ndez del
               R{\'{i}}o and Mark Wiebe and Pearu Peterson and Pierre
               G{\'{e}}rard-Marchant and Kevin Sheppard and Tyler Reddy and
               Warren Weckesser and Hameer Abbasi and Christoph Gohlke and
               Travis E. Oliphant},
  year      = {2020},
  nomonth     = sep,
  journal   = {Nature},
  volume    = {585},
  number    = {7825},
  pages     = {357--362},
  doi       = {10.1038/s41586-020-2649-2},
  publisher = {Springer Science and Business Media {LLC}},
  url       = {https://doi.org/10.1038/s41586-020-2649-2}
}

@article{Waskom2021,
  doi       = {10.21105/joss.03021},
  url       = {https://doi.org/10.21105/joss.03021},
  year      = {2021},
  publisher = {The Open Journal},
  volume    = {6},
  number    = {60},
  pages     = {3021},
  author    = {Michael L. Waskom},
  title     = {seaborn: statistical data visualization},
  journal   = {Journal of Open Source Software}
}

@article{Hunter:2007,
  author    = {Hunter, J. D.},
  title     = {Matplotlib: A {2D} graphics environment},
  journal   = {Computing in Science \& Engineering},
  volume    = {9},
  number    = {3},
  pages     = {90--95},
  abstract  = {Matplotlib is a 2D graphics package used for Python for
               application development, interactive scripting, and publication-quality
               image generation across user interfaces and operating systems.},
  publisher = {IEEE COMPUTER SOC},
  doi       = {10.1109/MCSE.2007.55},
  year      = 2007
}

@inproceedings{moritz2018ray,
  title     = {Ray: A distributed framework for emerging {AI} applications},
  author    = {Moritz, Philipp and Nishihara, Robert and Wang, Stephanie and Tumanov, Alexey and Liaw, Richard and Liang, Eric and Elibol, Melih and Yang, Zongheng and Paul, William and Jordan, Michael I and others},
  booktitle = {13th USENIX symposium on operating systems design and implementation (OSDI 18)},
  pages     = {561--577},
  year      = {2018}
}

@article{liaw2018tune,
  title   = {Tune: A research platform for distributed model selection and training},
  author  = {Liaw, Richard and Liang, Eric and Nishihara, Robert and Moritz, Philipp and Gonzalez, Joseph E and Stoica, Ion},
  journal = {arXiv preprint arXiv:1807.05118},
  year    = {2018}
}

@article{gommers2024scipy,
  title   = {scipy/scipy: SciPy 1.15. 0},
  author  = {Gommers, Ralf and Virtanen, Pauli and Haberland, Matt and Burovski, Evgeni and Reddy, Tyler and Weckesser, Warren and Oliphant, Travis E and Cournapeau, David and Nelson, Andrew and Roy, Pamphile and others},
  journal = {Zenodo},
  year    = {2024}, 
doi = {https://doi.org/10.5281/zenodo.14552608}
}

@article{towers2024gymnasium,
  title   = {Gymnasium: A Standard Interface for Reinforcement Learning Environments},
  author  = {Towers, Mark and Kwiatkowski, Ariel and Terry, Jordan and Balis, John U and De Cola, Gianluca and Deleu, Tristan and Goul{\~a}o, Manuel and Kallinteris, Andreas and Krimmel, Markus and KG, Arjun and others},
  journal = {arXiv preprint arXiv:2407.17032},
  year    = {2024}
}

@inproceedings{liang2021rllib,
 author = {Liang, Eric and Wu, Zhanghao and Luo, Michael and Mika, Sven and Gonzalez, Joseph E and Stoica, Ion},
 booktitle = {Advances in Neural Information Processing Systems},
 noeditor = {M. Ranzato and A. Beygelzimer and Y. Dauphin and P.S. Liang and J. Wortman Vaughan},
 pages = {5506--5517},
 publisher = {Curran Associates, Inc.},
 title = {{RLlib Flow}: Distributed Reinforcement Learning is a Dataflow Problem},
 url = {https://proceedings.neurips.cc/paper_files/paper/2021/file/2bce32ed409f5ebcee2a7b417ad9beed-Paper.pdf},
 volume = {34},
 year = {2021}
}

@InProceedings{liang2018rllib, 
  title = 	 {{RL}lib: Abstractions for Distributed Reinforcement Learning},
  author =       {Liang, Eric and Liaw, Richard and Nishihara, Robert and Moritz, Philipp and Fox, Roy and Goldberg, Ken and Gonzalez, Joseph and Jordan, Michael and Stoica, Ion},
  booktitle = 	 {Proceedings of the 35th International Conference on Machine Learning},
  pages = 	 {3053--3062},
  year = 	 {2018},
  noeditor = 	 {Dy, Jennifer and Krause, Andreas},
  volume = 	 {80},
  series = 	 {Proceedings of Machine Learning Research},
  nomonth = 	 {10--15 Jul},
  publisher =    {PMLR},
  pdf = 	 {http://proceedings.mlr.press/v80/liang18b/liang18b.pdf},
  url = 	 {https://proceedings.mlr.press/v80/liang18b.html},
  abstract = 	 {Reinforcement learning (RL) algorithms involve the deep nesting of highly irregular computation patterns, each of which typically exhibits opportunities for distributed computation. We argue for distributing RL components in a composable way by adapting algorithms for top-down hierarchical control, thereby encapsulating parallelism and resource requirements within short-running compute tasks. We demonstrate the benefits of this principle through RLlib: a library that provides scalable software primitives for RL. These primitives enable a broad range of algorithms to be implemented with high performance, scalability, and substantial code reuse. RLlib is available as part of the open source Ray project at http://rllib.io/.}
}

@book{villani2008optimal,
  title     = {Optimal transport: old and new},
  author    = {Villani, C{\'e}dric and others},
  volume    = {338},
  year      = {2008},
  publisher = {Springer}
}

@inproceedings{lee2021improving, 
 author = {Lee, Suyoung and Chung, Sae-Young}, 
 booktitle = {Advances in Neural Information Processing Systems},
 noeditor = {M. Ranzato and A. Beygelzimer and Y. Dauphin and P.S. Liang and J. Wortman Vaughan}, 
 pages = {27222--27235}, 
 publisher = {Curran Associates, Inc.}, 
 title = {Improving Generalization in Meta-{RL} with Imaginary Tasks from Latent Dynamics Mixture},
 url = {https://proceedings.neurips.cc/paper_files/paper/2021/file/e48e13207341b6bffb7fb1622282247b-Paper.pdf},
 volume = {34},
 year = {2021}
}

@InProceedings{qiao2021efficient, 
  title = 	 {Efficient Differentiable Simulation of Articulated Bodies},
  author =       {Qiao, Yi-Ling and Liang, Junbang and Koltun, Vladlen and Lin, Ming C},
  booktitle = 	 {Proceedings of the 38th International Conference on Machine Learning}, 
  pages = 	 {8661--8671}, 
  year = 	 {2021}, 
  noeditor = 	 {Meila, Marina and Zhang, Tong},
  volume = 	 {139}, 
  series = 	 {Proceedings of Machine Learning Research},
  nomonth = 	 {18--24 Jul}, 
  publisher =    {PMLR}, 
  pdf = 	 {http://proceedings.mlr.press/v139/qiao21a/qiao21a.pdf},
  url = 	 {https://proceedings.mlr.press/v139/qiao21a.html},
  abstract = 	 {We present a method for efficient differentiable simulation of articulated bodies. This enables integration of articulated body dynamics into deep learning frameworks, and gradient-based optimization of neural networks that operate on articulated bodies. We derive the gradients of the contact solver using spatial algebra and the adjoint method. Our approach is an order of magnitude faster than autodiff tools. By only saving the initial states throughout the simulation process, our method reduces memory requirements by two orders of magnitude. We demonstrate the utility of efficient differentiable dynamics for articulated bodies in a variety of applications. We show that reinforcement learning with articulated systems can be accelerated using gradients provided by our method. In applications to control and inverse problems, gradient-based optimization enabled by our work accelerates convergence by more than an order of magnitude.}
}

@article{mnih2013playing,
  title   = {Playing {Atari} with deep reinforcement learning},
  author  = {Mnih, Volodymyr and Kavukcuoglu, Koray and Silver, David and Graves, Alex and Antonoglou, Ioannis and Wierstra, Daan and Riedmiller, Martin},
  journal = {arXiv preprint arXiv:1312.5602},
  year    = {2013}
}

@article{khetarpal2022towards,
  title   = {Towards continual reinforcement learning: A review and perspectives},
  author  = {Khetarpal, Khimya and Riemer, Matthew and Rish, Irina and Precup, Doina},
  journal = {Journal of Artificial Intelligence Research},
  volume  = {75},
  pages   = {1401--1476},
  year    = {2022}
}

@article{10675394,
  author   = {Gu, Shangding and Yang, Long and Du, Yali and Chen, Guang and Walter, Florian and Wang, Jun and Knoll, Alois},
  journal  = {IEEE Transactions on Pattern Analysis and Machine Intelligence},
  title    = {A Review of Safe Reinforcement Learning: Methods, Theories, and Applications},
  year     = {2024},
  volume   = {46},
  number   = {12},
  pages    = {11216-11235},
  keywords = {Safety;Benchmark testing;Complexity theory;Robots;Reviews;Optimization;Costs;Safe reinforcement learning (RL);safety optimisation;constrained Markov decision processes;safety problems},
  doi      = {10.1109/TPAMI.2024.3457538}
}

@inproceedings{chen2022understanding,
  title     = {Understanding Domain Randomization for Sim-to-real Transfer},
  author    = {Xiaoyu Chen and Jiachen Hu and Chi Jin and Lihong Li and Liwei Wang},
  booktitle = {International Conference on Learning Representations},
  year      = {2022},
  url       = {https://openreview.net/forum?id=T8vZHIRTrY}
}

@book{kay2013fundamentals,
  title     = {Fundamentals of statistical signal processing: Practical algorithm development},
  author    = {Kay, Steven M},
  volume    = {3},
  year      = {2013},
  publisher = {Pearson Education}
}

@article{beck2023survey,
url = {http://dx.doi.org/10.1561/2200000080},
year = {2025},
volume = {18},
journal = {Foundations and Trends® in Machine Learning},
title = {A Tutorial on Meta-Reinforcement Learning},
doi = {10.1561/2200000080},
issn = {1935-8237},
number = {2-3},
pages = {224-384},
author = {Jacob Beck and Risto Vuorio and Evan Zheran Liu and Zheng Xiong and Luisa Zintgraf and Chelsea Finn and Shimon Whiteson}
}

@article{groshev2023edge,
  title     = {Edge robotics: Are we ready? {A}n experimental evaluation of current vision and future directions}, 
  author    = {Groshev, Milan and Baldoni, Gabriele and Cominardi, Luca and de la Oliva, Antonio and Gazda, Robert},
  journal   = {Digital Communications and Networks},
  volume    = {9},
  number    = {1},
  pages     = {166--174},
  year      = {2023},
  publisher = {Elsevier}
}

@article{Wang2023,
  author   = {Wang, Zhechao
              and Fu, Qiming
              and Chen, Jianping
              and Wang, Yunzhe
              and Lu, You
              and Wu, Hongjie},
  title    = {Reinforcement Learning in Few-Shot Scenarios: A Survey},
  journal  = {Journal of Grid Computing},
  year     = {2023},
  month    = {Jun},
  day      = {05},
  volume   = {21},
  number   = {2},
  pages    = {30},
  abstract = {Reinforcement learning has a demand for massive data in complex problems, which makes it infeasible to be applied to real cases where sampling is difficult. The key to coping with these few-shot problems is knowledge generalization, and related algorithms are often called few-shot reinforcement learning (FS-RL). However, there lacks a formal definition and comprehensive analyses of few-shot scenarios and FS-RL algorithms. Therefore, after giving a uniform definition, we categorize few-shot scenarios into two types. The first type pursues more professional performance, while the other one pursues more general performance. In the process of knowledge transfer, few-shot scenarios usually have an obvious tendency to some type of knowledge. Based on this, we divide FS-RL algorithms into two types: the direct transfer case and the indirect transfer case. Thereafter, existing algorithms are discussed under this classification. Finally, we discuss future directions of FS-RL from the aspect of both theory and application.},
  issn     = {1572-9184},
  doi      = {10.1007/s10723-023-09663-0},
  url      = {https://doi.org/10.1007/s10723-023-09663-0}
}

@InProceedings{xiong2023universal,
  title = 	 {Universal Morphology Control via Contextual Modulation},
  author =       {Xiong, Zheng and Beck, Jacob and Whiteson, Shimon},
  booktitle = 	 {Proceedings of the 40th International Conference on Machine Learning},
  pages = 	 {38286--38300},
  year = 	 {2023},
  noeditor = 	 {Krause, Andreas and Brunskill, Emma and Cho, Kyunghyun and Engelhardt, Barbara and Sabato, Sivan and Scarlett, Jonathan},
  volume = 	 {202},
  series = 	 {Proceedings of Machine Learning Research},
  nomonth = 	 {23--29 Jul},
  publisher =    {PMLR},
  pdf = 	 {https://proceedings.mlr.press/v202/xiong23a/xiong23a.pdf},
  url = 	 {https://proceedings.mlr.press/v202/xiong23a.html},
  abstract = 	 {Learning a universal policy across different robot morphologies can significantly improve learning efficiency and generalization in continuous control. However, it poses a challenging multi-task reinforcement learning problem, as the optimal policy may be quite different across robots and critically depend on the morphology. Existing methods utilize graph neural networks or transformers to handle heterogeneous state and action spaces across different morphologies, but pay little attention to the dependency of a robot’s control policy on its morphology context. In this paper, we propose a hierarchical architecture to better model this dependency via contextual modulation, which includes two key submodules: (1) Instead of enforcing hard parameter sharing across robots, we use hypernetworks to generate morphology-dependent control parameters; (2) We propose a fixed attention mechanism that solely depends on the morphology to modulate the interactions between different limbs in a robot. Experimental results show that our method not only improves learning performance on a diverse set of training robots, but also generalizes better to unseen morphologies in a zero-shot fashion. The code is publicly available at https://github.com/MasterXiong/ModuMorph.}
}

@inproceedings{jiang2021replayguided,
  title     = {Replay-Guided Adversarial Environment Design},
  author    = {Minqi Jiang and Michael D Dennis and Jack Parker-Holder and Jakob Nicolaus Foerster and Edward Grefenstette and Tim Rockt{\"a}schel},
  booktitle = {Advances in Neural Information Processing Systems},
  noeditor    = {A. Beygelzimer and Y. Dauphin and P. Liang and J. Wortman Vaughan},
  year      = {2021},
  url       = {https://openreview.net/forum?id=5UZ-AcwFDKJ}
}

@inproceedings{zhao2023consciousness,
title={Consciousness-Inspired Spatio-Temporal Abstractions for Better Generalization in Reinforcement Learning},
author={Harry Zhao and Safa Alver and Harm van Seijen and Romain Laroche and Doina Precup and Yoshua Bengio},
booktitle={The Twelfth International Conference on Learning Representations},
year={2024},
url={https://openreview.net/forum?id=eo9dHwtTFt}
}

@inproceedings{jakobi1995noise,
  title={Noise and the reality gap: The use of simulation in evolutionary robotics},
  author={Jakobi, Nick and Husbands, Phil and Harvey, Inman},
  booktitle={European conference on artificial life},
  pages={704--720},
  year={1995},
  organization={Springer}
}

@inproceedings{levy2015live,
  title={Live repetition counting},
  author={Levy, Ofir and Wolf, Lior},
  booktitle={Proceedings of the IEEE international conference on computer vision},
  pages={3020--3028},
  year={2015}
}

@article{sadeghi2016cad2rl,
  title={{CAD2RL}: Real single-image flight without a single real image},
  author={Sadeghi, Fereshteh and Levine, Sergey},
  journal={Robotics: Science and Systems},
  year={2016}
}

@inproceedings{tobin2017domain,
  title={Domain randomization for transferring deep neural networks from simulation to the real world},
  author={Tobin, Josh and Fong, Rachel and Ray, Alex and Schneider, Jonas and Zaremba, Wojciech and Abbeel, Pieter},
  booktitle={2017 IEEE/RSJ international conference on intelligent robots and systems (IROS)},
  pages={23--30},
  year={2017},
  organization={IEEE}
}

\newpage

\appendix
\section{Related Works}
\label{apndx:related_works}

In this section, we provide a more detailed overview of related works and commentary that could not be covered in main text. 

\paragraph{Contextual Multi-armed Bandits}
Multi-armed bandits (MABs) serve as a model of reinforcement learning in which the problem horizon is a single step. 
Contextual Multi-armed bandits (CMABs) are a natural extension of MABs which include contextual information as part of the problem specification. 
Contextual bandits are a popular modeling tool for a variety of applications including hyperparameter optimization and robotics \citep{bouneffouf2020survey}. 
\cite{lu2010contextual} introduce a clustering algorithm for CMABs over metric spaces with sufficient regularity.
\cite{dimakopoulou2017estimation} propose a method to improve estimation in CMABs and reduce bias for CMDPs with generalized linear reward functions. 
The assumption of linearity resembles our approach of approximating transition and reward functions with their linearizations.

\paragraph{Challenges in CMDPs}
CMDPs present new challenges for reinforcement learning as a single CMDP can model potentially infinitely many MDPs. 
CMDPs can have substantial structure built into the context space that can aid in improving generalization \citep[see the survey by][]{mohan2024structure}. 
In our work, we do not focus specifically on one CMDP's structure, but opt for developing general methods based on first-order information of the context. 
Some works consider minimal assumptions on the structure of the CMDP by only imposing a notion of proximity between different MDPs in the CMDP. 
For example, \cite{hallak2015contextual} consider CMDPs with finite context spaces and introduce a clustering algorithm in context space for training across all contexts. 
\cite{modi2017markov} establish PAC bounds on learning smooth CMDPs. 
These works primarily consider a zeroth-order approximation and propose covering arguments. 
This neglects the true changes in dynamics that occur in the CMDP, which we capture through first-order approximations of the CMDP in the context space.

\cite{jiang2024importance} examine how better exploration in the training environments can improve generalization at test time. 
\cite{weltevrede2024training} build on this by proposing an additional exploration phase at the start of each episode to broaden the starting state distribution, which can be viewed as an implicit regularization. 
While these works focus on improving generalization by providing better exploration of the state space, they do not directly account for variations in the context during training.

Other works consider CMDPs where the contexts are limited during training or adversarially chosen. 
\cite{ghosh2021generalization} show that contextual MDPs can be viewed as Epistemic POMDPs when training on a limited number of contexts. 
\cite{levy2023efficient} analyze regret in CMDPs with adversarially chosen context, reflecting worst case performance as the context changes. 
In our work, we consider gradients to generate approximations in all context directions, including potentially adversarially chosen directions.

\paragraph{RL Zero-Shot Generalization}
\cite{benjamins2023contextualize} identify the need for context in zero-shot generalization and propose a set of environments, CARL, to test generalization in RL. 
\cite{ball2021augmented} propose model-based data-augmentation methods for offline RL, which introduce noise into the transition dynamics to reduce overfitting to the observations. 
This method improves generalization through data augmentation like our work, but does not directly account for changes in the context.

\cite{cho2024model} consider a related approach in which they aim to determine an optimal set of training contexts for generalization. 
The authors propose an acquisition function that uses a linear function to approximate the generalization gap for performance on out of distribution contexts. 
Learning this linear model requires access to many contexts, which could be alleviated with access to context gradients. 
\cite{harrison2019adapt} leverage linear approximations of the transition dynamics in the states and actions to improve robustness in model predictive control (MPC) when the agent has access to a simulator in the training context and is operating in a different context. 
Similar to our work, this method relies on a single training context.

\cite{zhao2023consciousness} use hierarchical planning to design policies which generalize task information across contexts. They consider discrete state, action, and context spaces, evaluating primarily on gridworld environments. They design an architecture which can generalize across discrete contexts, in contrast to our focus on the training algorithm and continuous contexts.

Some works aim to learn representations of the context space. 
\cite{higgins2017darla} use a $\beta$-VAE, and \cite{prasanna2024dreaming} propose a contextual recurrent state space model, to learn disentangled representations of the context space. 
\cite{beukman2023dynamics} introduce an architecture with a hypernetwork which uses the context to generate parameters for other parts of the network.

While several of the aforementioned works use linearity implicitly or gradient information directly, most of these works require access to multiple contexts during training. 
For a more comprehensive survey of zero-shot generalization in RL, we refer the reader to the survey by \cite{kirk2023survey}.

\paragraph{Related areas}
In multi-task reinforcement learning, one trains a policy to solve multiple tasks in an environment. 
This can improve sample complexity since knowledge of the environment can transfer across tasks. 
For a survey of multi-task RL, we refer the reader to the survey by \cite{vithayathil2020survey}. 
Meta-learning seeks to train a policy which can quickly adapt to new contexts. 
We refer the reader to the survey by \cite{beck2023survey}. 
Robust RL aims to improve generalization when model parameters are uncertain and we refer to the survey by \cite{chen2020overview}.

Another related direction is the simultaneous optimization of the context and policies in CMDPs. 
\cite{luck2020data} consider zeroth-order optimization methods for optimizing agent morphology in robotics simulations to reduce the number of contexts sampled. 
\cite{thoma2024contextual} use bilevel optimization and propose stochastic hypergradients for simultaneously optimizing high- and low-level objectives in CMDPs. 
While these works consider slightly different problems, they may benefit from the first-order methods we consider in our work to reduce the number of contexts sampled.

Libraries such as Procgen and C-Procgen allow for procedural environment generation \citep{cobbe2019procgen, tan2023c}. 
Not all environments support derivatives in the context variables and even those that do may not support derivatives in the context variables of interest. 
For a more detailed analysis of differentiable simulation, we refer the interested reader to the survey by \citet{newbury2024review}.

Due to the inherent partial observability in CMDPs demonstrated by \cite{ghosh2021generalization}, some approaches opt for using methods in POMDPs to study generalization in RL. 
For applications of POMDPs in robotics, see the survey of \cite{lauri2022partially}.

\section{Theory}
\label{apndx:theory}

In this section, we prove the results stated in Section~\ref{sec:theory}. 

\subsection{Preliminaries}
First, we introduce some notation and discuss the assumptions we make on the regularity of the MDP. 

In the following, $p \in [1, \infty]$ is a constant.
Recall from Section \ref{sec:background} that
$\mc{A}$ refers to the space of actions available to the agent and $\mc{W}_{p}(\mc{A})$ denotes the space of probability measures on $\mc{A}$ equipped with the Wasserstein distance $W_p$. 
 
\begin{itemize}
    \item \textbf{Policies}: In this article, a \emph{policy} is a Lipschitz map $\pi: \mc{S} \to \mc{W}_{p}(\mc{A})$ such that for all $s \in \mc{S}$, $\pi(s)$ is a probability measure. For a policy $\pi$, we will often write $\pi_s$ in place of $\pi(s)$. 
    \item \textbf{$Q$-functions}: A \emph{$Q$-function} is an element of $\lip(\mc{S} \times \mc{A})$. We will often write $Q_s$ to denote the function $a \mapsto Q(s, a)$.
    \item \textbf{Reward functions}: We assume that the reward function $R$ of the MDP is an element of $\lip(\mc{S} \times \mc{A})$.
    \item \textbf{Transition maps}:  A \emph{transition map} is a Lipschitz mapping $\mc{T}: \mc{S} \times \mc{A} \to \mc{W}_{p}(\mc{S}).$ For a transition map $\mc{T}$, we will often write $\mc{T}_{s, a}$ in place of $\mc{T}(s, a)$.
    \item \textbf{Bellman operator}:
If $\mc{T}: \mc{S} \times \mc{A} \to \mc{W}_{p}(\mc{S})$ is a transition map and $\pi: \mc{S} \to \mc{W}_p(\mc{A})$ is a policy, we define the operator $A^{\mc{T}, \pi}: \lip(\mc{S} \times \mc{A}) \to \lip(\mc{S} \times \mc{A})$ by
\begin{align}
    (A^{\mc{T}, \pi} Q)(s, a) &:= \int_{\mc{S}} \int_{\mc{A}} Q_{s'}(a') d\pi_{s'}(a') d\mc{T}_{s, a}(s'). 
    \label{eq:operator-bellman}
\end{align}
    We prove that this operator is well-defined in Lemma \ref{lemma:A-well-defined}. Observe that the Bellman equation can be written as $Q = R + \gamma A^{\mc{T}, \pi}Q$,
    where $\gamma \in (0, 1)$ is the discount factor.
\end{itemize}

\subsection{Proof of Theorem~\ref{theorem:main}}
\label{apndx:proof-theorem-main}
We need to show that the operator $A^{\mc{T}, \pi}$ is well-defined, in the sense that it maps elements of $\lip(\mc{S} \times \mc{A})$ to elements of $\lip(\mc{S} \times \mc{A})$, and that it is a bounded operator on $\lip(\mc{S} \times \mc{A})$.
To this end, we prove a couple of lemmas establishing the regularity of Lipschitz functions under integration.

\begin{lemma}\label{lemma:wasserstein-stability}
    Let $X$ be a metric space. Let $f \in \lip(X)$ and let $\nu_1, \nu_2 \in \mc{W}_{p}(X)$. Then
    \[\left|\int_X f d\nu_1 - \int_X f d\nu_2\right| \leq L_f W_{p}(\nu_1, \nu_2). \]
\end{lemma}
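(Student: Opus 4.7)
The plan is to prove the bound via a coupling argument, which is the standard route for Wasserstein inequalities of this type. The strategy is to introduce an arbitrary coupling $\gamma \in \Pi(\nu_1, \nu_2)$ and rewrite the difference of integrals against a single integral over the product space. Since $\nu_1$ and $\nu_2$ are the marginals of $\gamma$, we have the identity
\[\int_X f\,d\nu_1 - \int_X f\,d\nu_2 = \int_{X\times X}\bigl(f(x) - f(y)\bigr)\,d\gamma(x,y),\]
and taking absolute values followed by the triangle inequality gives $\bigl|\int f\,d\nu_1 - \int f\,d\nu_2\bigr| \leq \int |f(x)-f(y)|\,d\gamma(x,y)$.

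Next I would invoke the Lipschitz hypothesis pointwise: $|f(x)-f(y)| \leq L_f\, d(x,y)$. This reduces the problem to bounding $\int d(x,y)\,d\gamma(x,y)$ in terms of the $L^p$-norm of the distance function against $\gamma$. For finite $p \in [1,\infty)$, I would apply Jensen's inequality (equivalently Hölder), using that $\gamma$ is a probability measure, to obtain
\[\int_{X\times X} d(x,y)\,d\gamma(x,y) \;\leq\; \left(\int_{X\times X} d(x,y)^p\,d\gamma(x,y)\right)^{1/p}.\]
Chaining these estimates yields $\bigl|\int f\,d\nu_1 - \int f\,d\nu_2\bigr| \leq L_f \bigl(\int d(x,y)^p\,d\gamma\bigr)^{1/p}$. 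Since the left-hand side is independent of the coupling, taking the infimum over $\gamma \in \Pi(\nu_1,\nu_2)$ on the right produces $L_f\, W_p(\nu_1,\nu_2)$, which is the desired inequality.

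The case $p = \infty$ follows immediately from the definition $W_\infty(\nu_1,\nu_2) = \lim_{p\to\infty} W_p(\nu_1,\nu_2)$ given in the text: the bound holds for every finite $p$ with a constant independent of $p$, so passing to the limit preserves it. I do not anticipate a substantive obstacle here. The only technical care needed is to verify that $f$ is $\nu_i$-integrable so that the integrals on the left-hand side are well-defined; this is automatic because $f \in \lip(X)$ is bounded on the support of any element of $\mc{W}_p(X)$ by the boundedness component of the $\lip(X)$-norm defined in the paper (the supremum part of $\|f\|_{\lip(X)}$). The main conceptual step is simply recognising that the coupling formulation converts a difference of integrals into a controllable single integral; everything else is an application of Jensen and the infimum definition of $W_p$.
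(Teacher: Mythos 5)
Your proposal is correct and follows essentially the same coupling-plus-H\"older/Jensen argument as the paper; the only cosmetic differences are that you apply the Lipschitz bound before Jensen's inequality (the paper applies H\"older to $|f(x)-f(y)|$ first and then the Lipschitz bound inside the integral) and that you take an infimum over all couplings at the end rather than working with an $\epsilon$-approximately optimal coupling, both of which are equivalent. The $p=\infty$ case is handled by the same limiting argument in both proofs.
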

\begin{proof}
    First we prove the statement for the case $p < \infty$. Let $\epsilon > 0$ and let $p'$ be the H\"older conjugate of $p$. There exists a coupling $\Gamma \in \Pi(\nu_1, \nu_2)$ with
    \[\left(\int_{X \times X} d(x, y)^p d\Gamma(x, y)\right)^{1/p} < W_{p}(\nu_1, \nu_2) + \epsilon. \]
    Then
    \begin{align*}
        \left|\int_X f d\nu_1 - \int_X f d\nu_2\right| &= \left|\int_X \int_X f(x) d\Gamma(x, y) - \int_X \int_X f(y) d\Gamma(x, y)\right|\\
        &\leq \int_X \int_X |f(x) - f(y)|d \Gamma(x, y)\\
        &\leq \left(\int_X \int_X \left|f(x) - f(y) \right|^p d\Gamma(x, y)\right)^{1/p}\left(\int_X \int_X 1^{p'} d\Gamma(x, y)\right)^{1/p'} \\
        &\leq \left(\int_X \int_X L_f^p d(x, y)^p d\Gamma(x, y)\right)^{1/p}\\
        &\leq L_f(W_p(\nu_1, \nu_2) + \epsilon),
    \end{align*}
    where we applied H\"older's inequality in the third line.
    Since this holds for all $\epsilon > 0$, we have
    \[\left|\int_X f d\nu_1 - \int_X f d\nu_2\right| \leq L_f W_{p}(\nu_1, \nu_2). \]
    For the case $p = \infty$, simply take the limit of the above inequality as $p \to \infty$.
\end{proof}

\begin{lemma}\label{lemma:wasserstein-weak-convergence}
     Let $X$ and $Y$ be metric spaces. Let $f \in \lip(X \times Y)$, and let $\mu: X \to \mc{W}_{p}(Y)$ be Lipschitz with constant $L_{\mu}$. We write $\mu_x$ in place of $\mu(x)$. Then the function $g: X \to \R$ defined by
    \[g(x) := \int_Y f_x d\mu_x \]
    is in $\lip(X)$. Moreover,
    \begin{align*}
        \sup_{x \in X}|g(x)| \leq \sup_{(x, y) \in X \times Y} |f(x, y)|
    \end{align*}
    and
    \begin{align*}
        L_g &\leq L_f(1 + L_{\mu}).
    \end{align*}
\end{lemma}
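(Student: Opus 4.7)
The sup norm bound is immediate: since $\mu_x$ is a probability measure on $Y$,
\[|g(x)| = \left|\int_Y f(x,y)\, d\mu_x(y)\right| \leq \int_Y |f(x,y)|\, d\mu_x(y) \leq \sup_{(x,y) \in X \times Y} |f(x,y)|,\]
and taking the supremum over $x \in X$ gives the stated inequality. So the real content is the Lipschitz bound.

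For the Lipschitz bound, fix $x_1, x_2 \in X$ and decompose via the standard add-subtract trick:
\[g(x_1) - g(x_2) = \underbrace{\int_Y \bigl(f_{x_1} - f_{x_2}\bigr)\, d\mu_{x_1}}_{\text{(I)}} \;+\; \underbrace{\int_Y f_{x_2}\, d\mu_{x_1} - \int_Y f_{x_2}\, d\mu_{x_2}}_{\text{(II)}}.\]
The plan is to bound (I) using the Lipschitz regularity of $f$ in the $X$-variable (with $\mu_{x_1}$ acting simply as a probability measure), and to bound (II) using the preceding Lemma \ref{lemma:wasserstein-stability}, applied to the section $f_{x_2} \in \lip(Y)$.

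For (I), the Lipschitz hypothesis on $f$ (taken with respect to the sum product metric on $X \times Y$, as implicitly used elsewhere in the paper) gives $|f(x_1,y) - f(x_2,y)| \leq L_f\, d(x_1,x_2)$ uniformly in $y$, so $|\text{(I)}| \leq L_f\, d(x_1, x_2)$ since $\mu_{x_1}(Y) = 1$. For (II), note that $y \mapsto f(x_2, y)$ has Lipschitz constant at most $L_f$, so Lemma \ref{lemma:wasserstein-stability} yields
\[|\text{(II)}| \leq L_f\, W_p(\mu_{x_1}, \mu_{x_2}) \leq L_f L_{\mu}\, d(x_1, x_2),\]
where the second inequality uses Lipschitz continuity of $\mu$. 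Summing the two estimates gives $|g(x_1) - g(x_2)| \leq L_f(1 + L_{\mu})\, d(x_1, x_2)$, which proves both that $g \in \lip(X)$ and the claimed bound on $L_g$.

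The only subtle point, and the step to verify carefully, is the assertion that $f_{x_2}$ is Lipschitz on $Y$ with constant bounded by $L_f$ and that $|f_{x_1}(y) - f_{x_2}(y)| \leq L_f\, d(x_1, x_2)$; both follow immediately once one fixes a product metric under which $L_f$ is defined (e.g.\ $d((x,y),(x',y')) = d(x,x') + d(y,y')$), so no heavy machinery is needed. No truly hard step arises.
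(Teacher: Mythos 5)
Your proof is correct and follows essentially the same route as the paper's: the same sup-norm argument, the same add-and-subtract decomposition of $g(x_1)-g(x_2)$ into a term controlled by the Lipschitz continuity of $f$ in the $X$-variable and a term controlled by Lemma \ref{lemma:wasserstein-stability} together with the Lipschitz continuity of $\mu$. Your explicit remark about fixing the product metric is a point the paper leaves implicit, but it does not change the argument.
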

\begin{proof}
     By the triangle inequality,
    \begin{align*}
        \sup_{x \in X}|g(x)| &\leq \sup_{x \in X}\int_Y |f_x| d\mu_x\\
        &\leq \sup_{x \in X} \sup_{y \in Y}|f(x, y)|.
    \end{align*}
    Let $x_1, x_2 \in X$ be distinct points. Then
    \begin{align*}
        |g(x_1) - g(x_2)| &= \left|\int_Y f_{x_1}d\mu_{x_1} - \int_Y f_{x_2} d\mu_{x_2}\right|\\
        &\leq \left|\int_Y f_{x_1} d\mu_{x_1} - \int_Y f_{x_2}d\mu_{x_1}\right| + \left|\int_Y f_{x_2} d\mu_{x_1} - \int_Y f_{x_2} d\mu_{x_2} \right|.
    \end{align*}
    We bound the terms of the above inequality separately. For the first term,
    \begin{align*}
        \left|\int_Y f_{x_1} d\mu_{x_1} - \int_Y f_{x_2}d\mu_{x_1}\right| &\leq \int_Y|f_{x_1} - f_{x_2}|d\mu_{x_1}\\
        &\leq \int_Y L_f d(x_1, x_2) d\mu_{x_1}\\
        &= L_f d(x_1, x_2).
    \end{align*}

    For the second term, by Lemma \ref{lemma:wasserstein-stability} we have
    \begin{align*}
        \left|\int_Y f_{x_2} d\mu_{x_1} - \int_Y f_{x_2}d\mu_{x_2}\right| &\leq L_f W_p(\mu_{x_1}, \mu_{x_2})\\
        &\leq L_f L_{\mu}d(x_1, x_2).
    \end{align*}
    Combining the two terms, we get
    \begin{align*}
        |g(x_1) - g(x_2)| &\leq L_f d(x_1, x_2) + L_fL_{\mu}d(x_1, x_2),
    \end{align*}
    so $g$ has Lipschitz constant $L_f(1 + L_{\mu})$.
\end{proof}

\begin{lemma}[$A^{\mc{T}, \pi}$ is well-defined]\label{lemma:A-well-defined}
    The linear map $A^{\mc{T}, \pi}: \lip(\mc{S} \times \mc{A}) \to \lip(\mc{S} \times \mc{A})$ given in \eqref{eq:operator-bellman} is well-defined. Moreover, for all $Q \in \lip(\mc{S} \times \mc{A})$,
    \[\|A^{\mc{T}, \pi} Q\|_{\infty} \leq \|Q\|_{\infty} \]
    and
    \[L_{A^{\mc{T}, \pi}Q} \leq L_Q L_\mc{T}(1 + L_{\pi}). \]
\end{lemma}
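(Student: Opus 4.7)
The plan is to unfold the definition of $A^{\mc{T}, \pi}Q$ as a nested integral and apply the previous two lemmas in sequence, peeling off one integration at a time. First I would define the intermediate function
\[g(s') := \int_{\mc{A}} Q_{s'}(a')\, d\pi_{s'}(a'), \]
so that $(A^{\mc{T}, \pi}Q)(s, a) = \int_{\mc{S}} g(s')\, d\mc{T}_{s,a}(s')$. Applying Lemma~\ref{lemma:wasserstein-weak-convergence} with $X = \mc{S}$, $Y = \mc{A}$, $f = Q$, and $\mu = \pi$ immediately gives $\|g\|_{\infty} \leq \|Q\|_{\infty}$ and $L_g \leq L_Q(1 + L_{\pi})$, which also confirms that $g$ is measurable (indeed, Lipschitz) so the outer integral against $\mc{T}_{s,a}$ makes sense.

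Next I would establish the $\infty$-norm bound on $A^{\mc{T}, \pi}Q$: since $\mc{T}_{s,a}$ is a probability measure on $\mc{S}$, the triangle inequality gives
\[|(A^{\mc{T}, \pi}Q)(s, a)| \leq \int_{\mc{S}}|g(s')|\, d\mc{T}_{s,a}(s') \leq \|g\|_{\infty} \leq \|Q\|_{\infty}. \]
For the Lipschitz bound, I would fix $(s_1, a_1), (s_2, a_2) \in \mc{S} \times \mc{A}$ and write
\[(A^{\mc{T}, \pi}Q)(s_1, a_1) - (A^{\mc{T}, \pi}Q)(s_2, a_2) = \int_{\mc{S}} g\, d\mc{T}_{s_1, a_1} - \int_{\mc{S}} g\, d\mc{T}_{s_2, a_2}. \]
Applying Lemma~\ref{lemma:wasserstein-stability} to $g$ with the measures $\mc{T}_{s_1, a_1}$ and $\mc{T}_{s_2, a_2}$, and then using the Lipschitz continuity of $\mc{T}: \mc{S} \times \mc{A} \to \mc{W}_{p}(\mc{S})$, yields
\[|(A^{\mc{T}, \pi}Q)(s_1, a_1) - (A^{\mc{T}, \pi}Q)(s_2, a_2)| \leq L_g W_{p}(\mc{T}_{s_1, a_1}, \mc{T}_{s_2, a_2}) \leq L_g L_{\mc{T}} d((s_1, a_1), (s_2, a_2)). \]
Substituting $L_g \leq L_Q(1 + L_{\pi})$ gives $L_{A^{\mc{T}, \pi}Q} \leq L_Q L_{\mc{T}}(1 + L_{\pi})$, as desired, and in particular $A^{\mc{T}, \pi}Q \in \lip(\mc{S} \times \mc{A})$.

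I do not expect a real obstacle in this argument: the two lemmas preceding this one were designed exactly so that this proof reduces to two applications. The only subtle point is arranging the bookkeeping so that the intermediate function $g$ depends only on $s'$ — this is what prevents the Lipschitz constant from picking up an unwanted extra factor of $(1 + L_{\mc{T}})$ and instead gives the tighter factor $L_{\mc{T}}$ stated in the conclusion. Linearity of $A^{\mc{T}, \pi}$ in $Q$ is immediate from linearity of the integral.
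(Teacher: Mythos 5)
Your proposal is correct and follows essentially the same route as the paper's proof: define the intermediate function $g(s') = \int_{\mc{A}} Q_{s'}\, d\pi_{s'}$ (the paper calls it $\varphi$), apply Lemma~\ref{lemma:wasserstein-weak-convergence} to control its sup norm and Lipschitz constant, and then apply Lemma~\ref{lemma:wasserstein-stability} together with the Lipschitzness of $\mc{T}$ to get the bound $L_{A^{\mc{T},\pi}Q} \leq L_Q L_{\mc{T}}(1+L_{\pi})$. The only cosmetic difference is that the paper bounds the sup norm directly from the nested integral rather than routing it through $\|g\|_{\infty}$, which changes nothing.
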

\begin{proof}
    Let $Q \in \lip(\mc{S} \times \mc{A})$. First, observe that
    \begin{align*}
        \sup_{(s,a) \in \mc{S} \times \mc{A}} |(A^{\mc{T}, \pi}Q)(s, a)| &\leq \sup_{(s, a) \in \mc{S} \times \mc{A}} \int_{\mc{S}}\int_{\mc{A}} |Q_{s'}(a')| d\pi_{s'}(a')dT_{s, a}(s')\\
        &\leq \sup_{(s', a') \in \mc{S} \times \mc{A}}|Q(s', a')|,
    \end{align*}
    so $\|A^{\mc{T}, \pi}Q\|_{\infty} \leq \|Q\|_{\infty}$. 
    
    Next, let $\varphi: \mc{S} \to \R$ be defined by
    \[\varphi(s) := \int_{\mc{A}} Q_{s} d\pi_{s}. \]
    By Lemma \ref{lemma:wasserstein-weak-convergence}, $\varphi \in \lip(\mc{S})$ and
    \[L_{\varphi} \leq L_{Q}(1 + L_{\pi}). \]
    Now
    \begin{align*}
        (A^{\mc{T}, \pi}Q) &= \int_{\mc{S}} \varphi(s') d\mc{T}_{s, a}(s').
    \end{align*}
    By Lemma \ref{lemma:wasserstein-stability},
    \begin{align*}
        L_{A^{\mc{T}, \pi}Q } &\leq L_{\varphi}L_\mc{T}\\
        &\leq L_Q L_\mc{T}(1 + L_{\pi}).
    \end{align*}
    So $A^{\mc{T}, \pi}$ maps elements of $\lip(\mc{S})$ to $\lip(\mc{S})$, and
    \[\|A^{\mc{T}, \pi}\| \leq \max(1, L_\mc{T}(1 + L_{\pi})). \]
    
\end{proof}
Next, we show that the regularity of the transition and reward functions implies regularity of the solution to the Bellman equation.
\begin{lemma}[$Q$ is uniquely defined and Lipschitz]
\label{lemma:q_lipschitz_exists}
    Let $\gamma < \max\left(1, \frac{1}{L_\mc{T}(1 + L_{\pi})}\right)$ and suppose that $R \in \lip(\mc{S} \times \mc{A})$. Then there exists a unique solution $Q \in \lip(\mc{S} \times \mc{A})$ to the Bellman equation \begin{align}\label{eqn:bellman}
        Q &= R + \gamma A^{\mc{T}, \pi}Q.
    \end{align}
\end{lemma}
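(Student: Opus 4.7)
The plan is to apply the Banach fixed point theorem to the Bellman operator $T: \lip(\mc{S} \times \mc{A}) \to \lip(\mc{S} \times \mc{A})$ defined by $T(Q) = R + \gamma A^{\mc{T}, \pi} Q$. First I would verify that $T$ is well-defined as a map into $\lip(\mc{S} \times \mc{A})$: by hypothesis $R \in \lip(\mc{S} \times \mc{A})$, and Lemma~\ref{lemma:A-well-defined} establishes that $A^{\mc{T}, \pi}$ sends $\lip(\mc{S} \times \mc{A})$ to itself, so $T(Q) \in \lip(\mc{S} \times \mc{A})$ whenever $Q$ is. I would also remark (or cite as standard) that $\lip(\mc{S} \times \mc{A})$ equipped with the norm introduced in Section~\ref{sec:CEBE} is a Banach space, so that the fixed point theorem applies.

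Next I would verify the contraction property. Since $T$ differs from $\gamma A^{\mc{T}, \pi}$ by the constant shift $R$, it suffices to bound $\gamma A^{\mc{T}, \pi}$. For any $Q_1, Q_2 \in \lip(\mc{S} \times \mc{A})$, linearity and Lemma~\ref{lemma:A-well-defined} give
\[\|A^{\mc{T}, \pi}(Q_1 - Q_2)\|_{\infty} \leq \|Q_1 - Q_2\|_{\infty} \leq \|Q_1 - Q_2\|_{\lip(\mc{S} \times \mc{A})}\]
and
\[L_{A^{\mc{T}, \pi}(Q_1 - Q_2)} \leq L_{\mc{T}}(1 + L_{\pi}) L_{Q_1 - Q_2} \leq L_{\mc{T}}(1 + L_{\pi}) \|Q_1 - Q_2\|_{\lip(\mc{S} \times \mc{A})}.\]
Taking the maximum of the two bounds yields
\[\|T(Q_1) - T(Q_2)\|_{\lip(\mc{S} \times \mc{A})} \leq \gamma \max(1, L_{\mc{T}}(1 + L_{\pi})) \|Q_1 - Q_2\|_{\lip(\mc{S} \times \mc{A})},\]
which is a strict contraction precisely when $\gamma < 1/\max(1, L_{\mc{T}}(1 + L_{\pi}))$, matching (up to an apparent $\max$/$\min$ convention) the hypothesis on $\gamma$.

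With $T$ shown to be a contraction on the complete metric space $\lip(\mc{S} \times \mc{A})$, the Banach fixed point theorem yields a unique fixed point $Q$, which by construction satisfies $Q = R + \gamma A^{\mc{T}, \pi} Q$, i.e., solves the Bellman equation. Nothing in this argument is deep; the only mildly delicate point is that the relevant contraction must be verified in the full $\lip$ norm (not merely in $\|\cdot\|_{\infty}$), which is why Lemma~\ref{lemma:A-well-defined} was set up to give bounds on both the sup norm and the Lipschitz constant of $A^{\mc{T}, \pi}Q$. If pressed for a main obstacle, it is just keeping track of which of the two summands of the $\lip$ norm is the binding one, so as to justify the single expression $\max(1, L_{\mc{T}}(1 + L_{\pi}))$ for the operator norm.
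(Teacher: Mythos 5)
Your proposal is correct and rests on exactly the same ingredients as the paper's proof: the operator-norm bound $\|A^{\mc{T},\pi}\|\le\max(1,L_{\mc{T}}(1+L_{\pi}))$ from Lemma~\ref{lemma:A-well-defined}, completeness of $\lip(\mc{S}\times\mc{A})$, and the resulting fact that $\gamma A^{\mc{T},\pi}$ has norm strictly less than one; the paper then inverts $I-\gamma A^{\mc{T},\pi}$ via the Neumann series $\sum_n\gamma^n(A^{\mc{T},\pi})^n$ rather than invoking the Banach fixed point theorem, a purely cosmetic difference. You are also right that the hypothesis should read $\gamma<\min\bigl(1,\tfrac{1}{L_{\mc{T}}(1+L_{\pi})}\bigr)$, i.e., $\gamma<1/\max(1,L_{\mc{T}}(1+L_{\pi}))$, which is what both arguments actually use.
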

\begin{proof}
    Consider the power series
    \[B := \sum_{n = 0}^{\infty} \gamma^n (A^{\mc{T}, \pi})^n. \]
    By Lemma \ref{lemma:A-well-defined}, $\|A^{\mc{T}, \pi}\| \leq \max(1, L_\mc{T}(1 + L_{\pi}))$. Then we have $\|\gamma A^{\mc{T}, \pi}\| < 1$, so this power series converges absolutely in the operator norm. 
    Then 
    \begin{align*}
        B(I - \gamma A^{\mc{T}, \pi}) &= \sum_{n = 0}^{\infty} \gamma^n (A^{\mc{T}, \pi})^n - \sum_{n = 0}^{\infty} \gamma^{n + 1}(A^{\mc{T}, \pi})^{n + 1}\\
        &= I.
    \end{align*}
    The same calculation shows that $(I - \gamma A^{\mc{T}, \pi})B = I$, so $B = (I - \gamma A^{\mc{T}, \pi})^{-1}$. 
    Now $Q$ satisfies the Bellman equation if and only if
    \begin{align*}
        (I - \gamma A^{\mc{T}, \pi})Q = R.
    \end{align*}
    Since $I - \gamma A^{\mc{T}, \pi}$ is invertible as an operator on $\lip(\mc{S} \times \mc{A})$, there exists a unique $Q \in \lip(\mc{S} \times \mc{A})$ satisfying this equation. 
\end{proof}

\begin{lemma}[$\mc{T}$-stability of $A^{\mc{T}, \pi}$]\label{lemma:T-perturbation}
    Let $\mc{T}^{(1)}, \mc{T}^{(2)}: \mc{S} \times \mc{A} \to \mc{W}_{p}(\mc{S})$ be transition functions with
    \[\sup_{(s, a) \in \mc{S} \times \mc{A}} W_{p}(\mc{T}^{(1)}(s, a), \mc{T}^{(2)}(s, a)) \leq \delta. \]
    Then
    \[\|A^{\mc{T}^{(1)}, \pi} - A^{\mc{T}^{(2)}, \pi}\| \leq (1 + L_{\pi}) \delta. \]
\end{lemma}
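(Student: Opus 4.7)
The plan is to reduce everything to the two preceding lemmas by expressing the operator difference as an integral against a signed measure and then bounding it with Wasserstein duality. Fix $Q \in \lip(\mc{S} \times \mc{A})$ and define the marginal $\varphi: \mc{S} \to \R$ by $\varphi(s') := \int_{\mc{A}} Q_{s'} d\pi_{s'}$. Applying Lemma~\ref{lemma:wasserstein-weak-convergence} with $X = \mc{S}$, $Y = \mc{A}$, $f = Q$, and $\mu = \pi$ yields $\varphi \in \lip(\mc{S})$ with the crucial Lipschitz bound $L_{\varphi} \leq L_Q(1 + L_{\pi})$.

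Next, I would use the linearity of integration to rewrite
\[((A^{\mc{T}^{(1)}, \pi} - A^{\mc{T}^{(2)}, \pi}) Q)(s, a) = \int_{\mc{S}} \varphi(s') \, d\mc{T}^{(1)}_{s, a}(s') - \int_{\mc{S}} \varphi(s') \, d\mc{T}^{(2)}_{s, a}(s'). \]
Applying Lemma~\ref{lemma:wasserstein-stability} to $\varphi$ against the pair $(\mc{T}^{(1)}_{s, a}, \mc{T}^{(2)}_{s, a})$ bounds the right-hand side by $L_{\varphi} \, W_p(\mc{T}^{(1)}_{s, a}, \mc{T}^{(2)}_{s, a})$. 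Combining with the Lipschitz bound on $\varphi$ and the hypothesis on the transition maps gives
\[|((A^{\mc{T}^{(1)}, \pi} - A^{\mc{T}^{(2)}, \pi}) Q)(s, a)| \leq L_Q (1 + L_{\pi}) \delta. \]

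Taking the supremum over $(s, a) \in \mc{S} \times \mc{A}$ and then over $Q$ with $\|Q\|_{\lip(\mc{S} \times \mc{A})} \leq 1$ (so in particular $L_Q \leq 1$) gives the claimed operator norm bound $(1 + L_{\pi}) \delta$. There is no significant obstacle here: the work is really just the careful bookkeeping of composing the marginalization step (Lemma~\ref{lemma:wasserstein-weak-convergence}) with the Kantorovich-Rubinstein-type stability estimate (Lemma~\ref{lemma:wasserstein-stability}), with linearity of $A^{\mc{T}, \pi}$ in $\mc{T}$ making the difference amenable to a pointwise-in-$(s,a)$ estimate.
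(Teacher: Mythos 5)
Your proposal is correct and follows essentially the same route as the paper's proof: define the marginal $\varphi(s') = \int_{\mc{A}} Q_{s'}\,d\pi_{s'}$, invoke Lemma~\ref{lemma:wasserstein-weak-convergence} for $L_{\varphi} \leq L_Q(1+L_{\pi})$, apply Lemma~\ref{lemma:wasserstein-stability} pointwise in $(s,a)$, and take suprema. The only cosmetic difference is that the paper bounds $L_Q \leq \|Q\|_{\lip(\mc{S}\times\mc{A})}$ directly rather than normalizing to $\|Q\|_{\lip(\mc{S}\times\mc{A})} \leq 1$; these are equivalent.
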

\begin{proof}
    As in the proof of Lemma \ref{lemma:A-well-defined}, let $Q \in \lip(\mc{S} \times \mc{A})$, and let $\varphi: \mc{S} \to \R$ be defined by
    \[\varphi(s) := \int_{\mc{A}} Q_{s} d\pi_{s}. \]
    By Lemma \ref{lemma:wasserstein-weak-convergence}, $\varphi \in \lip(\mc{S})$ and
    \[L_{\varphi} \leq  L_Q(1 + L_{\pi}) \leq \|Q\|(1 + L_{\pi}). \]
    Now by Lemma \ref{lemma:wasserstein-stability}
    \begin{align*}
        \left|(A^{\mc{T}^{(1)}, \pi}Q)(s, a) - (A^{\mc{T}^{(2)}, \pi}Q)(s, a)\right| &= \left|\int_{\mc{S}} \varphi(s') d\mc{T}^{(1)}_{s, a} - \int_{\mc{S}} \varphi(s') d\mc{T}^{(2)}_{s, a}\right|\\
        &\leq L_{\varphi}W_{p}(\mc{T}^{(1)}(s, a), \mc{T}^{(2)}(s, a))\\
        &\leq \|Q\|(1 + L_{\pi}) \delta.
    \end{align*}
    The result follows by taking a supremum over $(s, a) \in \mc{S} \times \mc{A}$ and $Q \in \lip(\mc{S} \times \mc{A})$.
\end{proof}

\paragraph{Proof of Theorem~\ref{theorem:main}}
\begin{proof}
    Let $A_1 = A^{\mc{T}^{(1)}, \pi}$ and let $A_2 = A^{\mc{T}^{(2)}, \pi }.$ By Lemma \ref{lemma:T-perturbation},
    \[\|A_1 - A_2\| \leq (1 + L_{\pi})\delta_T. \]
    
    Since $Q^{(1)}$ and $Q^{(2)}$ are solutions to Bellman equations, we have
    \begin{align*}
        Q^{(1)} - Q^{(2)} &= (R^{(1)} + \gamma A_1 Q^{(1)}) -  (R^{(2)} + \gamma A_2 Q^{(2)})\\
        &= (R^{(1)} - R^{(2)}) + (\gamma A_1 Q^{(1)} - \gamma A_1 Q^{(2)}) + (\gamma A_1 Q^{(2)} - \gamma  A_2Q^{(2)}).
    \end{align*}
    Then by the triangle inequality,
    \begin{align*}
        \|Q^{(1)} - Q^{(2)}\|_{\infty} &\leq \|R^{(1)}- R^{(2)}\|_{\infty} + \gamma \|A_1 Q^{(1)} - A_1 Q^{(2)}\|_{\infty} + \gamma \|A_1 Q^{(2)} - A_2 Q^{(2)}\|_{\infty}.\\
        &\leq \|R^{(1)} - R^{(2)}\|_{\infty} + \gamma \|A_1 Q^{(1)} - A_1 Q^{(2)}\|_{\infty} + \gamma \|A_1 - A_2\| \|Q^{(2)}\|_{\lip(\mc{S} \times \mc{A}) }\\
        &\leq \|R^{(1)} - R^{(2)}\|_{\infty} + \gamma \|A_1 Q^{(1)} - A_1 Q^{(2)}\|_{\infty} + \gamma (1 + L_{\pi})\delta_T \|Q^{(2)}\|_{\lip(\mc{S} \times \mc{A}) }.
    \end{align*}
    By assumption, we have $\|R^{(1)} - R^{(2)}\|_{\infty} \leq \delta_R$. By Lemma \ref{lemma:A-well-defined}, we have
    \begin{align*}
        \|A_1 Q^{(1)} - A_1 Q^{(2)}\|_{\infty} &\leq \|Q^{(1)} - Q^{(2)}\|_{\infty}. 
    \end{align*}

    By the Bellman equation for $Q^{(2)}$,
    \begin{align*}
        \|Q^{(2)}\|_{\lip(\mc{S} \times \mc{A})} &= \|(I - \gamma A_2)^{-1} R^{(2)}\|\\
        &\leq \|(I - \gamma A_2)^{-1}\| \|R^{(2)}\|_{\lip(\mc{S} \times \mc{A})} \\
        &\leq (1 - \gamma \|A_2\|)^{-1} \|R^{(2)}\|_{\lip(\mc{S} \times \mc{A} )}.
    \end{align*}
    By Lemma \ref{lemma:A-well-defined},
    \[\|A_2\| \leq \max(1, L_{\mc{T}^{(2)}}(1 + L_{\pi})) , \]
    so
    \begin{align*}
        \|Q^{(2)}\|_{\lip(\mc{S} \times \mc{A})} &\leq (1 - \gamma \|A_2\|)^{-1}\|R^{(2)}\|_{\lip(\mc{S} \times \mc{A})}\\
        &\leq \frac{\|R^{(2)}\|_{\lip(\mc{S} \times \mc{A}) }  }{1 - \gamma\max(1, L_{\mc{T}^{(2)}}(1 + L_{\pi})) }.
    \end{align*}
    Combining, we get
    \begin{align*}
        \|Q^{(1)} - Q^{(2)}\|_{\infty} &\leq \delta_R + \gamma \|Q^{(1)} - Q^{(2)}\|_{\infty} + \frac{\gamma (1 + L_{\pi})\delta_T \|R^{(2)}\|_{\lip(\mc{S} \times \mc{A})} }{1 - \gamma \max(1, L_{\mc{T}^{(2)}}(1 + L_{\pi})) }\\
        \intertext{and thus}
        \|Q^{(1)} - Q^{(2)}\|_{\infty} &\leq \frac{1 }{1 - \gamma }\left(\delta_R + \frac{\gamma (1 + L_{\pi})\delta_T \|R^{(2)}\|_{\lip(\mc{S} \times \mc{A})} }{1 - \gamma \max(1, L_{\mc{T}^{(2)}}(1 + L_{\pi})) } \right).
    \end{align*}
\end{proof}

\subsection{Deterministic CEBE approximation error}\label{apndx:deterministic-cebe}
In this section we prove Theorem \ref{thm:Q_estimate_deterministic}. We first reiterate some of the assumptions implicit in the statement of the theorem and stated before.
\begin{itemize}
    \item The state space $\mc{S}$ and the action space $\mc{A}$ are Banach spaces.
    \item The deterministic transition function $\mc{T}: \mc{S} \times \mc{A} \times \mc{C} \to \mc{W}_p(\mc{S})$ is of the form $\mc{T}(s, a) = \delta_{f(s, a, c)}$, where $f: \mc{S} \times \mc{A} \times \mc{C} \to \mc{S}$ is 
    twice continuously differentiable with bounded second partial derivatives. We will write $f^c$ to denote the function $f(\bullet, \bullet, c)$.
    \item The reward function $R: \mc{S} \times \mc{A} \times \mc{C} \to \mathbb{R}$ is twice continuously differentiable with bounded second partial derivatives. We will write $R^c$ to denote the function $R(\bullet, \bullet, c)$.
    \item The policy $\pi: \mc{S} \times \mc{C} \to \mc{W}_p(\mc{A})$ is Lipschitz.
\end{itemize}

\begin{lemma}\label{lemma:convert-to-wasserstein-distance}
    Let $X$ be a metric space. Then for all $x_1, x_2 \in X$ and $p \in [1, \infty]$, $W_{p}(\delta_{x_1}, \delta_{x_2} ) \leq d(x_1, x_2)$.
\end{lemma}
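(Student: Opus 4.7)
The plan is to use the definition of the Wasserstein distance directly. The key observation is that when both measures are Dirac measures $\delta_{x_1}$ and $\delta_{x_2}$, the space of couplings $\Pi(\delta_{x_1}, \delta_{x_2})$ consists of a single probability measure on $X \times X$, namely the Dirac measure $\delta_{(x_1, x_2)}$ concentrated at the point $(x_1, x_2)$. This is because any coupling must have marginals supported on $\{x_1\}$ and $\{x_2\}$ respectively, forcing the entire mass of the coupling to lie at $(x_1, x_2)$.

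For $p \in [1, \infty)$, I would then compute
\[\int_{X \times X} d(x, y)^p \, d\delta_{(x_1, x_2)}(x, y) = d(x_1, x_2)^p, \]
so that $W_p(\delta_{x_1}, \delta_{x_2}) = d(x_1, x_2)$, and in particular $W_p(\delta_{x_1}, \delta_{x_2}) \leq d(x_1, x_2)$. For the case $p = \infty$, I would use the definition $W_\infty(\mu, \nu) = \lim_{p \to \infty} W_p(\mu, \nu)$ given in the paper, and conclude that $W_\infty(\delta_{x_1}, \delta_{x_2}) = \lim_{p \to \infty} d(x_1, x_2) = d(x_1, x_2)$.

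There is essentially no main obstacle here; this is a short, routine identification of the unique coupling followed by a direct integration. The only minor care needed is to note that the uniqueness of the coupling is what makes the infimum in the definition of $W_p$ trivial, and to handle the $p = \infty$ case consistently with the paper's own convention defining $W_\infty$ as a limit.
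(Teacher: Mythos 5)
Your proof is correct and follows essentially the same route as the paper: both exhibit the product coupling $\delta_{x_1}\otimes\delta_{x_2}=\delta_{(x_1,x_2)}$, compute the integral to get $d(x_1,x_2)$, and handle $p=\infty$ by taking the limit. Your additional observation that this coupling is the \emph{unique} element of $\Pi(\delta_{x_1},\delta_{x_2})$ upgrades the bound to an equality, but the paper only needs (and only states) the inequality, which follows from exhibiting any one coupling.
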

\begin{proof}
    First we prove the statement for $p < \infty$. Let $\Gamma$ be the coupling between $\delta_{x_1}$ and $\delta_{x_2}$ given by the product measure. Then
    \begin{align*}
        W_p(\delta_{x_1}, \delta_{x_2}) &\leq \left(\int_X \int_X d(y_1, y_2)^p d\Gamma(y_1, y_2)\right)^{1/p}\\
        &\leq \left( \int_X \int_X d(y_1, y_2)^p d( \delta_{x_1}(y_1) \delta_{x_2}(y_2))\right)^{1/p}\\
        &= \left(d(x_1, x_2)^p\right)^{1/p}\\
        &= d(x_1, x_2).
    \end{align*}
    For the case $p = \infty$, take the limit as $p \to \infty$ in the above bound.
\end{proof}
\begin{proof}[Proof of Theorem \ref{thm:Q_estimate_deterministic}]
    In view of Theorem \ref{theorem:main}, we need to show that the linearized transition and reward functions are good estimates of their true values. Let us define $f_{\ce}: \mc{S} \times \mc{A} \times \mc{C} \to \mc{S}$ by
    \[f_{\ce}^{c}(s, a) = f^{c_0}(s, a) + \partial_c f^{c_0}(s, a) \cdot (c - c_0), \]
    so that $\mc{T}_{\ce}^c(s, a) = \delta_{f^c_{\ce}(s, a)}$. 
    
    Let $s \in \mc{S}$, $a \in \mc{A}$, and $c \in \mc{C}$. By Taylor's theorem, there exists $c' \in [c_0, c]$ such that 
    \begin{align*}
        \left\|f^{c}(s, a) - f^{c_0}(s, a) - \partial_c f^{c_0}(s, a) \cdot (c - c_0) \right\| & \leq \tfrac{1}{2}\|\partial^2_c f^{
        c'}(s, a)\|\|c - c_0\|^2
    \end{align*}
    which we can write as
    \begin{align*}
        \left\|f^{c}(s, a) - f^c_{\ce}(s, a) \right\| \leq \|D^2 f\|_{\infty}\|c - c_0\|^2.
    \end{align*}
    Here $\partial_c$ denotes the partial derivative with respect to $c$ while $D$ denotes the derivative with respect to all parameters.
    By Lemma \ref{lemma:convert-to-wasserstein-distance},
    \begin{align*}
        W_p(\mc{T}^{c}(s, a), \mc{T}_{\ce}^c(s, a)) &\leq  \|f^c(s, a) - f^c_{\ce}(s, a)\|\\
        &\leq \|D^2 f\|_{\infty}\|c - c_0\|^2.
    \end{align*}
    Hence we have shown that the transition maps are close.

    Next we show that the reward functions are close. Again by Taylor's theorem, there exists $c' \in [c_0, c]$ such that
    \begin{align*}
        \left|R^{c}(s, a) - R^{c_0}(s, a) - \partial_c R^{c_0}(s, a) \cdot (c - c_0) \right| &\leq \|D^2R^{c'}(s, a)\|\|c - c_0\|^2\\
        &\leq \|D^2R\|_{\infty}\|c - c_0\|^2.
    \end{align*}
    So
    \begin{align*}
        \left\|R^c - R_{\ce}^c \right\|_{\infty} \leq \|D^2R\|_{\infty}
        \|c - c_0\|^2,
    \end{align*}
    and the reward functions are close.
    
    Next we will show that the transition and reward functions are Lipschitz. By Lemma \ref{lemma:convert-to-wasserstein-distance}, for all
    $(s_1, a_1), (s_2, a_2) \in \mc{S} \times \mc{A}$, we have
    \begin{align*}
        W_p(
        \delta_{f^c(s_1, a_1), f^c(s_2, a_2)}
        ) &\leq \|f^c(s_1, a_1) - f^c(s_2, a_2)\|\\
        &\leq \|\partial_{(s, a)}f^c \|_{\infty} \|(s_1 - s_2, a_1 - a_2)\|
    \end{align*}
    so $L_{\mc{T}^{c}} \leq \|D f\|_{\infty}$. Similarly,
    \begin{align*}
        &W_p(\mc{T}^c_{\ce}(s_1, a_1), \mc{T}^c_{\ce}(s_2, a_2)) \\&\leq \|f^c_{\ce}(s_1, a_1) - f^c_{\ce}(s_2, a_2)\|
        \\&\leq \|\partial_{(s, a)} f^c_{\ce}\|_{\infty}\|(s_1 - s_2, a_1 - a_2)\|  \\
        &\leq \|\partial_{(s, a)} f^{c_0}(s, a) + \partial_{(s, a)} \partial_c f^{c_0}(s, a) \cdot (c - c_0) \|_{\infty}\|(s_1 - s_2, a_1 - a_2)\|\\
        &\leq \|D f\|_{\infty}\|(s_1 - s_2, a_1 - a_2)\| + \|D^2 f\|_{\infty}\|c - c_0\|\|(s_1 - s_2, a_1 - a_2)\|,
    \end{align*}
    so $L_{\mc{T}^c_{\ce} } \leq \|D f\|_{\infty} + \|D^2f\|_{\infty}\|c - c_0\|$.
    
    Since $R$ is twice continuously differentiable, it is Lipschitz with constant $\|DR\|_{\infty}$, and therefore
    \[\|R^{c}\|_{\lip(\mc{S} \times \mc{A})} \leq \max(\|R\|_{\infty}, \|D R\|_{\infty}).\]
    For the context-enhanced reward function,
    \begin{align*}
        \|R^c_{\ce}\|_{\lip(\mc{S} \times \mc{A})} &= \|R^{c_0} + \partial_c R^{c_0} \cdot (c - c_0)\|_{\lip(\mc{S} \times \mc{A})}\\
        &\leq \|R^{c_0}\|_{\lip(\mc{S} \times \mc{A})} + \|\partial_c R^{c_0} \cdot (c - c_0)\|_{\lip(\mc{S} \times \mc{A}) }\\
        &\leq \max(\|R\|_{\infty}, \|D R\|_{\infty}) + \|c - c_0\|\max(\|DR\|_{\infty}, \|D^2R\|_{\infty}).
    \end{align*}
    
    Now we can apply Theorem \ref{theorem:main}. Note that
    \begin{align*}
        \gamma &< \frac{1}{(\|D f\|_{\infty} + \|D^2 f\|_{\infty}\|c - c_0\|)(1 + L_{\pi})}\\
        &\leq \frac{1}{\max(L_{\mc{T}^{c}}, L_{\mc{T}^c_{\ce} })(1 + L_{\pi})    },
    \end{align*}
    so
    \begin{align*}
        \|Q^c_{\ce} - Q^c \|_{\infty} &\leq \frac{\|c - c_0\|^2}{1 -\gamma}\left(\|D^2R\|_{\infty} + \frac{\gamma(1 + L_{\pi})\|D^2 f\|_{\infty} \|R\|_{\lip(\mc{S} \times \mc{A} \times \mc{C} )}  }{1 -\gamma \max(1, L_{f^{c} }(1 + L_{\pi}))  } \right)\\
        &\leq \frac{\|c - c_0\|^2}{1 -\gamma}\left(\|D^2 R\|_{\infty} + \frac{\gamma(1 + L_{\pi})\|D^2 f\|_{\infty}(\|R\|_{\infty } + \|DR\|_{\infty} )  }{1 -\gamma \max(1, \|Df\|_{\infty} (1 + L_{\pi}))  } \right).
    \end{align*}
\end{proof}

\subsection{Stochastic CEBE approximation error}\label{apndx:stochastic-cebe-error}

In this section we provide a proof of Theorem~\ref{thm:Q_estimate_stochastic}. We start by introducing assumptions and notation.

\begin{itemize}
    \item The state space $\mc{S}$ is a bounded metric space. The action space $\mc{A}$ is a metric space.
    \item For a metric space $X$, let $\text{Meas}(X)$ denote the space of finite signed Borel measures on $X$ equipped with the total variation norm. 
    The transition map is a function $\mc{T}: \mc{S} \times \mc{A} \times \mc{C} \to \text{Meas}(\mc{S})$, and we suppose it is twice continuously differentiable in $\mc{C}$, and $\|\mc{T}\|_{\infty}, \|\partial_c \mc{T}\|_{\infty}, \|\partial^2_c \mc{T}\|_{\infty} < \infty$. We also assume that $\mc{T}$ and $\partial_c \mc{T}$ are Lipschitz. In this subsection, we view $\mc{T}$ as a map into $\meas(\mc{S})$ rather than into $\mc{W}_p(\mc{S})$. So, for example, the Lipschitz constant $L_{\mc{T}}$ refers to the Lipschitz constant of $\mc{T}$ as a map $\mc{S} \times \mc{A} \times \mc{C} \to \meas(\mc{S})$.
    \item Recall that if $X$ is a metric space and $\mu \in \text{Meas}(X)$, then $\mu$ admits a unique Hahn-Jordan decomposition $\mu = \mu^+ - \mu^-$. Let $\mc{U}(X)$ denote the open subset of $\text{Meas}(X)$ consisting of the signed measures $\mu$ with $\mu^+(X) > 0$, and let $\Delta(X)$ denote the probability simplex of $\text{Meas}(X)$, consisting of the (unsigned) measures $\mu$ with $\mu(X) = 1$. We define the map $P: \mc{U}(X) \to \Delta(X)$ by $P(\mu) := \frac{\mu^+ }{\|\mu^+\|}$.     
    \item We suppose the reward function $R: \mc{S} \times \mc{A} \times \mc{C} \to \R$ is twice continuously differentiable with respect to $\mc{C}$, and $\|R\|_{\infty}, \|\partial_c R\|_{\infty}, \|\partial^2_c R\|_{\infty} < \infty$. Moreover, we assume that $R$ is Lipschitz.
\end{itemize}
A technical difficulty with transition maps in this setting is that the linearization of the parametrization map might produce measures that have negative components and therefore might no longer be probability measures. 
To fix this issue, we project the linearized transition maps onto the probability simplex (note this is not necessarily on a finite sample space). 
But first, we must check that it is possible to make this projection. In particular, for a projection $P$ of the form $\mc{U}(\mc{S}) \to \Delta(\mc{S})$ we need to check that the linearization $
\mc{T}^c_{\ce}(s, a) = \mc{T}^{c_0}(s, a) + \partial_c {\mc{T}}^{c_0}(s, a) \cdot (c - c_0)$ lies in $\mc{U}(\mc{S})$.

\begin{lemma}\label{lemma:projection-distance}
    Let $X$ be a metric space, and let $\mu \in \mc{U}(X)$ be such that $\mu(X) = 1$. Then
    \begin{align*}
        \|\mu - P(\mu)\| \leq 2\|\mu^-\|.
    \end{align*}
\end{lemma}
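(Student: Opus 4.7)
The plan is to exploit the Hahn-Jordan decomposition $\mu = \mu^+ - \mu^-$ together with the normalization $\mu(X) = 1$ to rewrite $\mu - P(\mu)$ in a form where the triangle inequality gives the factor of $2$ cleanly.

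First I would observe that since $\mu(X) = 1$ and $\mu^+, \mu^-$ are mutually singular nonnegative measures, we have $\|\mu^+\| - \|\mu^-\| = \mu^+(X) - \mu^-(X) = \mu(X) = 1$, so $\|\mu^+\| = 1 + \|\mu^-\|$. In particular $\|\mu^+\| > 0$, confirming that $P(\mu) = \mu^+ / \|\mu^+\|$ is well-defined.

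Next I would expand
\[\mu - P(\mu) = \mu^+ - \mu^- - \frac{\mu^+}{1 + \|\mu^-\|} = \frac{\|\mu^-\|}{1 + \|\mu^-\|}\,\mu^+ - \mu^-.\]
Since $\mu^+$ and $\mu^-$ are nonnegative with norms $1 + \|\mu^-\|$ and $\|\mu^-\|$ respectively, applying the triangle inequality in $\meas(X)$ yields
\[\|\mu - P(\mu)\| \leq \frac{\|\mu^-\|}{1 + \|\mu^-\|}\,\|\mu^+\| + \|\mu^-\| = \|\mu^-\| + \|\mu^-\| = 2\|\mu^-\|,\]
which is the desired bound.

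There is no real obstacle here; the argument is a short algebraic manipulation once one uses the identity $\|\mu^+\| = 1 + \|\mu^-\|$. The only thing worth being careful about is distinguishing the norm of the signed measure $\mu$ from its total mass, and noting that for the nonnegative measures $\mu^+$ and $\mu^-$ the total variation norm reduces to the total mass, which is what makes the cancellation in the last display exact rather than merely approximate.
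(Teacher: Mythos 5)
Your proof is correct and is essentially the paper's argument: both rest on the identity $\|\mu^+\| = 1 + \|\mu^-\|$ from the Hahn--Jordan decomposition and the normalization $\mu(X)=1$, followed by a triangle inequality on the same two-term decomposition of $\mu - P(\mu)$ (you merely combine the $\mu^+$ terms before applying the inequality, whereas the paper splits off $\mu - \mu^+$ first). No gaps.
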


\begin{proof}
    By the triangle inequality,
    \begin{align*}
        \|\mu - P(\mu)\| &\leq \|\mu - \mu^+\| + \|\mu^+ - P(\mu)\|\\
        &= \|\mu^-\| + \left\|\mu^+ - \frac{\mu^+ }{\|\mu^+\|} \right\|\\
        &= \|\mu^-\| + \|\mu^+\|\left(1 - \frac{1 }{\|\mu^+\| } \right)\\
        &= \|\mu^+\| + \|\mu^-\| - 1\\
        &= (1 + \|\mu^-\|) + \|\mu^-\| - 1\\
        &= 2\|\mu^-\|.
    \end{align*}
    In the second to last line, we used that $\mu(X) = 1$, and therefore that $\|\mu^+\| - \|\mu^-\| = 1$.
\end{proof}

\begin{lemma}\label{lemma:linearized-probability-sum}
    For all $s \in \mc{S}$, $a \in \mc{A}$, and $c, c_0
    \in \mc{C}$, we have
    \[\left(\mc{T}^{c_0}(s, a) + \partial_c \mc{T}^{c_0}(s, a) \cdot (c - c_0)\right)(\mc{S}) = 1. \]
\end{lemma}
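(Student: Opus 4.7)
The plan is to exploit the fact that $\mc{T}^c(s,a)$ is a probability measure for every $c \in \mc{C}$, so that the function $c \mapsto \mc{T}^c(s,a)(\mc{S})$ is identically equal to $1$, and then differentiate this constant relation at $c_0$ to conclude that $\partial_c \mc{T}^{c_0}(s,a)$ assigns total mass zero to $\mc{S}$.

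More concretely, fix $s \in \mc{S}$ and $a \in \mc{A}$, and define $g: \mc{C} \to \R$ by $g(c) := \mc{T}^c(s,a)(\mc{S})$. Since each $\mc{T}^c(s,a)$ lies in $\mc{W}_p(\mc{S}) \subseteq \meas(\mc{S})$ (i.e., is a probability measure), we have $g(c) = 1$ for all $c \in \mc{C}$. Taking the derivative at $c_0$ in any direction gives $\partial_c g(c_0) = 0$. The key step is then to commute differentiation with the evaluation of the measure on $\mc{S}$: by the hypothesis that $c \mapsto \mc{T}^c(s,a)$ is (twice) continuously differentiable as a map into $\meas(\mc{S})$ equipped with the total variation norm, the linear evaluation functional $\mu \mapsto \mu(\mc{S})$ is bounded (with operator norm $1$), and therefore $\partial_c g(c_0) = \bigl(\partial_c \mc{T}^{c_0}(s,a)\bigr)(\mc{S})$. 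Combining,
\[\bigl(\partial_c \mc{T}^{c_0}(s,a) \cdot (c - c_0)\bigr)(\mc{S}) = \partial_c g(c_0)\cdot (c-c_0) = 0.\]

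The conclusion then follows by linearity of the evaluation map:
\[\bigl(\mc{T}^{c_0}(s,a) + \partial_c \mc{T}^{c_0}(s,a)\cdot (c-c_0)\bigr)(\mc{S}) = \mc{T}^{c_0}(s,a)(\mc{S}) + 0 = 1.\]
The only mildly subtle point is the commutation of the derivative with evaluation on $\mc{S}$; this is the step I would be most careful about, but it reduces to the boundedness of $\mu \mapsto \mu(\mc{S})$ on $\meas(\mc{S})$ together with the assumed $C^1$-regularity of $c \mapsto \mc{T}^c(s,a)$ in the total variation norm, and so is not a genuine obstacle.
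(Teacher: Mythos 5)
Your proof is correct and is essentially the paper's argument: both exploit that $c \mapsto \mc{T}^c(s,a)(\mc{S})$ is constantly $1$ and that evaluation at $\mc{S}$ is a bounded linear functional on $\meas(\mc{S})$ commuting with the derivative, the only difference being that the paper carries this out explicitly via difference quotients along the segment $t \mapsto c_0 + t(c - c_0)$ rather than invoking the chain rule abstractly.
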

\begin{proof}
    Let $\epsilon > 0$ be such that a ball of radius
    $\epsilon \|c - c_0\|$
    about $c_0$ 
    is contained in $\mc{C}$. 
    Let us define $\gamma: (-\epsilon, \epsilon) \to \meas(\mc{S})$ by
    \[
    \gamma(t) := 
    \mc{T}^{c_0 + t(c - c_0) }(s, a).
    \]
    Since $\mc{T}$ is twice continuously differentiable in $\mc{C}$, $\gamma$ is twice continuously differentiable. Since $\gamma(t) \in \meas(\mc{S})$ for all $t$, we have
    \[\int_{\mc{S}} \gamma(t) = 1. \]
    So for all $t \neq 0$,
    \begin{align*}
        \int_{\mc{S}} \frac{\gamma(t) - \gamma(0)}{t} &= 0.
    \end{align*}
    For each $n \in \mathbb{N}$, let $\mu_n := \frac{\gamma(1/n) - \gamma(0)}{1/n}$. Since $\gamma$ is differentiable, we have $\mu_n \to \dot{\gamma}(0)$ in $\meas(\mc{S})$. So
    \begin{align*}
        \int_{\mc{S}}\dot{\gamma}(0) &= \int_{\mc{S}} \lim_{n \to \infty} \mu_n\\
        &= \lim_{n \to \infty}\int_{\mc{S}} \mu_n\\
        &= 0.
    \end{align*}
    Finally, by the chain rule, we have
    \begin{align*}
        (\mc{T}^{c_0}(s, a) + \partial_c \mc{T}^{c_0}(s, a) \cdot (c - c_0))(\mc{S}) = (\mc{T}^{c_0}(s, a) + \dot{\gamma}(0))(\mc{S}) = 1.
    \end{align*}
\end{proof}

The following lemma shows that the linearization 
$\mc{T}^{c_0}(s, a) + \partial_c \mc{T}^{c_0}(s, a) \cdot (c - c_0)$ 
lies in $\mc{U}(\mc{S})$, and therefore that the projection 
$P(\mc{T}^{c_0}(s, a) + \partial_c \mc{T}^{c_0}(s, a) \cdot (c - c_0))$ 
is well-defined. 

\begin{lemma}\label{lemma:not-too-far-from-simplex}
    Let $s \in \mc{S}$, $a \in \mc{A}$, and $c, c_0 \in \mc{C}$. If $\|c - c_0\| < (2\|\partial^2_c \mc{T}\|_{\infty})^{-1/2} $, then the following conditions hold:
      \begin{subequations}\label{eq:main}
  \begin{align}
    \left\|\left(\mc{T}^{c_0}(s, a) + \partial_c \mc{T}^{c_0}(s, a) \cdot (c - c_0)\right)^+\right\| \geq 1 - \|\partial^2_c \mc{T}\|_{\infty} \|c - c_0\|^2
    , \label{eq:projection-positive-norm}\\
    \mc{T}^{c_0}(s, a) + \partial_c \mc{T}^{c_0}(s, a) \cdot (c - c_0) \in \mc{U}(\mc{S}) \label{eq:within-U-S},\\
    \left\|\mc{T}^{c}(s, a) - P\left(\mc{T}^{c_0}(s, a) + \partial_c \mc{T}^{c_0}(s, a) \cdot (c - c_0) \right)  \right\| \leq 3\|\partial^2_c \mc{T}\|_{\infty}\|c - c_0\|^2. \label{eq:projection-combined-bound}
  \end{align}
  \end{subequations}
\end{lemma}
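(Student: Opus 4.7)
Let $\mu := \mc{T}^{c_0}(s,a) + \partial_c \mc{T}^{c_0}(s,a) \cdot (c - c_0)$. My strategy is to combine the second-order Taylor remainder for the Banach-space-valued map $c \mapsto \mc{T}^c(s,a)$ with the Hahn--Jordan decomposition of $\mu$, and then to feed the resulting estimates into Lemma~\ref{lemma:linearized-probability-sum} and Lemma~\ref{lemma:projection-distance}. The three conclusions then follow uniformly from a single control on $\|\mu^-\|$.

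The key estimate is the bound on $\|\mu^-\|$. Applying Taylor's theorem in the Banach space $\meas(\mc{S})$ to $c \mapsto \mc{T}^c(s,a)$, which has uniformly bounded second derivative, yields
\[\|\mc{T}^c(s,a) - \mu\| \leq \|\partial^2_c \mc{T}\|_{\infty}\|c - c_0\|^2.\]
Let $N$ be a negative set for $\mu$ in its Hahn--Jordan decomposition, so $\mu(N) = -\|\mu^-\|$. Since $\mc{T}^c(s,a)$ is a probability measure,
\[\|\mu^-\| = \mc{T}^c(s,a)(N) - \mu(N) \leq (\mc{T}^c(s,a) - \mu)(N) \leq \|\partial^2_c \mc{T}\|_{\infty}\|c - c_0\|^2.\]

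By Lemma~\ref{lemma:linearized-probability-sum}, $\mu(\mc{S}) = 1$, so $\|\mu^+\| = 1 + \|\mu^-\| \geq 1 \geq 1 - \|\partial^2_c \mc{T}\|_{\infty}\|c - c_0\|^2$, which proves \eqref{eq:projection-positive-norm}. Under the assumption $\|c - c_0\| < (2\|\partial^2_c \mc{T}\|_{\infty})^{-1/2}$, this lower bound strictly exceeds $1/2$, so $\mu \in \mc{U}(\mc{S})$ and $P(\mu)$ is well-defined, giving \eqref{eq:within-U-S}. For \eqref{eq:projection-combined-bound}, Lemma~\ref{lemma:projection-distance} applies (since $\mu(\mc{S}) = 1$) and produces $\|\mu - P(\mu)\| \leq 2\|\mu^-\| \leq 2\|\partial^2_c \mc{T}\|_{\infty}\|c - c_0\|^2$; the triangle inequality then yields
\[\|\mc{T}^c(s,a) - P(\mu)\| \leq \|\mc{T}^c(s,a) - \mu\| + \|\mu - P(\mu)\| \leq 3\|\partial^2_c \mc{T}\|_{\infty}\|c - c_0\|^2.\]

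The only subtle step is the vector-valued Taylor bound in $\meas(\mc{S})$; this will follow from the standard integral remainder $\mc{T}^c(s,a) - \mu = \int_0^1 (1 - t)\,\partial^2_c \mc{T}^{c_0 + t(c - c_0)}(s,a)\bigl((c - c_0)^{\otimes 2}\bigr)\,dt$ together with the uniform bound on $\partial^2_c \mc{T}$ in total variation. Everything else reduces to the Hahn--Jordan calculation above and invocations of the two preceding lemmas.
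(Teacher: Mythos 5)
Your proof is correct and follows essentially the same route as the paper's: the second-order Taylor bound on $\|\mc{T}^c(s,a)-\mu\|$ in total variation, a control on $\|\mu^-\|$, Lemma~\ref{lemma:linearized-probability-sum} to get $\mu(\mc{S})=1$, Lemma~\ref{lemma:projection-distance}, and a final triangle inequality. The only (harmless) variations are that you bound $\|\mu^-\|$ by evaluating on a Hahn--Jordan negative set rather than via Lipschitzness of $\mu\mapsto\mu^\pm$, and you deduce $\|\mu^+\|=1+\|\mu^-\|\geq 1$, which actually gives \eqref{eq:projection-positive-norm} and \eqref{eq:within-U-S} without needing the hypothesis on $\|c-c_0\|$; note only that the first ``$=$'' in your chain for $\|\mu^-\|$ should be ``$\leq$'', since $\|\mu^-\|=-\mu(N)\leq \mc{T}^c(s,a)(N)-\mu(N)$.
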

\begin{proof}
    By Taylor's theorem,
\begin{align*}
    \left\|\mc{T}^{c}(s, a) - \mc{T}^{c_0}(s, a) - \partial_c \mc{T}^{c_0}(s, a) \cdot (c - c_0) \right\| &\leq \|\partial^2_c \mc{T}\|_{\infty} \|c - c_0\|^2.
\end{align*}
Then by the triangle inequality,
\begin{align*}
    &\left\|\left(\mc{T}^{c_0}(s, a) + \partial_c \mc{T}^{c_0}(s, a) \cdot (c - c_0)\right)^+ \right\|\\&\geq \left\|\left(\mc{T}^{c}(s, a)\right)^+ \right\| - \left\|\mc{T}^{c}(s, a) - \mc{T}^{c_0}(s, a) - \partial_c \mc{T}^{c_0}(s, a) \cdot (c - c_0) \right\|\\
    &\geq \left\|\left(\mc{T}^{c}(s, a)\right)^+ \right\| - \|\partial^2_c \mc{T}\|_{\infty} \|c - c_0\|^2\\
    &= 1 - \|\partial^2_c \mc{T}^c(s, a)\|_{\infty} \|c - c_0\|^2.
\end{align*}
Here we used that $\mc{T}^c(s, a)$ is a probability measure on the second line. This establishes (\ref{eq:projection-positive-norm}). In particular, the last line is greater than $0$ by the assumption on the norm of $\|c - c_0\|$, so
\[\left\|\left(\mc{T}^{c_0}(s, a) + \partial_c \mc{T}^{c_0}(s, a) \cdot (c - c_0)\right)^+ \right\| > 0, \]
which establishes (\ref{eq:within-U-S}).

By Lemma \ref{lemma:linearized-probability-sum}, we have $(\mc{T}^{c_0}(s, a) + \partial_c \mc{T}^{c_0}(s, a))(\mc{S}) = 1$. We can also bound the negative part of the linearized measure:
\begin{align*}
    &\left\|\left(\mc{T}^{c_0}(s, a) + \partial_c \mc{T}^{c_0}(s, a) \cdot (c - c_0)\right)^- \right\| \\&\leq \left\|(\mc{T}^{c}(s, a))^- \right\| + \left\|\mc{T}^{c}(s, a) - \mc{T}^{c_0}(s, a) - \partial_c \mc{T}^{c_0}(s, a) \cdot (c - c_0) \right\|\\
    &\leq \left\|(\mc{T}^{c}(s, a))^- \right\| + \|\partial^2_c \mc{T}\|_{\infty}\|c - c_0\|^2\\
    &= \|\partial^2_c \mc{T}\|_{\infty}\|c - c_0 \|^2.
\end{align*}
Then by Lemma \ref{lemma:projection-distance},
\begin{align*}
    &\|\mc{T}^{c_0}(s, a) + \partial_c \mc{T}^{c_0}(s, a) \cdot (c - c_0) - P(\mc{T}^{c_0}(s, a) + \partial_c \mc{T}^{c_0}(s, a) \cdot (c - c_0) )\| \\&\leq 2\left\|\left(\mc{T}^{c_0}(s, a) + \partial_c \mc{T}^{c_0}(s, a) \cdot (c - c_0)\right)^-\right\|\\
    &\leq 2\|\partial^2_c \mc{T}\|_{\infty}\|c - c_0\|^2.
\end{align*}
To conclude, we apply the triangle inequality:
\begin{align*}
    &\|\mc{T}^{c}(s, a) - P(\mc{T}^{c_0}(s, a) + \partial_c \mc{T}^{c_0}(s, a) \cdot (c - c_0) )\|\\
    &\leq \|\mc{T}^{c}(s, a) - \mc{T}^{c_0}(s, a) - \partial_c \mc{T}^{c_0}(s, a) \cdot (c - c_0)\|\\
    &+ \|\mc{T}^{c_0}(s, a) + \partial_c \mc{T}^{c_0}(s, a) \cdot (c - c_0) - P(\mc{T}^{c_0}(s, a) + \partial_c \mc{T}^{c_0}(s, a) \cdot (c - c_0) )\|\\
    &\leq \|\partial^2_c \mc{T}\|_{\infty}\|c - c_0 \|^2  + 2\|\partial^2_c \mc{T}\|_{\infty}\|c - c_0 \|^2\\
    &\leq 3\|\partial^2_c \mc{T}\|_{\infty}\|c - c_0 \|^2.
\end{align*}
This establishes (\ref{eq:projection-combined-bound}).
\end{proof}
Next, we show that the projection map sends nearby measures to nearby probability measures.
\begin{lemma}\label{lemma:projection-almost-lipschitz}
    Let $X$ be a metric space, and let $\mu, \nu \in \mc{U}(X)$. Then
    \[\|P(\mu) - P(\nu)\| \leq 2\frac{\|\mu - \nu\|}{\|\mu^+\| }. \]
\end{lemma}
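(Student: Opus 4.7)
The plan is to bound $\|P(\mu) - P(\nu)\|$ by an algebraic decomposition that separates the effect of the change in the positive part from the change in its normalization, and then to reduce the numerator to $\|\mu - \nu\|$ by using that the positive-part map is $1$-Lipschitz in total variation.

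First, I would introduce the shorthand $\alpha := \|\mu^+\|$ and $\beta := \|\nu^+\|$. The key identity, obtained by adding and subtracting $\nu^+/\alpha$ (or equivalently by combining into a single fraction and then splitting), is
\[P(\mu) - P(\nu) \;=\; \frac{\mu^+}{\alpha} - \frac{\nu^+}{\beta} \;=\; \frac{\mu^+ - \nu^+}{\alpha} \;+\; \frac{\beta - \alpha}{\alpha}\cdot\frac{\nu^+}{\beta}.\]
Applying the triangle inequality and noting that $\nu^+/\beta$ is a probability measure, hence has total variation norm $1$, yields
\[\|P(\mu) - P(\nu)\| \;\leq\; \frac{\|\mu^+ - \nu^+\|}{\alpha} + \frac{|\beta - \alpha|}{\alpha}.\]
The reverse triangle inequality gives $|\beta - \alpha| = \bigl|\|\nu^+\| - \|\mu^+\|\bigr| \leq \|\mu^+ - \nu^+\|$, so
\[\|P(\mu) - P(\nu)\| \;\leq\; \frac{2\,\|\mu^+ - \nu^+\|}{\alpha}.\]

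It then remains to show that $\|\mu^+ - \nu^+\| \leq \|\mu - \nu\|$, which is the only non-algebraic step. I would prove this via Radon-Nikodym: set $\lambda := |\mu| + |\nu|$, and write $d\mu = f\,d\lambda$, $d\nu = g\,d\lambda$ for measurable $f,g$. Then $d\mu^+ = f^+\,d\lambda$ and $d\nu^+ = g^+\,d\lambda$, and since the pointwise inequality $|f^+(x) - g^+(x)| \leq |f(x) - g(x)|$ holds for real numbers (the map $t \mapsto t^+$ is $1$-Lipschitz), integrating against $\lambda$ gives
\[\|\mu^+ - \nu^+\| \;=\; \int |f^+ - g^+|\,d\lambda \;\leq\; \int |f - g|\,d\lambda \;=\; \|\mu - \nu\|.\]
Combining this with the previous display gives the desired bound.

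The main obstacle is the Lipschitz bound for the positive-part map on $\meas(X)$; everything else is just the triangle inequality applied to a natural two-term decomposition. The Radon-Nikodym reduction to the pointwise inequality on real-valued functions is the cleanest route, since it avoids any direct manipulation of Hahn-Jordan decompositions of the difference $\mu^+ - \nu^+$.
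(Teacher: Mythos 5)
Your proof is correct and follows essentially the same route as the paper's: the same two-term decomposition of $P(\mu)-P(\nu)$, the triangle and reverse-triangle inequalities, and the final reduction of $\|\mu^+-\nu^+\|$ to $\|\mu-\nu\|$. The only difference is that you actually justify the last inequality (via Radon--Nikodym and the $1$-Lipschitzness of $t\mapsto t^+$), whereas the paper asserts it without proof, so your write-up is if anything slightly more complete.
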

\begin{proof}
    By repeatedly applying the triangle inequality, we get
    \begin{align*}
        \|P(\mu) - P(\nu)\| &= \left\|\frac{\mu^+}{\|\mu^+\|} - \frac{\nu^+ }{\|\nu^+\|} \right\|\\
        &= \left\|\frac{\mu^+\|\nu^+\| - \nu^+\|\mu^+\| }{\|\mu^+\|\|\nu^+\| } \right\|\\
        &\leq \left\|\frac{\mu^+\|\nu^+\| - \nu^+\|\nu^+\| }{\|\mu^+\|\|\nu^+\| } \right\| + \left\|\frac{\nu^+\|\nu^+\| - \nu^+\|\mu^+\| }{\|\mu^+\|\|\nu^+\| } \right\|\\
        &= \frac{\|\mu^+ - \nu^+\| }{\|\mu^+\|} + \frac{|\|\nu^+\| - \|\mu^+\|| }{\|\mu^+\| }\\
        &\leq 2\frac{\|\mu^+ - \nu^+\| }{\|\mu^+\|}\\
        &\leq 2\frac{\|\mu - \nu\|}{\|\mu^+\| }.
    \end{align*}
\end{proof}

The following lemma relates the Wasserstein distance to the total variation distance, and is a special case of a result in the monograph of \citet[Theorem 6.12]{villani2008optimal}.
\begin{lemma}\label{lemma:wasserstein-vs-tv}
    Let $X$ be a bounded metric space, and let $\mu_1, \mu_2 \in \meas(X)$. Then
    \[W_1(\mu_1, \mu_2) \leq \diam(X)\|\mu_1 - \mu_2\|_{\meas(X)}. \]
\end{lemma}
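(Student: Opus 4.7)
\textbf{Proof proposal for Lemma \ref{lemma:wasserstein-vs-tv}.}
The plan is to use Kantorovich--Rubinstein duality to reduce the Wasserstein bound to a total variation bound against a bounded test function. Recall that for probability measures on a Polish space (which is the case of interest in the application, since we apply this to probability measures $\mathcal{T}^c(s,a)$ and projections thereof),
\[W_1(\mu_1, \mu_2) = \sup_{\|f\|_{\lip} \leq 1} \int_X f \, d(\mu_1 - \mu_2),\]
where the supremum is over $1$-Lipschitz functions $f: X \to \R$.

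Fix a basepoint $x_0 \in X$ and replace $f$ by $f - f(x_0)$. Since $\mu_1 - \mu_2$ is a signed measure with $(\mu_1 - \mu_2)(X) = 0$ (both being probability measures), the integral is unchanged:
\[\int_X f \, d(\mu_1 - \mu_2) = \int_X (f - f(x_0)) \, d(\mu_1 - \mu_2).\]
Because $f$ is $1$-Lipschitz and $X$ has diameter $\diam(X)$, we have $|f(x) - f(x_0)| \leq \diam(X)$ for every $x \in X$, so $\|f - f(x_0)\|_{\infty} \leq \diam(X)$.

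Now I invoke the elementary duality between the total variation norm and bounded functions: for any signed measure $\sigma$ on $X$ and bounded measurable $g$,
\[\left|\int_X g \, d\sigma\right| \leq \|g\|_{\infty}\,\|\sigma\|_{\meas(X)}.\]
Applying this with $g = f - f(x_0)$ and $\sigma = \mu_1 - \mu_2$ yields
\[\left|\int_X f \, d(\mu_1 - \mu_2)\right| \leq \diam(X)\, \|\mu_1 - \mu_2\|_{\meas(X)}.\]
Taking the supremum over all $1$-Lipschitz $f$ and using the Kantorovich--Rubinstein duality completes the proof.

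The only real subtlety is the appeal to Kantorovich--Rubinstein duality, which requires mild regularity on $X$ (it holds on Polish spaces, which is the setting the paper implicitly adopts, since the state space appears throughout as a Banach or at worst a metric space on which Wasserstein distances and Borel probability measures are meaningful). The ``trick'' of subtracting $f(x_0)$ to exploit the zero-mass of $\mu_1 - \mu_2$ is the key step converting a bound on Lipschitz norm into a bound on $L^\infty$ norm; without it one would only obtain the useless bound $\|f\|_\infty \leq \infty$.
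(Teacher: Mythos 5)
Your proof is correct, but it takes a different route from the paper: the paper does not prove this lemma at all, instead citing it as a special case of \citet[Theorem 6.12]{villani2008optimal}, whose underlying argument is an explicit coupling rather than duality. Concretely, writing $\mu_1 - \mu_2 = \sigma^+ - \sigma^-$ and $m = \mu_1 - \sigma^+$, one couples $\mu_1$ and $\mu_2$ by keeping the common part $m$ on the diagonal and transporting only $\sigma^+$ onto $\sigma^-$ via the normalized product measure; since every transported pair is at distance at most $\diam(X)$ and $\|\sigma^+\| \leq \|\mu_1 - \mu_2\|_{\meas(X)}$, the bound follows (in fact with the sharper constant $\tfrac{1}{2}\diam(X)$, since $\|\sigma^+\| = \tfrac{1}{2}\|\mu_1-\mu_2\|_{\meas(X)}$ for probability measures). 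Your duality argument buys brevity but leans on the nontrivial direction of Kantorovich--Rubinstein duality, $W_1 \leq \sup_{\|f\|_{\lip}\leq 1}\int f\,d(\mu_1-\mu_2)$, which requires Polish-space hypotheses not literally present in the statement ``bounded metric space''; the coupling proof is self-contained, needs only the existence of the Hahn--Jordan decomposition, and is presumably why the paper points to Villani's coupling-based theorem. Two small points worth flagging either way: the lemma as stated takes $\mu_1,\mu_2 \in \meas(X)$, but both your zero-mass trick $(\mu_1-\mu_2)(X)=0$ and the definition of $W_1$ require equal total mass, so the statement should really be read for probability measures (as it is in the application); and your centering step only uses $|f(x)-f(x_0)|\leq \diam(X)$, whereas centering at $\tfrac{1}{2}(\sup f + \inf f)$ would recover the factor $\tfrac{1}{2}$ that the coupling proof gives for free.
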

\begin{proof}[Proof of Theorem \ref{thm:Q_estimate_stochastic}]
    To use Theorem \ref{theorem:main}, we need to show that the context-enhanced transition and reward functions are good estimates of their true values. By Lemma \ref{lemma:not-too-far-from-simplex}, for all $(s, a) \in \mc{S} \times \mc{A}$,
\begin{align*}
    \left\|\mc{T}^c(s, a) - T^c_{\ce}(s, a)\right\|_{\meas(\mc{S})} \leq 3\|\partial^2_c \mc{T}\|_{\infty}\|c - c_0\|^2.
\end{align*}
By Lemma \ref{lemma:convert-to-wasserstein-distance}, we have
\begin{align*}
    W^1(\mc{T}^{c}(s, a), \mc{T}^c_{\ce}(s, a)) &\leq \diam(\mc{S})\|\mc{T}^{c}(s, a) - \mc{T}^c_{\ce}(s, a)\|_{\meas(\mc{S}) }\\
    &\leq 3\diam(\mc{S})\|\partial^2_c \mc{T}\|_{\infty}\|c - c_0\|^2.
\end{align*}
By Taylor's theorem, there exists $c' \in [c, c_0]$ such that
    \begin{align*}
        \left|R^{c}(s, a) - R^{c_0}(s, a) - \partial_c R^{c_0}(s, a) \cdot (c - c_0) \right| &\leq \|\partial^2_c R^{c}(s, a)\|\|c - c_0\|^2\\
        &\leq \|\partial^2_c R\|_{\infty}\|c - c_0\|^2
    \end{align*}
    which we can rewrite as
    \begin{align*}
        \left|R^{c}(s, a) - R^c_{\ce}(s, a)\right| \leq \|\partial^2_c R\|_{\infty}\|c - c_0\|^2\\
        \left\|R^{c} - R^c_{\ce}\right\|_{\infty} \leq \|\partial^2_c R\|_{\infty}\|c - c_0\|^2.
    \end{align*}
    So the reward functions are close.

    Next, we will show that the transition and reward functons are Lipschitz. By Lemma \ref{lemma:wasserstein-vs-tv}, for all $(s, a), (s', a') \in \mc{S} \times \mc{A}$,
    \begin{align*}
        W^1(\mc{T}^{c}(s, a), \mc{T}^{c}(s', a')) &\leq \diam(\mc{S})\|\mc{T}^{c}(s, a) - \mc{T}^{c}(s', a')\|_{\meas(\mc{S}) }\\
        &\leq \diam(\mc{S}) L_{\mc{T}} d((s, a), (s', a')),
    \end{align*}
    so the Lipschitz constant of $\mc{T}^{c}: \mc{S} \times \mc{A} \to \mc{W}_1(\mc{S})$ is at most $\diam(\mc{S}) L_{\mc{T}}$.

    Let us define $\widehat{\mc{T}}: \mc{S} \times \mc{A} \to \meas(\mc{S})$ by
    \begin{align*}
        \widehat{\mc{T}}(s, a) &:= \mc{T}^{c_0}(s, a) + \partial_c \mc{T}^{c_0}(s, a) \cdot (c - c_0).
    \end{align*}
    Then $\mc{T}^c_{\ce} = P \circ \widehat{\mc{T}}$. By Lemma \ref{lemma:not-too-far-from-simplex}, for all $(s, a) \in \mc{S} \times \mc{A}$,
    \begin{align*}
        \left\|\widehat{\mc{T}}(s, a)^+ \right\|_{\meas(\mc{S})} &\geq 1 - \|\partial^2_c \mc{T}\|_{\infty}\|c - c_0\|^2.
    \end{align*}
    For all $(s, a)$ and $(s', a')$ in $\mc{S} \times \mc{A}$, we have
    \begin{align*}
        &\|\widehat{\mc{T}}(s, a) - \widehat{\mc{T}}(s', a')\|_{\meas(\mc{S})}\\&\leq \|\mc{T}^{c_0}(s, a) - \mc{T}^{c_0}(s', a')\|_{\meas(\mc{S})} + \|\partial_c \mc{T}^{c_0}(s, a) - \partial_c \mc{T}^{c_0}(s', a')\|_{\meas(\mc{S})}\|c - c_0\|\\
        &\leq L_{\mc{T}}d((s, a), (s', a')) + L_{\partial_c \mc{T}}\|c - c_0\| d((s, a), (s', a')).
    \end{align*}
    
    Then by Lemma \ref{lemma:projection-almost-lipschitz}, for all $(s, a), (s', a') \in \mc{S} \times \mc{A}$,
    \begin{align*}
        \left\|P(\widehat{\mc{T}}(s, a)) - P(\widehat{\mc{T}}(s', a'))\right\|_{\meas(\mc{S})} &\leq 2\frac{\|\widehat{\mc{T}}(s, a) - \widehat{\mc{T}}(s', a')\|_{\meas(\mc{S})}}{\|\widehat{\mc{T}}(s, a)^+\|_{\meas(\mc{S})}}\\
        &\leq 2\frac{(L_{\mc{T}} + \|c - c_0\|L_{\partial_c\mc{T} })d((s, a), (s', a')) }{1 - \|\partial^2_c \mc{T}\|_{\infty}\|c - c_0\|^2}\\
        &\leq 4 (L_{\mc{T}} + \|c - c_0\|L_{\partial_c\mc{T} })d((s, a), (s', a')).
    \end{align*}
    Again by Lemma \ref{lemma:wasserstein-vs-tv}, we have
    \begin{align*}
        W^1(\mc{T}^c_{\ce}(s, a), \mc{T}^c_{\ce}(s', a')) &\leq \diam(\mc{S})\|\mc{T}^c_{\ce}(s, a) - \mc{T}^c_{\ce}(s', a')\|_{\meas(\mc{S}) }\\
        &\leq 4\diam(\mc{S})(L_{\mc{T}} + \|c - c_0\|L_{\partial_c\mc{T} })d((s, a), (s', a')),
    \end{align*}
    so the Lipschitz constant of $\mc{T}^c_{\ce}: \mc{S} \times \mc{A} \to \mc{W}_1(\mc{S})$ is at most $4\diam(\mc{S})(L_{\mc{T}} + \|c - c_0\|L_{\partial_c\mc{T} })$.

    By assumption, $R^{c}$ is Lipschitz with constant at most $L_R$. The Lipschitz constant of $R^c_{\ce} = R^{c_0} + \partial_c R^{c_0} \cdot (c - c_0)$ is at most $L_R + L_{\partial_c R} \|c - c_0\|$.

    Finally, by Theorem \ref{theorem:main}, we have
    \begin{align*}
        &\|Q^c_{\ce} - Q^{c}\|_{\infty}\\&\leq \frac{1}{1 - \gamma}\left(\|\partial^2_c R\|_{\infty}\|c - c_0\|^2 + \frac{3\gamma \diam(\mc{S})(1 + L_{\pi}) \|R\|_{\lip(\mc{S} \times \mc{A} \times \mc{C}) } \|\partial^2_c \mc{T}\|_{\infty}\|c - c_0\|^2   }{1 - \gamma \max(1, \diam(\mc{S})L_{\mc{T}}(1 + L_{\pi}) ) } \right).
    \end{align*}
\end{proof}

\subsection{CEBE policy is approximately optimal}\label{apndx:approximately-optimal}
Here we provide a proof of Theorem~\ref{thm:policies}.
\begin{proof}
    Recall that
    \begin{equation*}
        J(\pi, c) = \mathbb{E}\left[\sum_{t = 0}^{\infty}\gamma^t R_t \right] = \E_{s \sim S_0, a \sim \pi(s, c) } Q^c(s, a; \pi),
    \end{equation*}
    where $R_t$ denotes the reward attained at time $t$ from following the policy $\pi_{\ce}$ in context $c$. 
    Let $c \in \mc{C}$. Let $\mu$ denote the initial state distribution. Since $Q_{\ce}$ and $Q_{\be}$ are close,
    \begin{align*}
        J_{\ce}(\pi_{\be}, c) &= \int_{\mc{S}}\int_{\mc{A}} Q_{\ce}^c(s, a; \pi_{\be}) d\pi_{\be}(a)d\mu(s)\\
        &\geq \int_{\mc{S}} \int_{\mc{A}} Q^c_{\be}(s, a; \pi_{\be}) d\pi_{\be}(a)d\mu(s)\\&- \int_{\mc{S}} \int_{\mc{A}}\left|Q^c_{\ce}(s,a; \pi_{\be}) - Q^c_{\be}(s, a; \pi_{\be}) \right|d\pi_{\be}(a)d\mu(s)\\
        &\geq \int_{\mc{S}} \int_{\mc{A}} Q^c_{\be}(s, a; \pi_{\be})d\pi_{\be}(a) d\mu(s) - \delta\\
        &= J_{\be}(\pi_{\be}, c) - \delta.
    \end{align*}
    Similarly,
    \begin{align*}
        J_{\be}(\pi_{\ce}, c)  &= \int_{\mc{S}}\int_{\mc{A}} Q_{\be}^c(s, a; \pi_{\ce}) d\pi_{\ce}(a)d\mu(s)\\
        &\geq \int_{\mc{S}} \int_{\mc{A}} Q^c_{\ce}(s, a; \pi_{\ce}) d\pi_{\ce}(a) d\mu(s) \\&- \int_{\mc{S}} \int_{\mc{A}}\left|Q^c_{\ce}(s,a; \pi_{\be}) - Q^c_{\be}(s, a; \pi_{\be}) \right|d\pi_{\ce}(a)d\mu(s)\\
        &\geq \int_{\mc{S}} \int_{\mc{A}} Q^c_{\ce}(s, a; \pi_{\ce}) d\pi_{\ce}(a) d\mu(s) - \delta\\
        &= J_{\ce}(\pi_{\ce}, c) - \delta.
    \end{align*}
    The above inequalities, combined with the $(\mc{C}, \epsilon)$-optimality of $\pi_{\be}$ and $\pi_{\ce}$ on their respective MDPs, yield
    \begin{align*}
        J_{\be}(\pi_{\ce}, c) &\geq J_{\ce}(\pi_{\ce}, c) - \delta\\
        &\geq J_{\ce}(\pi_{\be}, c) - \epsilon - \delta\\
        &\geq J_{\be}(\pi_{\be}, c) - \epsilon - 2\delta\\
        &\geq \sup_\rho J_{\be}(\rho, c) - 2\delta - 2\epsilon.
    \end{align*}
    Hence, $\pi_{\ce}$ is $(\mc{C}, 2\delta + 2\epsilon)$-optimal on the Bellman equation.
\end{proof}

\subsection{Relation of CSE to regularization} 
\label{apndx:theory:regularization}
In this section we aim to provide intuition about what functions are learned when using CSE by developing a regularization perspective for the CSE loss. 
For simplicity we consider the supervised setting. 
In supervised learning, for an input data distribution $\mc{D}$ and a true function $F: X\times \mc{C}\to Y$ we want to learn, we may define a loss function for a hypothesis $f: X\times \mc{C}\to Y$: 
\begin{equation*}
    L^c(f) = \mathbb{E}_{x\sim \mc{D}} \|f(x, c) - F(x, c)\|_2^2 . 
\end{equation*}
The corresponding $\text{CSE}$ loss is then obtained by augmenting the data (by adding perturbed inputs and corresponding labels from the linearized true model) and is given by 
\begin{equation*}
    L_{\text{CSE}}^{c_0}(f) = \mathbb{E}_\xi\mathbb{E}_{x\sim \mc{D}} \|f(x, c_0+\xi) - F(x, c_0) - \partial_c F(x, c_0)\xi\|_2^2 , 
\end{equation*}
where $\xi$ is a random vector of our choice, which we will suppose is standard multivariate Gaussian. We then calculate
\begin{align*}
    &L_{\text{CSE}}^{c_0}(f) \\&= \mathbb{E}_\xi\mathbb{E}_{x\sim \mc{D}} \|f(x, c_0) + \partial_{c} f(x, c_0)\xi + \xi^T\partial_c^2 f(x, c_0)\xi + O(\|\xi\|^3) - F(x, c_0) - \partial_{c} F(x, c_0)\xi\|_2^2\\    &=\mathbb{E}_\xi\mathbb{E}_{x\sim \mc{D}} \|f - F\|_2^2 + 2 \langle f-F, (\partial_c f - \partial_c F)\xi\rangle \\&
    + \|(\partial_c f - \partial_c F)\xi\|_2^2 + \langle\xi^T\partial_c^2 f\xi, f-F \rangle + O(\|\xi\|^3)\\
    &=\mathbb{E}_{x\sim \mc{D}} \|f - F\|_2^2 + \mathbb{E}_\xi\mathbb{E}_{x\sim \mc{D}} \|(\partial_x f - \partial_x F)\xi\|_2^2 + \mathbb{E}_\xi\mathbb{E}_{x\sim \mc{D}}\langle\xi^T\partial_c^2 f\xi, f-F \rangle + \mathbb{E}_\xi O(\|\xi\|^3)\\
    &\approx \mathbb{E}_{x\sim \mc{D}} \|f - F\|_2^2 + \mathbb{E}_\xi\mathbb{E}_{x\sim \mc{D}} \|(\partial_x f - \partial_x F)\xi\|_2^2 + \mathbb{E}_\xi\mathbb{E}_{x\sim \mc{D}}\langle\xi^T\partial_c^2 f\xi, f-F \rangle.
\end{align*}
The first term of the above expression encourages $f$ to be close to $F$. The second term is equal to
\begin{align*}
    \E_{\xi}\E_{x \sim \mc{D}} \xi^T(\partial_x f - \partial_x F)^T  (\partial_x f - \partial_x F)\xi &= \E_{\xi}\E_{x \sim \mc{D}} \text{Trace}(\xi \xi^T  (\partial_x f - \partial_x F)^T  (\partial_x f - \partial_x F))\\
    &= \E_{x \sim \mc{D}}\text{Trace}(\E[\xi \xi^T ](\partial_x f - \partial_x F)^T  (\partial_x f - \partial_x F))\\
    &= \sigma^2 \E_{x \sim \mc{D}}\text{Trace}((\partial_x f - \partial_x F)^T  (\partial_x f - \partial_x F))\\
    &= \sigma^2 \E_{x \sim \mc{D}} \|\partial_x f - \partial_x F\|_F^2.
\end{align*}
and therefore encourages the derivative of $f$ to agree with $F$.
Under the assumption that the function $f$ is a close approximation of $F$ in the training context $c_0$, the loss is approximately 
\begin{equation*}
    L_{\text{CSE}}^{c_0}(f) \approx \sigma^2 \E_{x \sim \mc{D}} \|\partial_x f - \partial_x F\|_F^2.
\end{equation*}

Up to higher-order terms in the perturbation size, 
data augmentation with CSE corresponds to adding $L^2$ regularization on the difference between the Jacobian of the trained model $f$ and that of the true function $F$. 
In other words, this encourages $\partial_c f \approx \partial_c F$ at $c_0$.

\section{Additional Experiments}
\label{apndx:additional_experiments} 

\subsection{PendulumGoal}
We provide additional plots for the PendulumGoal experiment. 
In Figure~\ref{apndx:fig:pengoal}, we show the evaluation performance of the trained policies when sweeping over the context parameters. We additionally plot the mass and length context parameters. 
All methods perform similarly as we vary the length. 
When we vary the mass, the methods perform similarly until the mass is greater than about $1.6$ at which point the baseline performs best. 
Note that CSE still outperforms LDR in this case. 
\begin{figure}[h]
    \centering
    \begin{subfigure}[t]{0.4\textwidth}
    \includegraphics[width=\textwidth]{figures/context_sweeps/pen_goal_0.png}
    \caption{Gravitational Acceleration}
    \label{apndx:fig:pengoal_c0}
    \end{subfigure}
    \begin{subfigure}[t]{0.4\textwidth}
    \includegraphics[width=\textwidth]{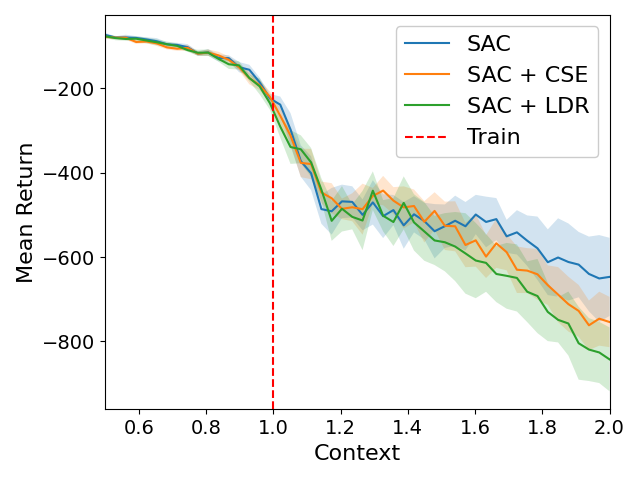}
    \caption{Mass}
    \label{apndx:fig:pengoal_c1}
    \end{subfigure}
    \begin{subfigure}[t]{0.4\textwidth}
    \includegraphics[width=\textwidth]{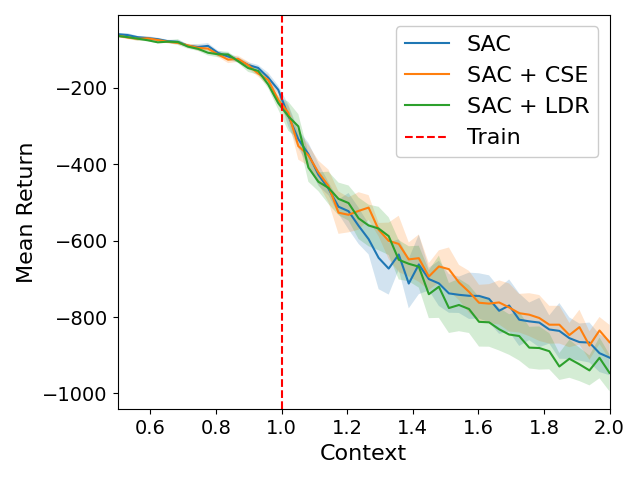}
    \caption{Length}
    \label{apndx:fig:pengoal_c2}
    \end{subfigure}
    \begin{subfigure}[t]{0.4\textwidth}
    \includegraphics[width=\textwidth]{figures/context_sweeps/pen_goal_3.png}
    \caption{Goal Torque}
    \label{apndx:fig:pengoal_c3}
    \end{subfigure}
    \caption{Comparison of training methods on PendulumGoal as we vary the context parameters. }
    \label{apndx:fig:pengoal}
\end{figure}

\subsection{CartGoal}
We present our results on the CartGoal environment in Figure~\ref{fig:cartgoal}. This environment uses discrete action spaces, so we train the $Q$ functions and policies using DQN \citep{mnih2013playing}. We can see in the figure that all methods perform similarly with respect to the cart mass and pole mass. 
CSE has a slight edge on pole mass when the mass is greater than $1.75$. Interestingly, baseline performs better than CSE and LDR when changing the gravitational acceleration. 
We note that this is very out of distribution since the $\Delta c$ used in training was only $0.1$. On pole length, the policies trained with LDR performed best and CSE slightly outperforms baseline for most contexts. When varying the goal state, CSE performs best when the goal state is far from the origin. 

\begin{figure}[h]
    \centering
    \begin{subfigure}[t]{0.32\textwidth}
    \includegraphics[width=\textwidth]{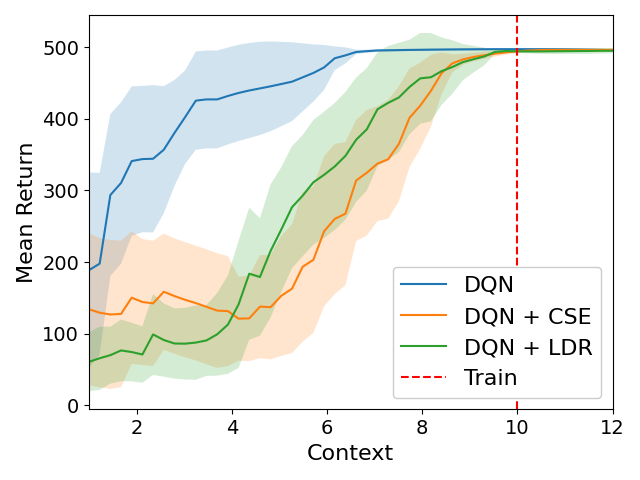}
    \caption{Gravitational Acceleration}
    \label{apndx:fig:cartgoal_c0}
    \end{subfigure}
    \begin{subfigure}[t]{0.32\textwidth}
    \includegraphics[width=\textwidth]{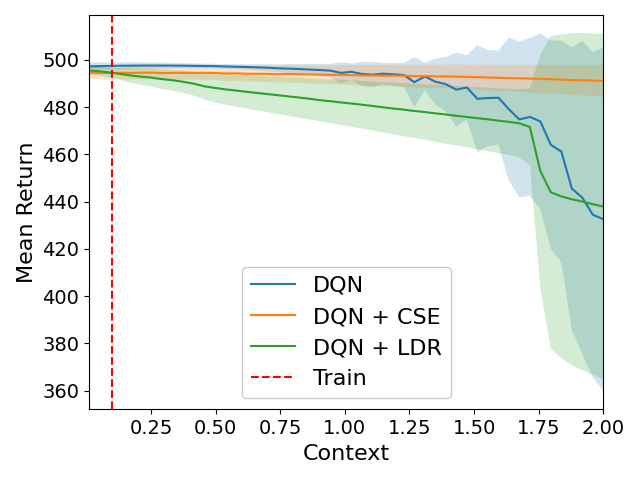}
    \caption{Pole Mass}
    \label{apndx:fig:cartgoal_c1}
    \end{subfigure}
    \begin{subfigure}[t]{0.32\textwidth}
    \includegraphics[width=\textwidth]{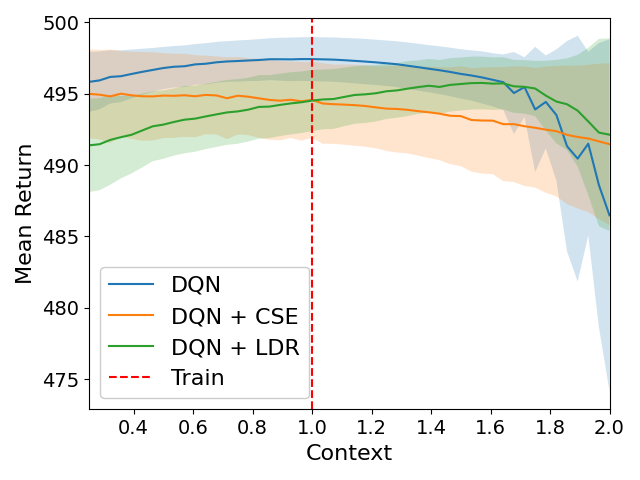}
    \caption{Cart Mass}
    \label{apndx:fig:cartgoal_c2}
    \end{subfigure}
    \begin{subfigure}[t]{0.32\textwidth}
    \includegraphics[width=\textwidth]{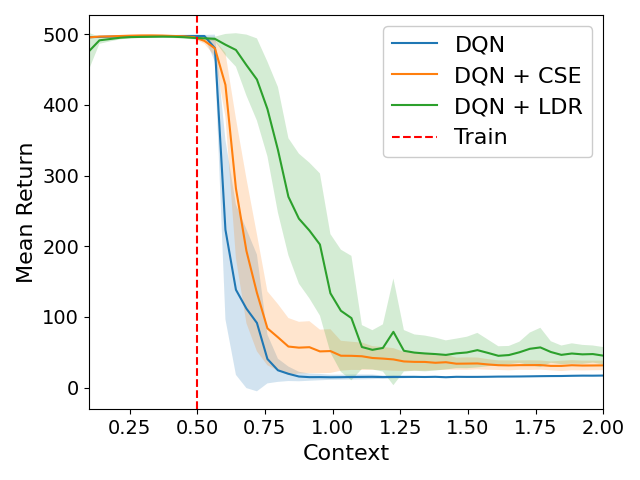}
    \caption{Pole Length}
    \label{apndx:fig:cartgoal_c3}
    \end{subfigure}
    \begin{subfigure}[t]{0.32\textwidth}
    \includegraphics[width=\textwidth]{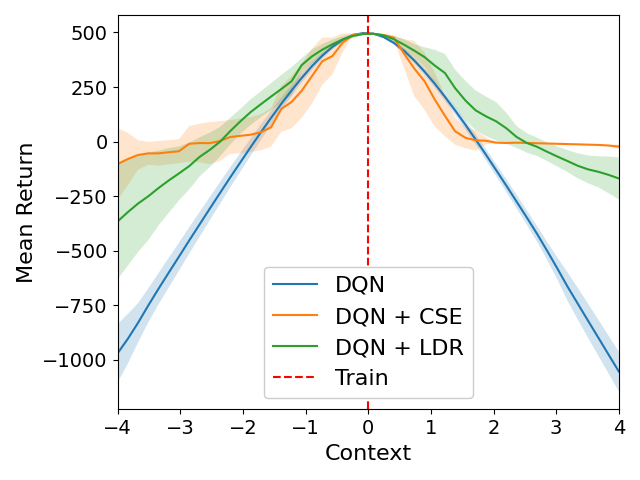}
    \caption{Goal State}
    \label{apndx:fig:cartgoal_c4}
    \end{subfigure}
    \caption{Comparison of training methods on CartGoal as we vary the context parameters.}
    \label{fig:cartgoal}
\end{figure}

\FloatBarrier

\subsection{AntGoal}
We present the results of the AntGoal environment in Figure~\ref{fig:ant_goal}. We can see that CSE performs at least as well as LDR across all contexts. Both CSE and LDR beat baseline performance across all contexts. 
\begin{figure}[h]
    \centering
    \includegraphics[width=0.5\linewidth]{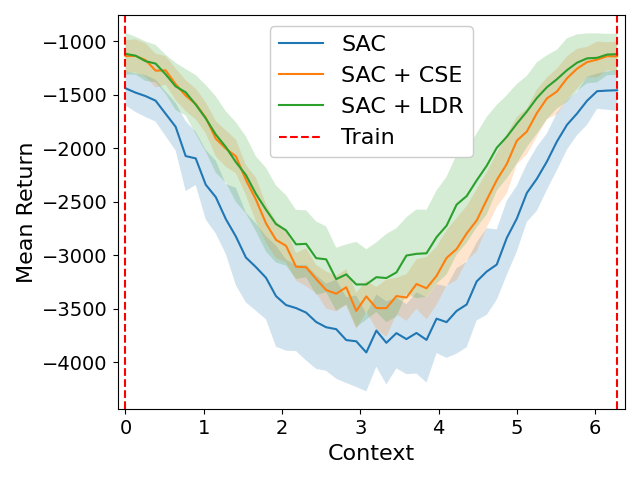}
    \caption{Comparison of training methods as we vary angle of the goal state in the AntGoal environment and keep a fixed distance of $3$. The training context is $(3.0, 0.0)$. }
    \label{fig:ant_goal}
\end{figure}

\FloatBarrier

\subsection{PendulumGoal with Automatic Differentiation}
In the other experiments, we use analytical gradients, but this may be limiting in some settings. In this section, we perform an experiment on the PendulumGoal environment and use automatic differentiation to obtain gradients. Figure~\ref{apndx:fig:pengoal_ad} shows the evaluation performance of the trained policies when sweeping over context parameters in the PenGoal-AD environment. We can see that CSE performs similarly to LDR. In some cases, CSE outperforms LDR (e.g., when gravitational acceleration is at least 3). 
\begin{figure}[h]
    \centering
    \begin{subfigure}[t]{0.4\textwidth}
    \includegraphics[width=\textwidth]{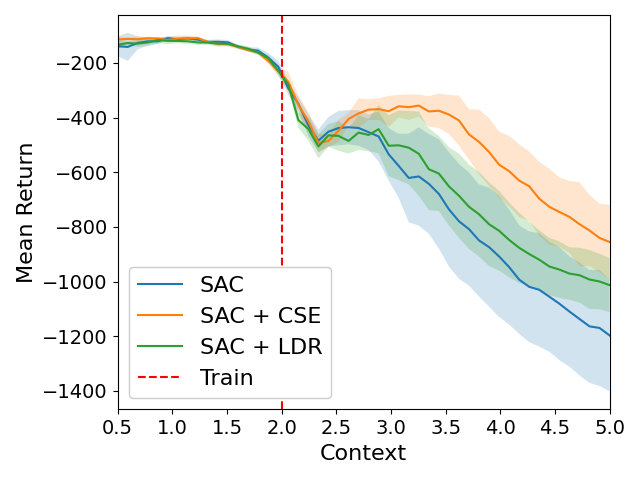}
    \caption{Gravitational Acceleration}
    \label{apndx:fig:pengoal_ad_c0}
    \end{subfigure}
    \begin{subfigure}[t]{0.4\textwidth}
    \includegraphics[width=\textwidth]{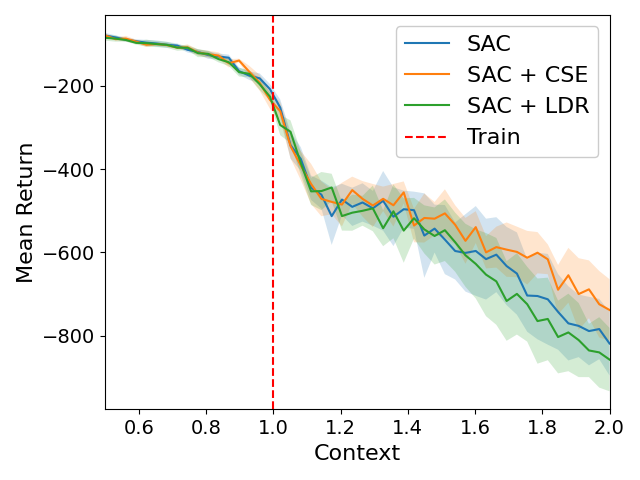}
    \caption{Mass}
    \label{apndx:fig:pengoal_ad_c1}
    \end{subfigure}
    \begin{subfigure}[t]{0.4\textwidth}
    \includegraphics[width=\textwidth]{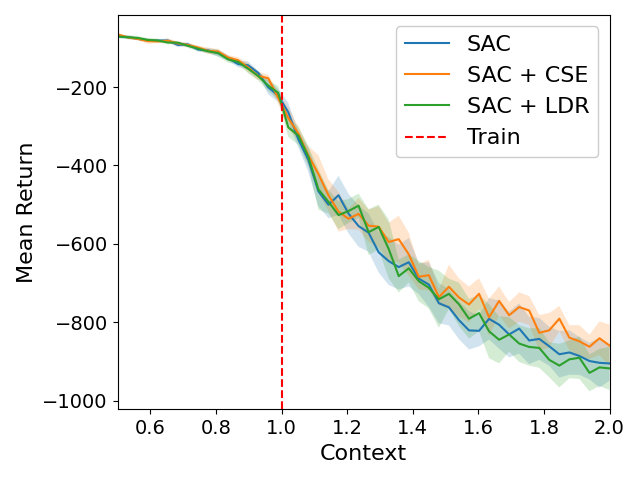}
    \caption{Length}
    \label{apndx:fig:pengoal_ad_c2}
    \end{subfigure}
    \begin{subfigure}[t]{0.4\textwidth}
    \includegraphics[width=\textwidth]{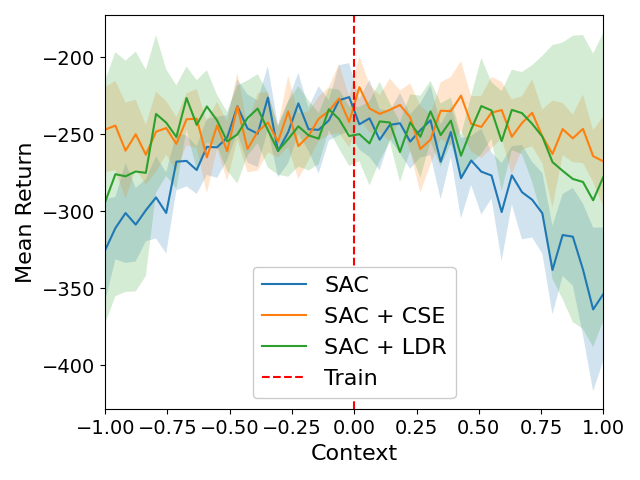}
    \caption{Goal Torque}
    \label{apndx:fig:pengoal_ad_c3}
    \end{subfigure}
    \caption{Comparison of training methods on PendulumGoal-AD as we vary the context parameters. }
    \label{apndx:fig:pengoal_ad}
\end{figure}

\FloatBarrier

\subsection{Stochastic Transitions}
We present the results for the SimpleDirection-Stochastic environment in Figure~\ref{apndx:fig:simpledir_rand}. This environment has the same transitions and rewards as in SimpleDirection, but with additive Gaussian noise $\epsilon\sim\mathcal{N}(0, 0.1)$ applied to both the transitions and rewards at each step. In this environment, the policy trained with CSE has similar performance as with LDR and outperforms the baseline. While this provides empirical evidence that CSE is robust to stochasticity in transitions and rewards, further work should establish theoretical guarantees for this scenario. 
\begin{figure}[h]
    \centering
    \includegraphics[width=0.42\textwidth]{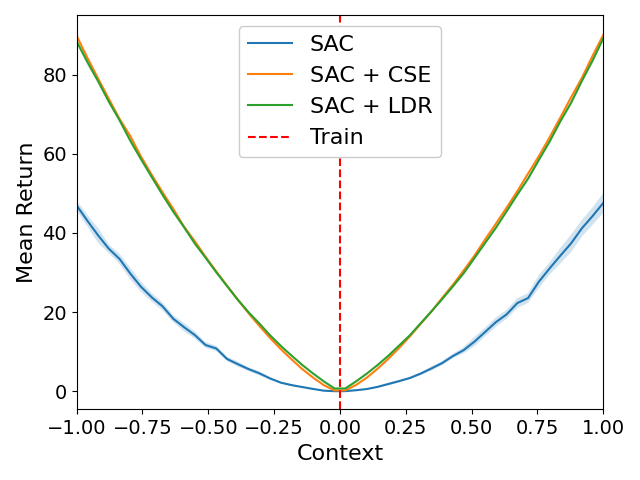}
    \caption{Comparison of training methods as we vary the first context parameter in SimpleDirectionStochastic.}
    \label{apndx:fig:simpledir_rand}
\end{figure}

\subsection{Aggregate Statistics and Context Dimension}
In this section, we present a table of average performance for each environment. We compute the mean performance from the previous context sweeps and show the results in Table~\ref{apndx:tab:statistics}. The last column shows the normalized score 
\begin{equation*}
    \frac{\text{CSE} - \text{Baseline}}{\text{LDR} - \text{Baseline}}
\end{equation*}
for each environment. Note that CSE is better than the baseline in all cases. CSE performs better than LDR in three of the six environments tested (CheetahVelocity, SimpleDirection, and PendulumGoal). The data do not point to a relationship between the context dimension and the normalized score. This is expected since the transitions and rewards vary significantly between environments. We expect that the context dimension can have an impact on the performance of LDR and CSE and further work should consider testing this when scaling to high dimensional contexts. 
\begin{table}[h]
\centering
    \begin{tabular}{lcrrrr}
    \toprule
     Environment & $\text{Dimension}(\mathcal{C})$ & Baseline & LDR & CSE & Normalized Score\\
    \midrule
    CheetahVelocity & $1$ & $-681.87$ & $-588.22$ & $\mathbf{-575.94}$ & $1.13$\\
    SimpleDirection & $2$ & $16.03$ & $37.60$ & $\mathbf{38.05}$ & $1.02$\\
    AntDirection & $2$ & $507.11$ & $\mathbf{560.23}$ & $515.98$ & $0.17$ \\
    AntGoal & $2$ & $-2790.55$ & $\mathbf{-2157.52}$ & $-2307.24$ & $0.76$\\
    PendulumGoal & $4$ & $-425.08$ & $-421.58$ & $\mathbf{-394.19}$ & $8.83$ \\
    CartGoal & $5$ & $284.65$ & $\mathbf{325.49}$ & $315.74$ & $0.76$ \\
    \bottomrule
    \end{tabular}
    \caption{Mean Returns averaged across all context sweeps for each environment using analytic gradients. The last column shows the normalized score for CSE compared to LDR. The highest mean returns for each environment are shown in bold.}
    \label{apndx:tab:statistics}
\end{table}

\section{Experiment Details}
\label{apndx:experiment_details}
In this section, we outline the details of our experimental setup. We include an overview of the SAC hyperparameters in Table~\ref{tab:sac_hyperparameters} and replay buffer hyperparameters in Table~\ref{tab:replay_buffer_config}. For the SAC experiments, all neural networks use three fully connected layers of width 256 with ReLU activations. For the CartGoal experiment, we use DQN and the neural networks have a single fully connected layer of width 256 with Tanh activations. The neural network head is a two layer fully connected neural network with width 256 and ReLU activations. We use the default RLLib implementations of SAC and default hyperparameters for each environment are based on the tuned examples from RLLib \citep{liang2018rllib, liang2021rllib}. Note that \cite{lee2021improving} use a smaller initial entropy penalty for AntGoal, so we use their choice of $\alpha_0=0.02$ as well. We increase the number of epochs in each environment to account for the increased complexity of training with contexts. When performing CSE and LDR, we generate perturbations $\Delta c$ uniformly from the sphere of radius $0.1$. To improve the sampling performance, we vectorize all environments to sample from 8 versions of the environment.

When training with SAC, we have the following additional hyperparameters which we keep constant across experiments. The target entropy is automatically tuned by RLLib. The policy polyak averaging coefficient is $5\cdot 10^{-3}$. We use a training batch size of 256 and 4000 environment steps per training epoch. These are the default choices in RLLib. For CartGoal, we also have the following hyperparameters with DQN. The learning rate is $5\cdot 10^{-4}$ and the train batch size is 32. We additionally enable double DQN and dueling within DQN to improve $Q$ value estimation, implemented within \cite{liang2018rllib, liang2021rllib}. We also use random exploration for $10^4$ timesteps, followed by using $\epsilon$-greedy exploration with $\epsilon = 0.02$. These are hyperparameters were chosen based on  the default RLLib implementations DQN for the tuned CartPole example from RLLib \citep{liang2018rllib, liang2021rllib}. Note that in both SAC and DQN, we use 1-step returns during training. While the experiments in this paper use SAC and DQN, we note that CSE is derived from CEBE which should allow one to apply the method to any algorithm which leverages bootstrapping. Issues may arise when using $n$-step returns for estimating returns as repeated linear approximations will accumulate error for large $n$.

\paragraph{Hardware}
Experiments were run on a system with Intel(R) Xeon(R) Gold 6152 CPUs @ 2.10GHz and NVIDIA GeForce RTX 2080 Ti GPUs. 

\paragraph{Python libraries}
Torch \citep{paszke2019pytorch}; 
Ray \citep{moritz2018ray};  
Ray Tune \citep{liaw2018tune}; 
Ray RLlib \citep{liang2018rllib, liang2021rllib}; 
Seaborn \citep{Waskom2021}; 
Matplotlib \citep{Hunter:2007}; 
Pandas \citep{reback2020pandas};  
Numpy \citep{harris2020array}; 
Scipy \citep{gommers2024scipy}; 
Mujoco \citep{todorov2012mujoco}; 
Gymnasium \citep{towers2024gymnasium}.

\begin{table}[]
    \centering
    \begin{tabular}{cccccc}
    \toprule
        Environment     & Epochs    & Actor LR          & Critic LR         & Entropy LR        & Initial Entropy \\
    \midrule
        SimpleDirection & $500$     & $1\cdot 10^{-3}$  & $2 \cdot 10^{-3}$ & $4 \cdot 10^{-4}$ & $1.0$     \\
        PendulumGoal    & $1000$    & $2 \cdot 10^{-4}$ & $8 \cdot 10^{-4}$ & $9 \cdot 10^{-4}$ & $1.001$   \\
        CheetahVelocity & $10000$   & $2 \cdot 10^{-4}$ & $8 \cdot 10^{-4}$ & $9 \cdot 10^{-4}$ & $1.001$   \\
        AntDirection    & $5000$    & $3 \cdot 10^{-5}$ & $3 \cdot 10^{-4}$ & $1 \cdot 10^{-4}$ & $1.001$   \\
        AntGoal         & $10000$   & $3 \cdot 10^{-5}$ & $3 \cdot 10^{-4}$ & $1 \cdot 10^{-4}$ & $0.01$    \\
    \bottomrule
    \end{tabular}
    \caption{SAC Hyperparameters}
    \label{tab:sac_hyperparameters}
\end{table}

\begin{table}[]
    \centering
    \begin{tabular}{cccc}
    \toprule
        Environment & Capacity & $\alpha$ & $\beta$ \\
    \midrule
        SimpleDirection & $1e6$ & $0.6$ & $0.4$ \\
        PendulumGoal & $1e5$ & $1.0$ & $0.0$ \\
        CartGoal & $5e4$ & $0.6$ & $0.4$ \\
        CheetahVelocity & $1e5$ & $0.6$ & $0.4$ \\
        AntDirection & $1e6$ & $0.6$ & $0.4$ \\
        AntGoal & $1e6$ & $0.6$ & $0.4$ \\
    \bottomrule
    \end{tabular}
    \caption{Prioritized Episode Replay Buffer Hyperparameters}
    \label{tab:replay_buffer_config}
\end{table}

\section{Environments}
\label{apndx:environments}
We provide additional details about the environments in this section. A summary of the environment parameters is included in Table~\ref{tab:env_config}. 

\begin{table}[h]
    \centering
    \begin{tabular}{cccc}
    \toprule
        Environment & Train Context & $\gamma$ & Horizon \\
    \midrule
        SimpleDirection & $(0.0, 0.0)$ & $0.9$ & $10$ \\
        PendulumGoal & $(2.0, 1.0, 1.0, 0.0)$ & $0.99$ & $200$ \\
        CartGoal & $(10.0, 0.1, 1.0, 0.5, 0.0)$ & $0.99$ & $500$ \\
        CheetahVelocity & $(2.0)$ & $0.99$ & $1000$ \\
        AntDirection & $(1.0, 0.0)$ & $0.99$ & $1000$ \\
        AntGoal & $(1.0, 0.0)$ & $0.99$ & $1000$ \\
    \bottomrule
    \end{tabular}
    \caption{Environment Parameters}
    \label{tab:env_config}
\end{table}

\FloatBarrier

\subsection{Cliffwalking}

We begin our experiments by first solving the problem in a tabular setting. This allows us to exactly solve the Bellman equation and the CEBE to avoid noise due to sampling and training in deep RL. This allows us to compute the approximation error exactly. We consider the tabular Cliffwalking environment shown in Figure~\ref{fig:cliff_graphic}. In this environment, the agent must navigate to a goal state without first falling off a cliff (depicted as black cells in the figure). 
If the agent walks near the cliff's edge it can complete the task faster, but risks falling off the cliff. 
In this environment, the context is $c\in[0,1]$, which denotes the probability that the agent slips. When the agent slips, the next state is picked uniformly at random among the adjacent states. If the slipping probability is high, then the agent is incentivized to first navigate away from the edge of the cliff before navigating to the goal state. 

\begin{figure}[h]
    \centering
    \includegraphics[width=0.5\linewidth]{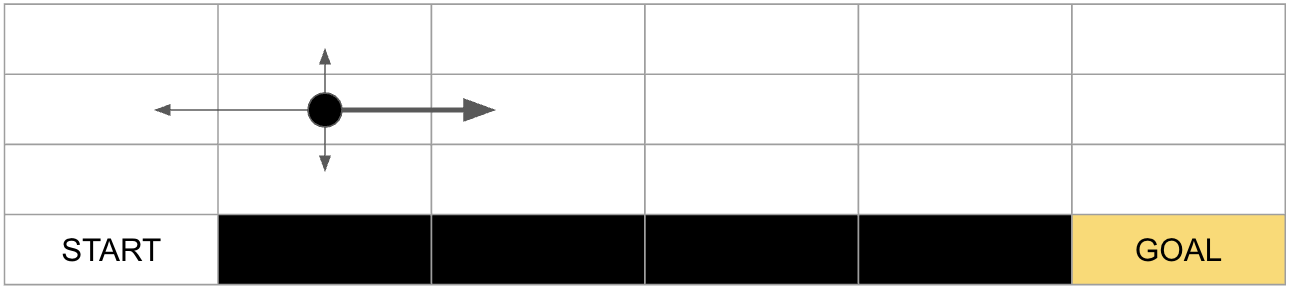}
    \caption{Visualization of the tabular Cliffwalker environment. The agent, represented by $\bullet$, is taking the \texttt{right} action. The agent is most likely to move right, but there is a probability $c$ of the agent slipping. If the agent slips to moves to an adjacent cell with equal probability.}
    \label{fig:cliff_graphic}
\end{figure}

\subsection{SimpleDirection}
Recall that the equations for the SimpleDirection environment are given by
\begin{equation*}
    \mc{T}^c(s, a) = s + a + c,\quad R^c(s, a, s') = s'\cdot c.
\end{equation*}
We pick this environment because the transition is linear in each variable and rewards are linear in $c$. The spaces are
\begin{equation*}
    \mc{S} = \R^2,\quad \mc{A} = [-1, 1]^2,\quad\mc{C} = [-1, 1]^2,
\end{equation*}
and the initial state is picked uniformly at random $s_0 \sim \text{Uniform}\left([-1, 1]^2\right)$. This gives the resulting gradients
\begin{equation*}
    \frac{\partial \mc{T}^c}{\partial c} = I, \quad\frac{\partial R^c}{\partial c} = s', \quad\frac{\partial R^c}{\partial s'} = c . 
\end{equation*}

\subsection{PendulumGoal}
The transition function is the same as in the pendulum environment from \cite{towers2024gymnasium} and are determined by the solution to the ODE
\begin{equation*}
    \ddot{\theta} = \frac{3g}{2l} \sin\theta + \frac{3}{ml^2} u
\end{equation*}
over a time interval $dt = 0.02$. We modify the reward so that the desired position of the pendulum is not vertical. In particular, we specify $\tau$ to be the desired torque applied at the goal state. We choose to specify a goal torque instead of a goal state since the action space is bounded and an arbitrary goal state may not be stable for any policy under different contexts (e.g., when $\max |u| < \frac{mgl}{2}\sin(\theta_{\text{goal}})$). Solving for the goal state at the desired torque $\tau$, we get
\begin{equation*}
    \theta_{\text{goal}} = \arcsin\left(-\frac{2 \tau}{mgl}\right) . 
\end{equation*}
The reward function is then
\begin{equation*}
    R^c = \pi^2  \sin\left(\frac{\theta_{\text{goal}} - \theta}{2}\right)^ 2 + 0.1\, \dot\theta^2 + 0.001\, u^2,
\end{equation*}
where we have also used $\sin$ instead of the absolute value function for the state error due to periodicity in the problem. 

\subsection{CartGoal}
The transition function is the same as in the cartpole environment from \cite{towers2024gymnasium} and are determined by the solution to the system of ODEs
\begin{equation*}
    \ddot \theta =
\frac{
    g \sin\theta + \cos\theta \left(\frac{-f - m_p l \dot \theta^2 \sin\theta}{m_p + m_c}  \right)
}{
    l \left( \frac{4}{3} - \frac{m_p \cos^2\theta}{m_p + m_c} \right)
}
\end{equation*}
\begin{equation*}
    \ddot x =
\frac{
    f + m_p l \left( \dot\theta^2 \sin\theta -  \ddot \theta \cos\theta \right)
}{
    m_p + m_c
}
\end{equation*}
over a time interval $dt=0.02$. The reward function is modified to include a goal state for the cart position 
\begin{equation*}
    R^c = 2 - \sqrt{1 + (x - x_{\text{goal}})^2} . 
\end{equation*}
Note that the environment uses the same termination conditions as CartPole and can terminate early if $x \not \in [-2.4, 2.4]$ or $\theta \not \in [-0.21, 0.21]$.

\subsection{ODE Environment Simulation}
The continuous control environments are specified by differential equations and reward functions. 
Derivatives of $\mc{T}^c$ and $R^c$ are computed symbolically. 
The differential equations are then converted into first-order systems and compiled into Python functions before solving numerically with Euler's method. 

\paragraph{Example with a simple equation}
Consider the transition function specified by the differential equation
\begin{equation*}
    \dot x = a x + b F
\end{equation*}
and reward function
\begin{equation*}
    R^{(a, b, c)}=-(c-x(t+\Delta t))^2,
\end{equation*}
where $a, b, c\in\R$ are context variables and $F$ is the control. 
Then the derivatives of the transition function with respect to the context variables are solutions of the following differential equations: 
\begin{equation*}
    \frac{d\dot x}{da} = a \frac{dx}{da} + x,\quad
\frac{d\dot x}{db} = a \frac{dx}{db} + F,\quad
\frac{d\dot x}{dc} = a \frac{dx}{dc}=0.
\end{equation*}
Similarly, the reward function has derivatives 
\begin{equation*}
    \frac{dR^{(a, b, c)}}{da} = 0, \quad \frac{dR^{(a, b, c)}}{db} = 0, \quad \frac{dR^{(a, b, c)}}{dc} = -2 (c-x),\quad \frac{dR^{(a, b, c)}}{dx} = 2(c-x) . 
\end{equation*}
When solving for the gradients, we use the initial conditions
\begin{equation*}
    \frac{dx}{da}(0) = 0,\quad\frac{dx}{db}(0) = 0,\quad\frac{dx}{dc}(0) = 0.
\end{equation*}
This is because we care about how the next state would change as we change the context. So, the change in the next state due to the change in the context would be zero if we did not move forward in time at all. 
We include a full derivation for the PendulumGoal system in Appendix~\ref{apndx:worked:pengoal}. 

\subsection{Goal-based Mujoco Environments}
\cite{lee2021improving} introduced goal-based variants of the HalfCheetah and Ant from Mujoco \citep{todorov2012mujoco}. 
We denote these environment by CheetahVelocity, AntDirection, and AntGoal. They each follow the same dynamics equations specified by \cite{todorov2012mujoco}. However, their rewards are modified to introduce context dependent tasks. 

The original CheetahVelocity implementation modifies the forward reward to a penalty on the velocity $-|v - v_\text{goal}|$. We use a smoothed version of this function, 
\begin{equation*}
    R^c_{\text{Velocity}} = 1-\sqrt{1+(v-v_\text{goal})^2} . 
\end{equation*}

We follow the original reward function for the AntDir environment. The velocity component of the Ant reward function is modified to 
\begin{equation*}
    R^c_{\text{Direction}} = c\cdot v . 
\end{equation*}

The original AntGoal implementation modifies the forward reward to a goal based penalty $-|(x, y) - (x_\text{goal}, y_\text{goal})|$. We use a smoothed version of this function, 
\begin{equation*}
    R^c_{\text{Goal}} = 1 - \sqrt{1 + \left((x, y) - (x_\text{goal}, y_\text{goal})\right)^2} . 
\end{equation*}

\section{Worked Examples}
In this section, we provide fully worked out calculations for some of the environments in the paper with the aim of providing more clarity.

\subsection{SimpleDirection expected optimal returns}
\label{apndx:simpledir:optimal}
The $k$th state is given by 
\begin{equation*}
    s_k = s_0 + \sum_{i=0}^{k-1} a_i + kc . 
\end{equation*}
Using this, we get the rewards are
\begin{equation*}
    r_k = c\cdot s_{k+1} = c\cdot s_0 + c\cdot \sum_{i=0}^{k} a_i + (k+1)\|c\|^2_2 . 
\end{equation*}
By symmetry on $s_0$ initial distribution, we have
\begin{equation*}
    \mathbb{E}(r_k) = c\cdot \sum_{i=0}^{k} a_i + (k+1)\|c\|^2_2.
\end{equation*}
Due to the constraints on $a_i$, the maximum is obtained when $a_i = \text{sign}(c)$ (computed element-wise). So, we get
\begin{equation*}
    \mathbb{E}(r_k) = (k+1) \left( \|c\|_1 +  \|c\|_2^2\right) . 
\end{equation*}
The expected non-discounted returns are thus
\begin{equation*}
    \sum_{k=0}^{H-1} r_k = \sum_{k=0}^{H-1}(k+1)\left(\|c\|_1 +  \|c\|_2^2\right)= \binom{H+1}{2}\left(\|c\|_1 + \|c\|_2^2\right),
\end{equation*}
where $H$ is the horizon. 

\subsection{PendulumGoal Equations}
\label{apndx:worked:pengoal}
In this section, we present the full set of equations for the PendulumGoal Environment. This includes the transition and reward gradients. Recall that the context parameters are $(g, m, l, \tau)$, where $g$ is the gravitational constant, $m$ is mass, $l$ is the pendulum length, and $\tau$ is the goal torque. The dynamics are governed by the equation 
\begin{equation*}
    \ddot \theta = \frac{3 g}{2 l} \sin(\theta) + \frac{3}{ m l^2} u,
\end{equation*}
where $\theta$ is the pendulum angle and $u$ is the control torque. The reward function is 
\begin{equation*}
    R^c = \pi^2  \sin\left(\frac{\theta_{\text{goal}} - \theta}{2}\right)^ 2 + 0.1\, \dot\theta^2 + 0.001\, u^2,
\end{equation*}
where $\theta_{\text{goal}} = \sin^{-1}\left(\frac{-2\tau}{mgl}\right)$. Taking gradients of the transition dynamics in each context parameter, we get
\begin{align*}
    \ddot \theta_g &= \frac{3}{2 l} \left(\sin(\theta) + g \cos(\theta)\theta_g\right)\\
    \ddot \theta_m &= \frac{3 g}{2 l} \cos(\theta)\theta_m - \frac{3}{ m^2 l^2} u\\
    \ddot \theta_l &= -\frac{3 g}{2 l^2} \sin(\theta) + \frac{3 g}{2 l} \cos(\theta)\theta_l - \frac{6}{ m l^3} u\\
    \ddot \theta_\tau &= \frac{3g}{2l} \cos(\theta) \theta_{\tau}.
\end{align*}
The full transition dynamics with the gradient equations are then converted into the following first order system of differential equations.
\begin{align*}
    \dot \phi &= \frac{3 g}{2 l} \sin(\theta) + \frac{3}{ m l^2} u\\
    \dot \theta &= \phi\\
    \dot \phi_g &= \frac{3}{2 l} \left(\sin(\theta) + g \cos(\theta)\theta_g\right)\\
    \dot \theta_g &= \phi_g\\
    \dot \phi_m &= \frac{3 g}{2 l} \cos(\theta)\theta_m - \frac{3}{ m^2 l^2} u\\
    \dot \theta_m &= \phi_m\\
    \dot \phi_l &= -\frac{3 g}{2 l^2} \sin(\theta) + \frac{3 g}{2 l} \cos(\theta)\theta_l - \frac{6}{ m l^3} u\\
    \dot \theta_l &= \phi_l\\
    \dot \phi_\tau &= \frac{3g}{2l} \cos(\theta) \theta_{\tau}\\
    \dot \theta_\tau &= \phi_\tau,
\end{align*}
where $\phi = \frac{d\theta}{dt}$. The reward function and its gradients are
\begin{align*}
    R^c &=\pi^2  \sin\left(\frac{\theta_{\text{goal}} - \theta}{2}\right)^ 2 + 0.1\, \phi^2 + 0.001\, u^2\\
    R^c_g &= \pi^2 \cos\left(\frac{\theta_{\text{goal}} - \theta}{2}\right)\sin\left(\frac{\theta_{\text{goal}} - \theta}{2}\right)\left(\partial_g \theta_{\text{goal}} - \theta_g\right) + 0.2\, \phi_g \phi\\
    R^c_m &= \pi^2 \cos\left(\frac{\theta_{\text{goal}} - \theta}{2}\right)\sin\left(\frac{\theta_{\text{goal}} - \theta}{2}\right)\left(\partial_m \theta_{\text{goal}} - \theta_g\right) + 0.2\, \phi_m \phi\\
    R^c_l &= \pi^2 \cos\left(\frac{\theta_{\text{goal}} - \theta}{2}\right)\sin\left(\frac{\theta_{\text{goal}} - \theta}{2}\right)\left(\partial_l \theta_{\text{goal}} - \theta_l\right) + 0.2\, \phi_l \phi\\
    R^c_\tau &= \pi^2 \cos\left(\frac{\theta_{\text{goal}} - \theta}{2}\right)\sin\left(\frac{\theta_{\text{goal}} - \theta}{2}\right)\left(\partial_\tau \theta_{\text{goal}} - \theta_l\right) + 0.2\, \phi_\tau \phi , 
\end{align*}
where
\begin{align*}
    \partial_g \theta_{\text{goal}} &= \frac{2\tau}{mg^2l \sqrt{1-\frac{4\tau^2}{m^2g^2l^2}}}\\
    \partial_m \theta_{\text{goal}} &= \frac{2\tau}{m^2gl \sqrt{1-\frac{4\tau^2}{m^2g^2l^2}}}\\
    \partial_l \theta_{\text{goal}} &= \frac{2\tau}{mgl^2 \sqrt{1-\frac{4\tau^2}{m^2g^2l^2}}}\\
    \partial_\tau \theta_{\text{goal}} &= \frac{-2}{mgl \sqrt{1-\frac{4\tau^2}{m^2g^2l^2}}} . \\
\end{align*}

\section{Broader Impacts}
This paper develops foundations for generalization in RL with sufficiently regular CMDPs. This may be applied to improve performance in out-of-distribution contexts. Improvements in this direction aid in increasing our understanding of CMDPs and can lead to the design of policies which are more robust to perturbations of the context. This work focuses on sufficiently smooth CMDPs, which appear, for instance, in physics problems. The authors do not foresee any direct negative societal impacts of this work, but do not have control over how these methods might be applied in practice.

\newpage 
\newpage 
\section*{NeurIPS Paper Checklist}

\begin{enumerate}

\item {\bf Claims}
    \item[] Question: Do the main claims made in the abstract and introduction accurately reflect the paper's contributions and scope?
    \item[] Answer: \answerYes{}
    \item[] Justification: Each aspect described in the abstract is represented by a section, theoretical, result or experiment in the article. 
    \item[] Guidelines:
    \begin{itemize}
        \item The answer NA means that the abstract and introduction do not include the claims made in the paper.
        \item The abstract and/or introduction should clearly state the claims made, including the contributions made in the paper and important assumptions and limitations. A No or NA answer to this question will not be perceived well by the reviewers. 
        \item The claims made should match theoretical and experimental results, and reflect how much the results can be expected to generalize to other settings. 
        \item It is fine to include aspirational goals as motivation as long as it is clear that these goals are not attained by the paper. 
    \end{itemize}

\item {\bf Limitations}
    \item[] Question: Does the paper discuss the limitations of the work performed by the authors?
    \item[] Answer: \answerYes{}
    \item[] Justification: The paper discusses limitations in the conclusion. 
    \item[] Guidelines:
    \begin{itemize}
        \item The answer NA means that the paper has no limitation while the answer No means that the paper has limitations, but those are not discussed in the paper. 
        \item The authors are encouraged to create a separate "Limitations" section in their paper.
        \item The paper should point out any strong assumptions and how robust the results are to violations of these assumptions (e.g., independence assumptions, noiseless settings, model well-specification, asymptotic approximations only holding locally). The authors should reflect on how these assumptions might be violated in practice and what the implications would be.
        \item The authors should reflect on the scope of the claims made, e.g., if the approach was only tested on a few datasets or with a few runs. In general, empirical results often depend on implicit assumptions, which should be articulated.
        \item The authors should reflect on the factors that influence the performance of the approach. For example, a facial recognition algorithm may perform poorly when image resolution is low or images are taken in low lighting. Or a speech-to-text system might not be used reliably to provide closed captions for online lectures because it fails to handle technical jargon.
        \item The authors should discuss the computational efficiency of the proposed algorithms and how they scale with dataset size.
        \item If applicable, the authors should discuss possible limitations of their approach to address problems of privacy and fairness.
        \item While the authors might fear that complete honesty about limitations might be used by reviewers as grounds for rejection, a worse outcome might be that reviewers discover limitations that aren't acknowledged in the paper. The authors should use their best judgment and recognize that individual actions in favor of transparency play an important role in developing norms that preserve the integrity of the community. Reviewers will be specifically instructed to not penalize honesty concerning limitations.
    \end{itemize}

\item {\bf Theory assumptions and proofs}
    \item[] Question: For each theoretical result, does the paper provide the full set of assumptions and a complete (and correct) proof?
    \item[] Answer: \answerYes{}
    \item[] Justification: Theorem statements and assumptions are included in Section~\ref{sec:theory}. Proofs of the theorems are included in Appendix~\ref{apndx:theory}. We provide a proof sketch for our main theorem, Theorem~\ref{theorem:main} in Section~\ref{sec:theory}.
    \item[] Guidelines:
    \begin{itemize}
        \item The answer NA means that the paper does not include theoretical results. 
        \item All the theorems, formulas, and proofs in the paper should be numbered and cross-referenced.
        \item All assumptions should be clearly stated or referenced in the statement of any theorems.
        \item The proofs can either appear in the main paper or the supplemental material, but if they appear in the supplemental material, the authors are encouraged to provide a short proof sketch to provide intuition. 
        \item Inversely, any informal proof provided in the core of the paper should be complemented by formal proofs provided in appendix or supplemental material.
        \item Theorems and Lemmas that the proof relies upon should be properly referenced. 
    \end{itemize}

    \item {\bf Experimental result reproducibility}
    \item[] Question: Does the paper fully disclose all the information needed to reproduce the main experimental results of the paper to the extent that it affects the main claims and/or conclusions of the paper (regardless of whether the code and data are provided or not)?
    \item[] Answer: \answerYes{}
    \item[] Justification: We provide all code to reproduce the experiments in GitHub repository linked from the title page of the article. 
    \item[] Guidelines:
    \begin{itemize}
        \item The answer NA means that the paper does not include experiments.
        \item If the paper includes experiments, a No answer to this question will not be perceived well by the reviewers: Making the paper reproducible is important, regardless of whether the code and data are provided or not.
        \item If the contribution is a dataset and/or model, the authors should describe the steps taken to make their results reproducible or verifiable. 
        \item Depending on the contribution, reproducibility can be accomplished in various ways. For example, if the contribution is a novel architecture, describing the architecture fully might suffice, or if the contribution is a specific model and empirical evaluation, it may be necessary to either make it possible for others to replicate the model with the same dataset, or provide access to the model. In general. releasing code and data is often one good way to accomplish this, but reproducibility can also be provided via detailed instructions for how to replicate the results, access to a hosted model (e.g., in the case of a large language model), releasing of a model checkpoint, or other means that are appropriate to the research performed.
        \item While NeurIPS does not require releasing code, the conference does require all submissions to provide some reasonable avenue for reproducibility, which may depend on the nature of the contribution. For example
        \begin{enumerate}
            \item If the contribution is primarily a new algorithm, the paper should make it clear how to reproduce that algorithm.
            \item If the contribution is primarily a new model architecture, the paper should describe the architecture clearly and fully.
            \item If the contribution is a new model (e.g., a large language model), then there should either be a way to access this model for reproducing the results or a way to reproduce the model (e.g., with an open-source dataset or instructions for how to construct the dataset).
            \item We recognize that reproducibility may be tricky in some cases, in which case authors are welcome to describe the particular way they provide for reproducibility. In the case of closed-source models, it may be that access to the model is limited in some way (e.g., to registered users), but it should be possible for other researchers to have some path to reproducing or verifying the results.
        \end{enumerate}
    \end{itemize}

\item {\bf Open access to data and code}
    \item[] Question: Does the paper provide open access to the data and code, with sufficient instructions to faithfully reproduce the main experimental results, as described in supplemental material?
    \item[] Answer: \answerYes{}
    \item[] Justification: We provide all code to reproduce the experiments in GitHub repository linked from the title page of the article. 
    \item[] Guidelines:
    \begin{itemize}
        \item The answer NA means that paper does not include experiments requiring code.
        \item Please see the NeurIPS code and data submission guidelines (\url{https://nips.cc/public/guides/CodeSubmissionPolicy}) for more details.
        \item While we encourage the release of code and data, we understand that this might not be possible, so “No” is an acceptable answer. Papers cannot be rejected simply for not including code, unless this is central to the contribution (e.g., for a new open-source benchmark).
        \item The instructions should contain the exact command and environment needed to run to reproduce the results. See the NeurIPS code and data submission guidelines (\url{https://nips.cc/public/guides/CodeSubmissionPolicy}) for more details.
        \item The authors should provide instructions on data access and preparation, including how to access the raw data, preprocessed data, intermediate data, and generated data, etc.
        \item The authors should provide scripts to reproduce all experimental results for the new proposed method and baselines. If only a subset of experiments are reproducible, they should state which ones are omitted from the script and why.
        \item At submission time, to preserve anonymity, the authors should release anonymized versions (if applicable).
        \item Providing as much information as possible in supplemental material (appended to the paper) is recommended, but including URLs to data and code is permitted.
    \end{itemize}

\item {\bf Experimental setting/details}
    \item[] Question: Does the paper specify all the training and test details (e.g., data splits, hyperparameters, how they were chosen, type of optimizer, etc.) necessary to understand the results?
    \item[] Answer: \answerYes{}
    \item[] Justification: We provide all code to reproduce the experiments in GitHub repository linked from the title page of the article (anonymous for double-blind review). 
    We also outline hyperparameters in Appendix~\ref{apndx:experiment_details}. 
    \item[] Guidelines:
    \begin{itemize}
        \item The answer NA means that the paper does not include experiments.
        \item The experimental setting should be presented in the core of the paper to a level of detail that is necessary to appreciate the results and make sense of them.
        \item The full details can be provided either with the code, in appendix, or as supplemental material.
    \end{itemize}

\item {\bf Experiment statistical significance}
    \item[] Question: Does the paper report error bars suitably and correctly defined or other appropriate information about the statistical significance of the experiments?
    \item[] Answer: \answerYes{}
    \item[] Justification: We report 95\% confidence intervals in all experiments involving statistical significance.
    \item[] Guidelines:
    \begin{itemize}
        \item The answer NA means that the paper does not include experiments.
        \item The authors should answer "Yes" if the results are accompanied by error bars, confidence intervals, or statistical significance tests, at least for the experiments that support the main claims of the paper.
        \item The factors of variability that the error bars are capturing should be clearly stated (for example, train/test split, initialization, random drawing of some parameter, or overall run with given experimental conditions).
        \item The method for calculating the error bars should be explained (closed form formula, call to a library function, bootstrap, etc.)
        \item The assumptions made should be given (e.g., Normally distributed errors).
        \item It should be clear whether the error bar is the standard deviation or the standard error of the mean.
        \item It is OK to report 1-sigma error bars, but one should state it. The authors should preferably report a 2-sigma error bar than state that they have a 96\% CI, if the hypothesis of Normality of errors is not verified.
        \item For asymmetric distributions, the authors should be careful not to show in tables or figures symmetric error bars that would yield results that are out of range (e.g. negative error rates).
        \item If error bars are reported in tables or plots, The authors should explain in the text how they were calculated and reference the corresponding figures or tables in the text.
    \end{itemize}

\item {\bf Experiments compute resources}
    \item[] Question: For each experiment, does the paper provide sufficient information on the computer resources (type of compute workers, memory, time of execution) needed to reproduce the experiments?
    \item[] Answer: \answerYes{}
    \item[] Justification: 
    In Appendix~\ref{apndx:experiment_details} we include include all details of the compute resources. 
    
    \item[] Guidelines:
    \begin{itemize}
        \item The answer NA means that the paper does not include experiments.
        \item The paper should indicate the type of compute workers CPU or GPU, internal cluster, or cloud provider, including relevant memory and storage.
        \item The paper should provide the amount of compute required for each of the individual experimental runs as well as estimate the total compute. 
        \item The paper should disclose whether the full research project required more compute than the experiments reported in the paper (e.g., preliminary or failed experiments that didn't make it into the paper). 
    \end{itemize}
    
\item {\bf Code of ethics}
    \item[] Question: Does the research conducted in the paper conform, in every respect, with the NeurIPS Code of Ethics \url{https://neurips.cc/public/EthicsGuidelines}?
    \item[] Answer: \answerYes{}
    \item[] Justification: 
    The research complies with the NeurIPS Code of Ethics. The research process did not cause harms nor do the outcomes of the research represent immediate risks to create harmful consequences. 
    \item[] Guidelines:
    \begin{itemize}
        \item The answer NA means that the authors have not reviewed the NeurIPS Code of Ethics.
        \item If the authors answer No, they should explain the special circumstances that require a deviation from the Code of Ethics.
        \item The authors should make sure to preserve anonymity (e.g., if there is a special consideration due to laws or regulations in their jurisdiction).
    \end{itemize}

\item {\bf Broader impacts}
    \item[] Question: Does the paper discuss both potential positive societal impacts and negative societal impacts of the work performed?
    \item[] Answer: \answerYes{}
    \item[] Justification: We provide a discussion of the broader impacts of this work in the appendix. 
    \item[] Guidelines:
    \begin{itemize}
        \item The answer NA means that there is no societal impact of the work performed.
        \item If the authors answer NA or No, they should explain why their work has no societal impact or why the paper does not address societal impact.
        \item Examples of negative societal impacts include potential malicious or unintended uses (e.g., disinformation, generating fake profiles, surveillance), fairness considerations (e.g., deployment of technologies that could make decisions that unfairly impact specific groups), privacy considerations, and security considerations.
        \item The conference expects that many papers will be foundational research and not tied to particular applications, let alone deployments. However, if there is a direct path to any negative applications, the authors should point it out. For example, it is legitimate to point out that an improvement in the quality of generative models could be used to generate deepfakes for disinformation. On the other hand, it is not needed to point out that a generic algorithm for optimizing neural networks could enable people to train models that generate Deepfakes faster.
        \item The authors should consider possible harms that could arise when the technology is being used as intended and functioning correctly, harms that could arise when the technology is being used as intended but gives incorrect results, and harms following from (intentional or unintentional) misuse of the technology.
        \item If there are negative societal impacts, the authors could also discuss possible mitigation strategies (e.g., gated release of models, providing defenses in addition to attacks, mechanisms for monitoring misuse, mechanisms to monitor how a system learns from feedback over time, improving the efficiency and accessibility of ML).
    \end{itemize}
    
\item {\bf Safeguards}
    \item[] Question: Does the paper describe safeguards that have been put in place for responsible release of data or models that have a high risk for misuse (e.g., pretrained language models, image generators, or scraped datasets)?
    \item[] Answer: \answerNA{}
    \item[] Justification: The paper does not release data or models. The code includes some new environments, but these present no risk for misuse. 
    \item[] Guidelines:
    \begin{itemize}
        \item The answer NA means that the paper poses no such risks.
        \item Released models that have a high risk for misuse or dual-use should be released with necessary safeguards to allow for controlled use of the model, for example by requiring that users adhere to usage guidelines or restrictions to access the model or implementing safety filters. 
        \item Datasets that have been scraped from the Internet could pose safety risks. The authors should describe how they avoided releasing unsafe images.
        \item We recognize that providing effective safeguards is challenging, and many papers do not require this, but we encourage authors to take this into account and make a best faith effort.
    \end{itemize}

\item {\bf Licenses for existing assets}
    \item[] Question: Are the creators or original owners of assets (e.g., code, data, models), used in the paper, properly credited and are the license and terms of use explicitly mentioned and properly respected?
    \item[] Answer: \answerYes{}
    \item[] Justification: We properly cite all environments and algorithms used in the paper. We also include citations for python packages in the appendix. 
    \item[] Guidelines:
    \begin{itemize}
        \item The answer NA means that the paper does not use existing assets.
        \item The authors should cite the original paper that produced the code package or dataset.
        \item The authors should state which version of the asset is used and, if possible, include a URL.
        \item The name of the license (e.g., CC-BY 4.0) should be included for each asset.
        \item For scraped data from a particular source (e.g., website), the copyright and terms of service of that source should be provided.
        \item If assets are released, the license, copyright information, and terms of use in the package should be provided. For popular datasets, \url{paperswithcode.com/datasets} has curated licenses for some datasets. Their licensing guide can help determine the license of a dataset.
        \item For existing datasets that are re-packaged, both the original license and the license of the derived asset (if it has changed) should be provided.
        \item If this information is not available online, the authors are encouraged to reach out to the asset's creators.
    \end{itemize}

\item {\bf New assets}
    \item[] Question: Are new assets introduced in the paper well documented and is the documentation provided alongside the assets?
    \item[] Answer: \answerYes{}
    \item[] Justification: The code base is well documented and includes clear instructions on how to run all experiments done in the paper. 
    \item[] Guidelines:
    \begin{itemize}
        \item The answer NA means that the paper does not release new assets.
        \item Researchers should communicate the details of the dataset/code/model as part of their submissions via structured templates. This includes details about training, license, limitations, etc. 
        \item The paper should discuss whether and how consent was obtained from people whose asset is used.
        \item At submission time, remember to anonymize your assets (if applicable). You can either create an anonymized URL or include an anonymized zip file.
    \end{itemize}

\item {\bf Crowdsourcing and research with human subjects}
    \item[] Question: For crowdsourcing experiments and research with human subjects, does the paper include the full text of instructions given to participants and screenshots, if applicable, as well as details about compensation (if any)? 
    \item[] Answer: \answerNA{}
    \item[] Justification: The paper does not involve human subjects.
    \item[] Guidelines:
    \begin{itemize}
        \item The answer NA means that the paper does not involve crowdsourcing nor research with human subjects.
        \item Including this information in the supplemental material is fine, but if the main contribution of the paper involves human subjects, then as much detail as possible should be included in the main paper. 
        \item According to the NeurIPS Code of Ethics, workers involved in data collection, curation, or other labor should be paid at least the minimum wage in the country of the data collector. 
    \end{itemize}

\item {\bf Institutional review board (IRB) approvals or equivalent for research with human subjects}
    \item[] Question: Does the paper describe potential risks incurred by study participants, whether such risks were disclosed to the subjects, and whether Institutional Review Board (IRB) approvals (or an equivalent approval/review based on the requirements of your country or institution) were obtained?
    \item[] Answer: \answerNA{}
    \item[] Justification: The paper does not involve human subjects or require IRB approval. 
    \item[] Guidelines:
    \begin{itemize}
        \item The answer NA means that the paper does not involve crowdsourcing nor research with human subjects.
        \item Depending on the country in which research is conducted, IRB approval (or equivalent) may be required for any human subjects research. If you obtained IRB approval, you should clearly state this in the paper. 
        \item We recognize that the procedures for this may vary significantly between institutions and locations, and we expect authors to adhere to the NeurIPS Code of Ethics and the guidelines for their institution. 
        \item For initial submissions, do not include any information that would break anonymity (if applicable), such as the institution conducting the review.
    \end{itemize}

\item {\bf Declaration of LLM usage}
    \item[] Question: Does the paper describe the usage of LLMs if it is an important, original, or non-standard component of the core methods in this research? Note that if the LLM is used only for writing, editing, or formatting purposes and does not impact the core methodology, scientific rigorousness, or originality of the research, declaration is not required.
    \item[] Answer: \answerNA{}
    \item[] Justification: The core method development does not involve LLMs. 
    \item[] Guidelines:
    \begin{itemize}
        \item The answer NA means that the core method development in this research does not involve LLMs as any important, original, or non-standard components.
        \item Please refer to our LLM policy (\url{https://neurips.cc/Conferences/2025/LLM}) for what should or should not be described.
    \end{itemize}

\end{enumerate}

\end{document}